\documentclass[a4paper]{article}
\usepackage{geometry}
\hoffset =0cm \voffset=0cm \textwidth=16cm \textheight=23cm
\oddsidemargin=0pt

\usepackage{times}
\usepackage{epsfig}
\usepackage{graphicx}
\usepackage{amsmath}
\usepackage{amssymb}
\usepackage{amsthm}
\usepackage[tight,footnotesize]{subfigure}
\usepackage{algorithm}
\usepackage{algorithmic}
\usepackage{array}
\usepackage{multirow}
\usepackage[noblocks]{authblk}
\usepackage{url}

\newcommand{\mb}[1]{\mathbf{#1}}
\newcommand{\tb}[1]{\textbf{#1}}

\DeclareMathOperator{\rank}{rank}
\DeclareMathOperator{\diag}{diag}

\newcommand{\PreserveBackslash}[1]{\let\temp=\\#1\let\\=\temp}
\newcolumntype{C}[1]{>{\PreserveBackslash\centering}p{#1}}
\newcolumntype{R}[1]{>{\PreserveBackslash\raggedleft}p{#1}}
\newcolumntype{L}[1]{>{\PreserveBackslash\raggedright}p{#1}}

\newtheorem{theorem}{Theorem}
\newtheorem{lemma}[theorem]{Lemma}
\newtheorem{proposition}[theorem]{Proposition}
\newtheorem{corollary}[theorem]{Corollary}
\newtheorem{definition}{Definition}






\begin{document}

\title{Spectral-graph Based Classifications: Linear Regression for Classification and Normalized Radial Basis Function Network}

%

\author[*]{Zhenfang~Hu}
\author[*]{Gang~Pan}
\author[*]{Zhaohui~Wu}

\affil[*]{College of Computer Science and Technology, Zhejiang
University, China, \authorcr fancij@zju.edu.cn, gpan@zju.edu.cn, wzh@zju.edu.cn}

\maketitle

\begin{abstract}
Spectral graph theory has been widely applied in unsupervised and semi-supervised learning. It is still unknown how it can be exploited in supervised learning. In this paper, we find for the first time, to our knowledge, that it also plays a concrete role in supervised classification. It turns out that two classifiers are inherently related to the theory: linear regression for classification (LRC) and normalized radial basis function network (nRBFN), corresponding to linear and nonlinear kernel respectively. The spectral graph theory provides us with a new insight into a fundamental aspect of classification: the tradeoff between fitting error and overfitting risk. With the theory, ideal working conditions for LRC and nRBFN are presented, which ensure not only zero fitting error but also low overfitting risk. For quantitative analysis, two concepts, the fitting error and the spectral risk (indicating overfitting), have been defined. Their bounds for nRBFN and LRC are derived. A special result shows that the spectral risk of nRBFN is lower bounded by the number of classes and upper bounded by the size of radial basis. When the conditions are not met exactly, the classifiers will pursue the minimum fitting error, running into the risk of overfitting. It turns out that $\ell_2$-norm regularization can be applied to control overfitting. Its effect is explored under the spectral context. It is found that the two terms in the $\ell_2$-regularized objective are one-one correspondent to the fitting error and the spectral risk, revealing a tradeoff between the two quantities. Concerning practical performance, we devise a basis selection strategy to address the main problem hindering the applications of (n)RBFN. With the strategy, nRBFN is easy to implement yet flexible. Experiments on 14 benchmark data sets show the performance of nRBFN is comparable to that of SVM, whereas the parameter tuning of nRBFN is much easier, leading to reduction of model selection time.
\end{abstract}

\tb{Keywords:}
  classification, spectral graph, radial basis function network, linear regression, regularization, overfitting.

\section{Introduction}

Spectral graph theory is a theory that centers around the
graph Laplacian matrix \cite{chung1997spectral}. On the one hand,
it can reveal underlying cluster structure of data by the eigenvectors of the Laplacian matrix, on the other
hand, the eigenvectors can serve as dimensionally reduced codes that preserve pair-wise data relation. The theory has found wide applications in unsupervised learning,
including clustering \cite{von2007tutorial} (generally named spectral clustering, including, e.g., ratio cut (Rcut) \cite{chan1994spectral} and normalized cut (Ncut) \cite{shi2000normalized,ng2002spectral}), and dimensionality reduction (e.g.,
Laplacian eigenmap (LE) \cite{belkin2003laplacian} and locality preserving projections (LPP) \cite{he2003locality}). Later, it develops as a popular paradigm in semi-supervised learning, including semi-supervised clustering
\cite{Kamvar2003Spectral,Kulis2005Semi} and semi-supervised classification
\cite{Zhu2003Semi,Zhou2004Learning,Belkin2006Manifold,Zhu2008Semi}. In semi-supervised learning, in an attempt to impose pair-wise data relation, the role of spectral graph usually appears as a ``graph-regularization'' term added to the other objectives.

Recently it has been discovered that, in the scope of unsupervised learning, spectral graph theory unifies a series of elementary methods of machine learning into a complete framework \cite{Hu2014Spectral}. The methods cover dimensionality reduction, cluster analysis, and sparse representation. They range from principal component analysis (PCA) \cite{jolliffe2002principal}, K-means \cite{macqueen1967some}, LE \cite{belkin2003laplacian}, Rcut \cite{chan1994spectral}, and a new spectral sparse representation (SSR) \cite{Hu2014Spectral}. It is revealed that these methods share inherent relations, they even become equivalent under an ideal graph condition. The framework also incorporates extended relations to conventional over-complete sparse representations \cite{Elad2010Sparse}, e.g., \cite{olshausen1996emergence}, method of optimal directions (MOD) \cite{Engan1999Method}, KSVD \cite{aharon2006img}; manifold learning, e.g., kernel PCA \cite{scholkopf1998nonlinear}, multidimensional scaling (MDS) \cite{cox2001multidimensional}, Isomap \cite{tenenbaum2000global}, locally linear embedding (LLE) \cite{roweis2000nonlinear}; and subspace clustering, e.g., sparse subspace clustering (SSC) \cite{elhamifar2013sparse}, low-rank representation (LRR) \cite{liu2013robust}.

However, as far as we know, spectral graph theory has not yet found applications in supervised classification, except in hybrid-model way where the relation is not inherent, e.g., the addition of certain classification objective with a graph-regularization term. It is interesting to know whether the theory plays a concrete role in supervised classification and which classifiers are related.

In supervised learning, linear regression and radial basis function network (RBFN) are two basic methods. Both of them are devoted to function fitting, including classification as special case. It is well-known that linear regression can be interpreted with Bayesian probability (see, e.g., \cite{bishop2006pattern}), and when used for classification (LRC), its link to linear discriminant analysis was already discovered \cite{Ye2007Least}. RBFN \cite{Powell1987Radial,Broomhead1988Multivariable,Moody1989Fast} and its normalized variant (nRBFN) \cite{Moody1989Fast,Specht1991A} are classical neural networks well-known for their simple structures and universal function approximation capacities \cite{Park1991Universal,Xu1994On,Benaim1994On}. Among broad connections to many theories \cite{Blanzieri2003Theoretical}, RBFN can be interpreted with Tikhonov regularization theory \cite{Poggio1990Networks}, while nRBFN can be interpreted with kernel regression \cite{Specht1991A,Xu1994On}. However, the above methods are hardly related to spectral graph theory before.

In classification application, it has been repeatedly demonstrated that \cite{Rifkin2004In,Oyang2005Data,Fern2014Do,Que2016Back} the performance of RBFN can be comparable to support vector machine (SVM) \cite{Cortes1995Support}. Especially, in a comprehensive evaluation involving 179 classifiers and 121 data sets by \cite{Fern2014Do}, RBFN ranks third, immediately following SVM.\footnote{In that evaluation, the classifier ranking third is a kernel version of extreme learning machine (ELM) \cite{Huang2006Extreme}, named ``elm\_kernel\_m'' in the evaluation, but we carefully checked the codes of ``elm\_kernel\_m'' and found that it is actually a classical RBFN that uses the whole training set as the basis. The only difference is that the target indicators are converted to be of values 1 (true class) and -1 (other classes). This RBFN is a special case of ELM.} However, despite of bearing additional advantages, (n)RBFN did not receive wide applications as SVM. This situation may be attributed to: (1) the less sound theoretical background compared with SVM, (2) the basis selection problem involved in (n)RBFN design \cite{Chen1991Orthogonal,Orr1995Regularization,Scholkopf1997Comparing,Oyang2005Data,Que2016Back}, and (3) the difficulty of parameter tuning. In this paper, in addition to exploring stronger theoretical background for (n)RBFN, to make (n)RBFN become practical tool, we propose a solution scheme for the basis selection and parameter setting problems.

\subsection{Our Work}
In this paper, we uncover the concrete role of spectral graph theory in supervised classification, and find that LRC and nRBFN are inherently related to the theory. The tradeoff between fitting error and overfitting risk is a fundamental problem of classification. The theory provides us with a new insight into this problem under the context of LRC and nRBFN. With the theory, we establish the ideal working conditions for the two classifiers, which ensure not only zero fitting error but also low overfitting risk. When the conditions are not met exactly, the $\ell_2$-norm regularization can be applied to control overfitting, its effect is revealed under the spectral context. As a benefit, the regularization weight can be set in a principled and easy way. The followings are more detailed introduction.

In spectral clustering, we directly extract the cluster information from the eigenvectors of Laplacian matrix, i.e., recover the indicator vectors from the eigenvectors. However, in classification, the partition of data is assigned, and the indicator vectors are given, it seems not straightforward to see how spectral graph theory will work in this case. It turns out that we are to find the closest components in the eigenspace of Laplacian matrix to approximate the given indicator vectors. If the data is well-behaved, i.e., the given classes match the underlying clusters, then unsupervised clustering and supervised classification become consistent, and they unify under the spectral graph framework.

In this paper, we find that LRC and nRBFN are inherently related to spectral graph theory. Firstly, LRC will lay down some theoretical foundation, then nRBFN is derived via applying kernel trick on LRC. Broadly speaking, the data/feature matrix used by LRC/nRBFN shares the same eigenvectors with the Laplacian matrix, and we are to find the closest components in the eigenvectors to approximate the given indicator vectors. When an ideal graph condition is satisfied, which requires the classes being totally separated, the indicator vectors appear in the leading eigenspace of largest eigenvalues, and consequently zero fitting error is achieved. That is the inherent relation of LRC and nRBFN to spectral graph theory.

Although, zero fitting error is desirable, classification is more concerned with generalization performance. Striking a balance between low fitting error and low overfitting risk is a critical problem. Under the ideal graph condition, things are perfect, the fitting error is zero and the overfitting risk is low. From qualitative point of view, this is because the indicator vectors are found in the principal subspace. The principal subspace corresponds to stable features, as contrary to the minor subspace that corresponds to noisy features, especially when sampling is insufficient. For quantitative analysis, we define two concepts: the fitting error and the spectral risk. The spectral risk measures the deviation of the found components to the principal subspace, therefore it signals a warning of overfitting. The bounds of the two quantities for nRBFN and LRC are derived. A special result shows that the spectral risk of nRBFN is lower bounded by the number of classes--a quantity representing ``problem complexity'', and upper bounded by the size of basis--a quantity representing ``model complexity''. The upper bound indicates a tradeoff between fitting error and overfitting risk: larger basis implies lower error but higher risk.

In practice, the ideal condition cannot be met exactly, the leading eigenspace will deviate from the target indicator vectors. The found closest components may lie in minor subspace of small singular values. It is easily prone to noise, giving rise to increment of overfitting risk. It will be shown that the $\ell_2$-norm regularization can alleviate this problem. Its effect is explored under the spectral context. First, qualitatively, it drives the classifier to find the closest components in the principal subspace and discourages the opposite direction. Second, quantitatively, the two terms in the $\ell_2$-regularized objective are in one-one correspondence to the fitting error and the spectral risk, showing a tradeoff between the two quantities.

nRBFN is more powerful than LRC for its nonlinear kernel and significant risk bounds, we thus focus on nRBFN. To make nRBFN work in practice, we devise a basis selection strategy to address the main problem that hinders the wide applications of (n)RBFN. The strategy is based on soft K-nearest neighbors. It is easy to implement and the result is deterministic, in contrast to traditional K-means based strategy that depends on random initialization. Traditionally, setting the basis size is a troublesome problem. In our scheme, it is implicitly determined via a user-friendly threshold within range $(0,1]$. With this threshold, the basis size can be automatically determined according to the complexity of data distribution. We can also flexibly control the tradeoff between accuracy and computational cost via this threshold. Besides, when using $\ell_2$-norm regularization, the regularization weight can be set in a more principled way. In all, the parameter tuning is easy, leading to significant reduction of model selection time.

The contributions of the paper are as follows:
\begin{enumerate}
  \item We have extended the spectral graph framework to the supervised classification domain. We found for the first time, to our knowledge, that spectral graph theory plays a concrete role in supervised classification. Two classifiers, LRC and nRBFN, corresponding to linear and nonlinear kernel respectively, turn out to be inherently related to the theory. With the theory, ideal working conditions for LRC and nRBFN are presented, which ensure not only zero fitting error but also low overfitting risk.

  \item With the spectral graph theory, new insights into the overfitting problem as well as the effect of $\ell_2$-norm regularization are obtained. For quantitative analysis, two concepts, the fitting error and the spectral risk have been defined. The bounds of them for nRBFN and LRC have been derived. One result states that the spectral risk of nRBFN is lower bounded by the number of classes and upper bounded by the size of radial basis. In addition, it turns out that the two terms in the $\ell_2$-regularized objective are one-one correspondent to the fitting error and the spectral risk, revealing a tradeoff between the two quantities.

  \item We have devised a basis selection strategy for (n)RBFN, so that nRBFN becomes easy to implement yet flexible. The performance of nRBFN is comparable to that of SVM, whereas the parameters of nRBFN are much easier to set, leading to significant reduction of model selection time.
\end{enumerate}

The rest of the paper is organized as follows. Section~\ref{sec:clustering} briefly introduces spectral graph theory and reviews its rationale for clustering. Section~\ref{sec:classification} presents spectral-graph based classifications, including linear version LRC and kernel version nRBFN. Meanwhile, the ideal working conditions are introduced. In the cases of the conditions are not met exactly, Section~\ref{sec:regularization} introduces the $\ell_2$-norm regularization for LRC and nRBFN. Section~\ref{sec:risk} defines the fitting error and spectral risk, derives their bounds for nRBFN and LRC, and reveals the effect of $\ell_2$-norm regularization. Section~\ref{sec:basis selection} proposes the basis selection strategy. Section~\ref{sec:experiment} demonstrates the performance of nRBFN and empirically evaluates the fitting error and spectral risk. Section~\ref{sec:related work} introduces some related work. The paper is ended with further work in Section~\ref{sec:future work}.

\tb{Notations}. $A=[A_1,\dots,A_n]\in \mathbb{R}^{p\times n}$: data matrix with $n$
samples of dimension $p$. $F=[F_1,\dots,F_n]\in \mathbb{R}^{K\times n}$: indicator matrix
for $n$ samples of $K$ classes. If the $i$th sample belongs to class
$k$, then the $k$th entry of $F_i$ is one and the others are zero.
$G=[G_1,\dots,G_r]\in \mathbb{R}^{p\times r}$: basis vectors of
RBFN and nRBFN. $\mb{1}$: a vector of uniform value 1. $\diag(v)$: a diagonal matrix formed by vector
$v$.

\section{Spectral-graph Based Clustering: a
Review}\label{sec:clustering}
Given an undirected graph of $n$
vertices (data points), with the adjacency matrix defined to be a
similarity matrix $W\in \mathbb{R}^{n\times n}$, measuring the pairwise similarities between data points, $W_{ij}=W_{ji}\geq 0$, the Laplacian matrix is defined as $L\doteq S-W$, where $S$
is a diagonal degree matrix with the diagonal being the sum of
weights of each vertex, i.e., $S=\diag(\mb{1}^TW)$. The Laplacian
matrix has the following properties \cite{von2007tutorial}.\footnote{These properties are
not shared by the similarity matrix.}
\begin{enumerate}
  \item It is positive semi-definite.
  \item Vector $\mb{1}$ is always an eigenvector with eigenvalue zero.
  \item Assume there are $K$ connected components in the graph, then the indicator vectors of these components (row
vectors of $F$) span the eigenspace of
eigenvalue zero.
\end{enumerate}

These properties are exploited for clustering purpose \cite{von2007tutorial}. Assume we are to find
$K$ clusters, if the ideal graph condition for clustering (Definition~\ref{def:ideal clustering}) \cite{Hu2014Spectral} holds (the condition implies the between-cluster weights are all zero: $W_{ij}=0$, if the $i$th and $j$th points are of different clusters), then we can compute the
$K$ eigenvectors of $L$ with the smallest eigenvalues (zero), and
then postprocess these eigenvectors to finish
clustering.
In practice,
the $K$ components of the graph may not be completely disconnected. In this noisy case,
the same procedure can still be applied, since the $K$
eigenvectors with the smallest eigenvalues become rotated noisy
indicators, which may not differ much to their ideal ones. This is the working rationale of spectral clustering.

\begin{definition}\label{def:ideal clustering}
\tb{(ideal graph condition for clustering)} Targeting for $K$
clusters, if there are exactly $K$ connected components in the
graph, then the graph (or similarity matrix) is called ideal (with
respect to $K$ clusters).
\end{definition}

Finally, the eigenvectors of $L$ with eigenvalue
zero (smallest) are the eigenvectors of $S^{-1}W$, called normalized
Laplacian matrix, with eigenvalue one (largest) \cite{von2007tutorial}.

\section{Spectral-graph Based Classifications}\label{sec:classification}
In spectral clustering, we directly extract the cluster information from the eigenvectors of Laplacian matrix, i.e., recover the indicator vectors from the eigenvectors. However, in classification, the indicator vectors are given. It will be shown that we are to find the closest components in the eigenspace to approximate the indicator vectors. When an ideal condition is satisfied, the indicator vectors appear in the leading eigenspace, achieving zero fitting error and low overfitting risk.
\subsection{Linear Version:
Linear Regression for Classification (LRC)}\label{sec:lrc}
We will show that the singular vectors of the data matrix are the eigenvectors of a Laplacian matrix. Thus the link to spectral graph theory is established. The Laplacian matrix is built by the inner product between data, i.e., linear kernel. In the following, we first introduce the basic formulation of LRC, then analyze it from the row-space view, this leads to the relation to spectral graph theory. Based on the theory, an ideal working condition for LRC is presented. Finally, we analyze LRC from the column-space view, which paves the way to nRBFN.
\subsubsection{Basic Formulation}
Given data matrix $A$ (assume mean-removed, $A\mb{1}=0$) and the
corresponding class labels, we convert the labels to an
indicator matrix $F$, and define an augmented data matrix
$\tilde{A}=\begin{bmatrix}\sqrt{\beta}\mb{1}^T\\A\end{bmatrix}$ \cite{Hu2014Spectral},
where $\beta$ is a constant scalar that will be introduced later.
The objective of LRC is to find a weight matrix $D$ so that the linear
combinations of the columns of $D$ and the samples approximate the
indicator vectors:\footnote{It is equivalent to the classical LRC
where $\beta=1$ \cite{bishop2006pattern}, since $\sqrt{\beta}$ can be absorbed into the first
column of $D$. In implementation, we indeed use $\beta=1$.}
\begin{equation}\label{equ:lrc}
\min_{D}\, \sum_{i=1}^n
\|F_i-D\tilde{A}_i\|_2^2=\|F-D\tilde{A}\|_F^2.
\end{equation}
Provided $\rank(\tilde{A})=p+1$, there is a unique closed-form
solution: $D^*=F\tilde{A}^T(\tilde{A}\tilde{A}^T)^{-1}$ \cite{bishop2006pattern}. After $D^*$
is obtained, given a test sample $b$ (with mean removed as $A$), its label is determined by the
maximum entry of $D^*\tilde{b}$:
\begin{equation}\label{equ:lrc b}
\mathop{\arg\max}_k\,
(D^*\tilde{b})_k.
\end{equation}
It can be shown that the sum of $D^*\tilde{b}$ is always one.\footnote{Referring to the proof of Lemma~\ref{lem:1X} in Section~\ref{sec:row-col srbf}, it can be shown that $\mb{1}^TD^*=[1,0,\dots,0]$.}
Besides, $D^*\tilde{b}$ is an approximation to the indicator vector.
These endow $D^*\tilde{b}$ with a quasi-probability interpretation
(may include negative values).

\subsubsection{Row-space View}
It will be shown that LRC are to find the closest components in the data row-space to approximate the indicator vectors, and this relates to the spectral graph.

Substituting $D^*$ into (\ref{equ:lrc}), we obtain
$\|F-F\tilde{A}^T(\tilde{A}\tilde{A}^T)^{-1}\tilde{A}\|_F^2$. Note
that $\tilde{A}^T(\tilde{A}\tilde{A}^T)^{-1}\tilde{A}$ is a
projection matrix, and the projection subspace is spanned
by the rows of $\tilde{A}$. In this view, to approximate $F$, LRC
projects $F$ onto the row-space of $\tilde{A}$ and reconstructs. These facts are well-known. However, a
natural question arises: does the row-space contain ``ingredients''
close to $F$, so that the reconstruction error is small? With spectral graph theory, we present an ideal condition under which the row-space of $\tilde{A}$ contains $F$. Define a similarity matrix $W\doteq \tilde{A}^T\tilde{A}$, and
\begin{equation}\label{equ:beta}
\beta\doteq -\min_{ij}\,(A^TA)_{ij}.
\end{equation}
Since the data is mean-removed, we have $\min_{ij}\,(A^TA)_{ij}<0$,\footnote{Otherwise $A^TA\mb{1}\neq \mb{0}$ violating $A\mb{1}=\mb{0}$.} $\beta$ thus defined makes $W$ become nonnegative \cite{Hu2014Spectral}. The condition and theorem are as follows:\footnote{Note that the theory requires the data to be mean-removed, which may be ignored by traditional LRC.}

\begin{definition}\label{def:ideal classification}
\tb{(ideal graph condition for classification)} If the weights
between vertices of different classes are all zero, i.e., $\forall
i,j$, $W_{ij}=0$ if $F_i\neq F_j$, then the graph (or similarity
matrix) is called ideal (with respect to the class labels).
\end{definition}

\begin{theorem}\label{theo:ideal lrc}
Given indicator matrix $F$, if $W$ satisfies the ideal graph condition for classification,
then the row vectors of $F$ lie in the row-space of $\tilde{A}$ corresponding to
the largest singular value ($\sqrt{\beta n}$), and therefore zero fitting error is achieved: $F=D^*\tilde{A}$.
\end{theorem}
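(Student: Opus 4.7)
The plan is to transfer the statement into the spectral picture of $W=\tilde A^T\tilde A$: since the right singular vectors of $\tilde A$ are precisely the eigenvectors of $W$, it is enough to show that every row of $F$ is an eigenvector of $W$ with eigenvalue $\beta n$, and that $\beta n$ is the largest eigenvalue of $W$.

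First I would expand $W$ using the block form of $\tilde A$ to get $W=\beta\mathbf{1}\mathbf{1}^T+A^TA$. The mean-removed assumption $A\mathbf{1}=\mathbf{0}$ then gives $A^TA\mathbf{1}=\mathbf{0}$, so every row sum of $W$ equals $\beta n$, i.e.\ the degree matrix is $S=\beta n\, I$. This is the key algebraic identity, and it depends crucially on the mean-removal and on the choice of $\beta$ via the augmentation of $\tilde A$; I expect this to be where a reader would most benefit from careful bookkeeping.

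Next I would invoke the reviewed spectral-graph fact (property 3 in Section~\ref{sec:clustering}): under the ideal graph condition for classification, the block structure induced by the classes makes each row $F_k$ of $F$ lie in the null space of $L=S-W$. Writing this as $W F_k^T=S F_k^T$ and using $S=\beta n\, I$ immediately yields $W F_k^T=\beta n\, F_k^T$, so each $F_k$ is an eigenvector of $W$ with eigenvalue $\beta n$. To see that $\beta n$ is the \textbf{largest} eigenvalue, I would appeal to the standard bound for a nonnegative matrix: the spectral radius of $W\ge 0$ is at most $\max_i\sum_j W_{ij}=\beta n$, and equality is realized by $F_k$. Consequently, in the SVD of $\tilde A$ the rows of $F$ sit in the right-singular subspace associated with the top singular value $\sqrt{\beta n}$.

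Finally I would close the loop with the LRC formula. Substituting $D^*=F\tilde A^T(\tilde A\tilde A^T)^{-1}$ gives $D^*\tilde A=F\cdot P$, where $P=\tilde A^T(\tilde A\tilde A^T)^{-1}\tilde A$ is the orthogonal projector onto the row-space of $\tilde A$. Because each $F_k$ already belongs to that row-space (as just shown), $F_k P=F_k$, hence $D^*\tilde A=F$ and the fitting error vanishes. The only subtlety is the implicit invertibility assumption $\rank(\tilde A)=p+1$; if one wishes to drop it, the same argument goes through with the Moore--Penrose pseudoinverse, since $F$ lying in the row-space is all that is really used. The main obstacle, as noted, is establishing $S=\beta n I$; once that is in hand, everything else is a short spectral-graph deduction.
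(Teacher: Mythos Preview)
Your proof is correct and follows essentially the same spectral-graph route as the paper: both establish $S=\beta n\,I$ from mean-removal, invoke property~3 of the Laplacian to place the rows of $F$ in the null space of $L$, and then identify this with the top right-singular subspace of $\tilde A$. The only cosmetic difference is that the paper writes out the explicit SVD of $\tilde A$ and the spectral decomposition of $L$ (so that $L\succeq 0$ forces $\sigma_i^2\le \beta n$), whereas you bypass that by applying the row-sum bound for nonnegative matrices directly to $W$; your explicit projection argument for $D^*\tilde A=F$ is likewise just a spelled-out version of what the paper leaves implicit.
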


\begin{proof}
$W$ such defined is a nonnegative symmetric matrix,
hence it is a qualified similarity matrix. Because $A$ is mean-free,
the degree matrix is $S=\diag(\mb{1}^TW)=n\beta I$, and the
Laplacian matrix $L\doteq S-W$ becomes $n\beta I-\tilde{A}^T\tilde{A}$. Assume
the thin SVD \cite{golub1996matrix} of $A$ to be $U\Sigma V$, where the singular values are
arranged in descending order, then \cite{Hu2014Spectral}
\begin{equation}\label{equ:svd tildeA}
\tilde{A}=\tilde{U}\tilde{\Sigma}\tilde{V}=\begin{bmatrix}1&\\&U\end{bmatrix}\begin{bmatrix}\sqrt{\beta
n}&\\
&\Sigma\end{bmatrix}\begin{bmatrix}\frac{1}{\sqrt{n}}\mb{1}^T\\V\end{bmatrix},
\end{equation}
Further, by $L=n\beta I-\tilde{A}^T\tilde{A}$, we obtain the spectral decomposition \cite{golub1996matrix} of $L$:
\begin{equation}\label{equ:sd L}
\begin{split}
L=
\begin{bmatrix}\frac{1}{\sqrt{n}}\mb{1}&V^T&\hat{V}^T\end{bmatrix}\begin{bmatrix}0&&\\
&\beta nI-\Sigma^2&\\&&\beta
nI\end{bmatrix}\begin{bmatrix}\frac{1}{\sqrt{n}}\mb{1}^T\\V\\\hat{V}\end{bmatrix},
\end{split}
\end{equation}
where $\hat{V}$ is the complement of $\tilde{V}$.

If the condition holds, by the third
property of Laplacian matrix in Section~\ref{sec:clustering}, the
row vectors of $F$ lie in the eigenspace of $L$ with eigenvalue
zero. In view of (\ref{equ:sd L}) and (\ref{equ:svd tildeA}), the
eigenvectors of $L$ with the smallest eigenvalues are the right
singular vectors of $\tilde{A}$ with the largest singular values. Thus, the row vectors of $F$ lie in the
row-space of $\tilde{A}$ with the largest singular values (all equal $\sqrt{\beta n}$).
\end{proof}

Note that the condition ensures not only perfect reconstruction but also that the target lies in the principal row-subspace of data, or principal components (PCs) in the language of PCA \cite{jolliffe2002principal}. This is important, because the PCs correspond to stable features, whereas the minor components usually correspond to noise, especially when sampling is insufficient. In some cases, zero training error can be achieved, however, the target may be found in the minor subspace, then generalization error can be large. That is the overfitting problem. We will return to this issue in later sections.

\subsubsection{Column-space View}
Finally, we take a closer look at the vector $D^*\tilde{b}$,
and understand the voting mechanism of LRC. The facts are routine. With (\ref{equ:svd tildeA}), we have
$D^*\tilde{b}=F\tilde{V}^T
\tilde{V}_b=\sum_{i=1}^n F_i(\tilde{V}_i^T\tilde{V}_b)$, where $\tilde{V}_b=\tilde{\Sigma}^{-1}\tilde{U}^T\tilde{b}$ are the normalized full PCs of $\tilde{b}$ and
$\tilde{V}_i$ are those of
$\tilde{A}_i$. The mechanism of class prediction becomes clear: the
class of a sample is determined by the votes of the training set,
where the indicator vectors $F_i$'s play the role of the votes, and
the similarities between the training set and the sample serve as
the weights assigned to the votes. Here, the similarity is
measured by the inner product of PCs. For general data, this is not a good choice, and that is one of the limitations of
LRC. We now introduce the more powerful kernel version, which
measures the similarity based on Euclidean distance.

\subsection{Kernel Version: Normalized RBF Network (nRBFN)}\label{sec:srbf}

We will apply the kernel trick to LRC in two ways: a
traditional way leads to RBFN,\footnote{There is another standard way based on the reproducing kernel Hilbert space and representor theorem that leads to kernel classifier, see e.g., \cite{Rifkin2004In}, but it cannot lead to nRBFN.} the other way leads to nRBFN. The function matrix used by RBFN is the similarity matrix, while that used by nRBFN is the normalized Laplacian matrix. Since the similarity matrix does not share the properties of Laplacian matrix, we cannot directly analyze RBFN by spectral graph theory, whereas the link of nRBFN to the theory is straightforward. In a following subsection, we introduce the routine basis reduction to reduce the size of the networks. After that, we interpret nRBFN from the row-space and column-space views, and analyze it with spectral graph theory. An ideal working condition in the context of basis reduction is introduced, and some properties of nRBFN are shown.

\subsubsection{RBFN}\label{sec:rbf}
We derive RBFN by applying kernel trick to LRC. The solution of (\ref{equ:lrc}) can be rewritten as
$D^*=(F\tilde{A}^T(\tilde{A}\tilde{A}^T)^{-2}\tilde{A})\tilde{A}^T$,
which is a linear combination of the training data. If we assume $D=X\tilde{A}^T$, then (\ref{equ:lrc}) turns into
another objective
\begin{equation}\label{equ:F-XAA}
\min_{X}\, \|F-X\tilde{A}^T\tilde{A}\|_F^2.
\end{equation}
Applying kernel trick on $\tilde{A}^T\tilde{A}$, we get
\begin{equation}\label{equ:rbf}
\min_{X}\, \|F-XW\|_F^2.
\end{equation}
$W$ is a kernel matrix defined by some kernel function
$W_{ij}=\phi(A_i,A_j)$. In LRC case, it is a linear function
$\phi(A_i,A_j)=A_i^TA_j+\beta$. Among nonlinear ones, Gaussian kernel is most frequently used
$\phi(A_i,A_j)=\exp\{-\|A_i-A_j\|_2^2/(2\sigma^2)\}$. (\ref{equ:rbf}) is an RBFN \cite{Powell1987Radial,Broomhead1988Multivariable,Moody1989Fast},
where the kernel function is viewed as radial basis function
$\varphi(\|G_i-x\|)$:\footnote{Compared with some traditional
RBFNs, the RBFN here, derived through kernel trick, does not
append a constant vector $\mb{1}^T$ to $W$ and a bias vector to
$X$.} each column of $W$ corresponds to a sample $x$, while each row
a basis vector $G_i$. Here, the basis $G$
consists of the whole training set, $G_i=A_i$. Assume $W$ has full
rank, the solution of (\ref{equ:rbf}) is $X^*=FW^{-1}$. Given a
sample $b$, since $D^*\tilde{b}=X^*\tilde{A}^T\tilde{b}$, applying
the same trick, the class of $b$ is decided by
\begin{equation}\label{equ:rbf b}
\mathop{\arg\max}_k\,(X^*W_b)_k,
\end{equation}
where $(W_b)_i=\phi(A_i,b)$.

For RBFN, the ideal graph condition does not work, since the kernel matrix when served
as similarity matrix does not possess the same properties as its
Laplacian matrix. We are not sure whether $F$ lies in the principal subspace or minor subspace.

\subsubsection{nRBFN}
We restart the derivation with kernel trick from another way, which will lead to nRBFN.
By absorbing $\beta n$ into $X$, (\ref{equ:F-XAA}) is equivalent to
\begin{equation}\label{equ:F-XAAbeta}
\begin{aligned}
&\min_{X}\,\|F-X\tilde{A}^T\tilde{A}(\beta n)^{-1}\|_F^2
=\min_{X}\,\|F-X\tilde{A}^T\tilde{A}\,\diag(\mb{1}^T\tilde{A}^T\tilde{A})^{-1}\|_F^2.
\end{aligned}
\end{equation}
Note that
$\tilde{A}^T\tilde{A}\,\diag(\mb{1}^T\tilde{A}^T\tilde{A})^{-1}=WS^{-1}$, which is the transpose of the normalized Laplacian matrix. Applying kernel trick and
using Gaussian kernel, we obtain nRBFN:
\begin{equation}
\min_{X}\,\|F-XWS^{-1}\|_F^2.
\end{equation}
Assume $W$ has full rank, the solution is $X^*=FSW^{-1}$. Given a
sample $b$, since $X^*\tilde{A}^T\tilde{b}(\beta
n)^{-1}=X^*\tilde{A}^T\tilde{b}(\mb{1}^T\tilde{A}^T\tilde{b})^{-1}$,
applying the same kernel trick, the class of $b$ is decided by
\begin{equation}
\begin{aligned}
&\mathop{\arg\max}_k\,(X^*W_b(\mb{1}^TW_b)^{-1})_k=\mathop{\arg\max}_k\,(X^*W_b/s_b)_k,
\end{aligned}
\end{equation}
where $s_b$ is the sum of $W_b$.

Note that $WS^{-1}$ is a normalized similarity matrix. Each column
of it sums to one, so is $W_b/s_b$. nRBFN was initially mentioned by \cite{Moody1989Fast} and later derived from probability density estimation and kernel regression by \cite{Specht1990Probabilistic,Specht1991A,Xu1994On}. It is also closely related to Gaussian mixture model \cite{Tresp1993Network}. However, the underlying spectral graph background seems not yet be discovered. Before the exploration, we deal with the basis reduction problem.

\subsubsection{Basis Reduction}
In above, the bases of RBFN and nRBFN consist of the whole training set, which will lead to expensive computation. Traditionally, basis reduction is applied \cite{Que2016Back}. A smaller basis is chosen by some strategy (discussed in Section~\ref{sec:basis selection}). For the moment, we assume the basis $G=[G_1,\dots,G_r]$, $r<n$, is given. Now, $W$ is of size $r\times n$, and $W_{ij}=\phi(G_i,A_j)$. Denoting
\[
\tilde{W}\doteq WS^{-1}\in \mathbb{R}^{r\times n},
\]
the formulation of basis-reduced nRBFN becomes
\begin{equation}\label{equ:srbf}
\min_{X}\,\|F-X\tilde{W}\|_F^2.
\end{equation}

nRBFN and RBFN are special cases of linear regression, with sample vectors
replaced by similarity vectors. Assume $W$ is of full rank, the solution of nRBFN is
$X^*=F\tilde{W}^T(\tilde{W}\tilde{W}^T)^{-1}$ (that of RBFN is
$X^*=FW^T(WW^T)^{-1}$). Given a
sample $b$, in nRBFN its class is decided by
\begin{equation}\label{equ:srbf b}
\mathop{\arg\max}_k\,(X^*\tilde{W}_b)_k,
\end{equation}
where $\tilde{W}_b\doteq W_b/s_b$, $W_b$ are the similarities between $b$ and the basis, and $s_b$ is the sum of $W_b$.

Hereafter, we focus on basis-reduced nRBFN.

\subsubsection{Row-space View and Column-space View to nRBFN}\label{sec:row-col srbf}
We will show the spectral graph theory underlying nRBFN, and introduce some basic properties as well as interpretations concerning nRBFN.

1. From column-space view, we will show that, as LRC, the class prediction of nRBFN is also via voting mechanism. First, besides $\tilde{W}$ has a probability
interpretation, the weight matrix $X^*$ also has a quasi-probability
interpretation. We have
\begin{lemma}\label{lem:1X}
Each column of the weight matrix $X^*$ sums to
one: $\mb{1}^TX^*=\mb{1}^TF\tilde{W}^T(\tilde{W}\tilde{W}^T)^{-1}=\mb{1}^T$.
\end{lemma}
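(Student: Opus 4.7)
The plan is to reduce the claim to two elementary column-sum identities, one for $F$ and one for $\tilde W$, and then manipulate the closed-form expression for $X^*$ so that the factor $(\tilde W\tilde W^T)^{-1}$ cancels.

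First, I would record the two key facts. The indicator structure of $F$ immediately gives $\mathbf{1}_K^T F=\mathbf{1}_n^T$, because each column of $F$ contains exactly one entry equal to $1$ and the rest zero. Next, by the very construction $\tilde W=WS^{-1}$ with $S=\diag(\mathbf{1}^T W)$, each column of $\tilde W$ sums to $1$, so
\begin{equation*}
\mathbf{1}_r^T\tilde W=\mathbf{1}_r^T W S^{-1}=(\mathbf{1}_r^T W)\diag(\mathbf{1}_r^T W)^{-1}=\mathbf{1}_n^T.
\end{equation*}
Transposing this relation gives the dual identity $\tilde W^T\mathbf{1}_r=\mathbf{1}_n$, which is the form I will actually need.

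With these two facts in hand, the calculation is almost forced. Starting from $\mathbf{1}^T X^*=\mathbf{1}_K^T F\tilde W^T(\tilde W\tilde W^T)^{-1}$, the first identity replaces $\mathbf{1}_K^T F$ by $\mathbf{1}_n^T$, and then the transposed second identity lets me rewrite $\mathbf{1}_n^T$ as $\mathbf{1}_r^T\tilde W$. This produces the factor $\tilde W\tilde W^T$ just to the right, which is annihilated by its inverse:
\begin{equation*}
\mathbf{1}_K^T F\tilde W^T(\tilde W\tilde W^T)^{-1}
=\mathbf{1}_n^T\tilde W^T(\tilde W\tilde W^T)^{-1}
=\mathbf{1}_r^T\tilde W\tilde W^T(\tilde W\tilde W^T)^{-1}
=\mathbf{1}_r^T.
\end{equation*}

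There is no real obstacle here; the only point to be careful about is bookkeeping of the three different dimensions ($K$ classes, $n$ samples, $r$ basis vectors), since the symbol $\mathbf{1}$ denotes all-ones vectors of different lengths in different places. The assumption $\tilde W\tilde W^T$ invertible is inherited from the standing full-rank hypothesis on $W$ used to write down $X^*$, so no extra condition is needed.
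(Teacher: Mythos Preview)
Your proof is correct. The two key identities you isolate, $\mathbf{1}_K^T F=\mathbf{1}_n^T$ and $\mathbf{1}_r^T\tilde W=\mathbf{1}_n^T$, are exactly the ingredients the paper uses, but your route is more direct. The paper argues indirectly: it observes that $\mathbf{1}_n^T$ lies in the row space of $\tilde W$, so the projection $\mathbf{1}_n^T\tilde W^T(\tilde W\tilde W^T)^{-1}\tilde W$ returns $\mathbf{1}_n^T$; combining this with $\mathbf{1}_K^T F=\mathbf{1}_n^T$ gives $\mathbf{1}^T X^*\tilde W=\mathbf{1}_n^T$, and then a uniqueness argument (full rank of $\tilde W$ implies the equation $x\tilde W=\mathbf{1}_n^T$ has a unique solution, and both $\mathbf{1}_r^T$ and $\mathbf{1}^T X^*$ solve it) finishes. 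Your substitution $\mathbf{1}_n^T=\mathbf{1}_r^T\tilde W$ collapses the projection-plus-uniqueness detour into a single line of algebra, which is cleaner. One small remark: the ``transposed'' form $\tilde W^T\mathbf{1}_r=\mathbf{1}_n$ you announce is not actually what you use in the displayed chain; you use the original form $\mathbf{1}_n^T=\mathbf{1}_r^T\tilde W$ to replace $\mathbf{1}_n^T$ on the left of $\tilde W^T$. This is harmless (the two are equivalent), but you may want to drop the aside about transposition.
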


\begin{proof}
The row-space of $\tilde{W}$
contains $\mb{1}^T\in \mathbb{R}^{1\times n}$, because
$\mb{1}^T\tilde{W}=\mb{1}^T$. Thus, from the projection point of
view,
$\mb{1}^T\tilde{W}^T(\tilde{W}\tilde{W}^T)^{-1}\tilde{W}=\mb{1}^T$.
By this, $\mb{1}^TX^*\tilde{W}=\mb{1}^T$. On the other hand, since
$\tilde{W}$ has full rank, the solution of $x\tilde{W}=\mb{1}^T$ is
unique. However, both $\mb{1}^T\in \mathbb{R}^{1\times r}$ and
$\mb{1}^TX^*$ are the solutions, so we conclude
$\mb{1}^TX^*=\mb{1}^T$.
\end{proof}
Now considering (\ref{equ:srbf b}), the
class of sample $b$ is decided by the voting of the basis. Each
basis vector keeps a vote $X^*_i$, and the weight assigned to the
vote is the normalized similarity of $b$ to that basis vector $G_i$.
In contrast to LRC, the votes are not indicator vectors, and the
weights are not computed by inner product. Rather, the votes of the
basis are gathered from another voting of the training data,
$X^*_i=F(\tilde{W}^T(\tilde{W}\tilde{W}^T)^{-1})_i$. Note that
$\mb{1}^T\tilde{W}^T(\tilde{W}\tilde{W}^T)^{-1}=\mb{1}^T$, since
$\mb{1}^TX^*=\mb{1}^T$. We can expect that when a basis vector is
more ``reliable'', e.g., lying in the center of a class, the vote it
keeps would concentrate in its class, whereas when lying in
the overlapping region, the vote would be distributed more evenly. Generally, except under the ideal condition below, $X^*$ will include negative value, which represents objection.

2. From row-space view, similar to LRC, nRBFN finds the closest subspace in the row-space of $\tilde{W}$ to approximate $F$. Again, an ideal condition ensuring perfect reconstruction exists. However, in the context of basis reduction, the graph should be generalized to a bipartite graph: one side of the vertices consists of the basis, the other side consists of the training set. We should assume the basis is a subset of the training set, and each class is represented by at least one basis vector. The original case where the basis consists of the whole training set is a special case of bipartite graph. Denoting $F_{G_i}$ to be the indicator vector of $G_i$, the condition is as follows:
\begin{definition}\label{def:ideal classification2}
\tb{(ideal bipartite-graph condition for classification)} If the weights
between basis vertices and data vertices of different classes are all zero, i.e., $\forall
i,j$, $W_{ij}=0$ if $F_{G_i}\neq F_j$, then the bipartite-graph (or similarity
matrix) is called ideal (with respect to the class labels).
\end{definition}
With these prerequisites, we have the following theorem indicating zero fitting error:
\begin{theorem}\label{theo:ideal srbf}
Given indicator matrix $F$, if $W$ satisfies the ideal bipartite-graph condition for classification, then the row vectors of $F$ lie in the row-space
of $\tilde{W}$, and zero fitting error is achieved: $F=X^*\tilde{W}$.
\end{theorem}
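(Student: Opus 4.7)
The plan is to exhibit an explicit matrix $X$ that achieves zero fitting error, i.e., $X\tilde{W}=F$, and then invoke the fact that $X^*$ is the least-squares minimizer of $\|F-X\tilde{W}\|_F^2$: if the minimum is zero and is attained by some $X$, then $X^*\tilde{W}$ must also equal $F$. Exhibiting such an $X$ simultaneously proves that the rows of $F$ lie in the row-space of $\tilde{W}$. So the whole statement reduces to the constructive problem: find $X$ with $X\tilde{W}=F$.

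The key structural fact is that, under the ideal bipartite-graph condition, after permuting the basis and the samples by class the matrix $W\in\mathbb{R}^{r\times n}$ becomes block-diagonal: $W_{ij}\neq 0$ forces $F_{G_i}=F_j$. Since each column of $\tilde{W}=WS^{-1}$ is a positive rescaling of the corresponding column of $W$ to sum to one (well-defined because each class contains at least one basis vector and the Gaussian kernel is strictly positive, so every column sum of $W$ is positive), the block-diagonal support pattern is preserved and, in addition, the entries of $\tilde{W}$ in each column supported on a single class sum to one. This suggests the natural candidate $X\doteq F_G\in\mathbb{R}^{K\times r}$, the indicator matrix of the basis, defined by $(F_G)_{k,i}=1$ iff $G_i$ belongs to class $k$.

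To verify $F_G\tilde{W}=F$, I would compute entry $(k,j)$: $(F_G\tilde{W})_{k,j}=\sum_{i:\,F_{G_i}=e_k}\tilde{W}_{ij}$. If sample $A_j$ belongs to class $k$, then by the ideal condition every nonzero entry in column $j$ of $\tilde{W}$ is indexed by a basis vector of class $k$, and these entries sum to one, giving $F_{k,j}=1$. If $A_j$ belongs to a different class $k'\neq k$, then the nonzero entries of column $j$ are concentrated on basis vectors of class $k'$, disjoint from the index set summed over, yielding $0=F_{k,j}$. Thus $F_G\tilde{W}=F$, each row of $F$ is a linear combination of rows of $\tilde{W}$, and by uniqueness of the least-squares optimum $X^*\tilde{W}=F$ follows (and moreover $X^*=F_G$ because $\tilde{W}$ has full row rank, so the solution of $X\tilde{W}=F$ is unique).

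I do not expect a genuinely hard step; the only subtlety is making sure the normalization $S^{-1}$ is justified, which follows from the assumption that each class is represented in the basis together with positivity of the kernel, and keeping track of the bipartite indexing (basis rows versus data columns) so that the block-diagonal argument is applied to the correct side. These are bookkeeping rather than conceptual obstacles.
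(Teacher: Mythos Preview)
Your proposal is correct and takes essentially the same approach as the paper: the paper's one-line proof simply observes that the rows of $\tilde{W}$ corresponding to the same class sum to an indicator vector of that class, which is exactly what your entrywise computation $(F_G\tilde{W})_{k,j}=\sum_{i:\,F_{G_i}=e_k}\tilde{W}_{ij}$ verifies. Your explicit candidate $X=F_G$ and the conclusion $X^*=F_G$ under full row rank also anticipate the paper's subsequent Proposition~\ref{pro:X indicator}.
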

The proof of the theorem is manifest: the rows of $\tilde{W}$ corresponding to the same class sum to an indicator vector of that class.

When the condition holds, it can be proved that the votes $X^*$ become an indicator matrix. In this case the votes are ideal and very confident, since they concentrate in one class.
\begin{proposition}\label{pro:X indicator}
If the ideal bipartite-graph condition for classification holds, $X^*$ becomes an indicator matrix. For basis vector $G_i$, $X^*_i=F_{G_i}$.
\end{proposition}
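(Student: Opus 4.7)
Let $F_G\doteq[F_{G_1},\dots,F_{G_r}]\in\mathbb{R}^{K\times r}$ denote the indicator matrix of the basis. The plan is to exhibit $F_G$ as a solution of the normal equation $X\tilde{W}\tilde{W}^T=F\tilde{W}^T$ and then invoke uniqueness to identify it with $X^*$. Since $W$ is assumed to have full rank and $S$ is invertible, $\tilde{W}=WS^{-1}$ has full row rank, so $\tilde{W}\tilde{W}^T$ is invertible and the least-squares solution $X^*=F\tilde{W}^T(\tilde{W}\tilde{W}^T)^{-1}$ is unique.

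The first step is to verify $F_G\tilde{W}=F$ directly. The $(k,j)$-entry reads
\[
(F_G\tilde{W})_{kj}=\sum_{i=1}^{r}(F_{G_i})_k\,\tilde{W}_{ij}=\sum_{i:\,G_i\text{ in class }k}\tilde{W}_{ij}.
\]
I would then split into two cases using the ideal bipartite-graph condition, which states $\tilde{W}_{ij}=0$ whenever $F_{G_i}\neq F_j$. If $A_j$ is in class $k$, every nonzero entry in column $j$ of $\tilde{W}$ must come from a basis vector of class $k$, so the restricted sum equals the full column sum $\sum_i\tilde{W}_{ij}$, which is one because each column of $\tilde{W}=WS^{-1}$ sums to one by construction (and the requirement that each class be represented in the basis ensures no column is identically zero). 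If $A_j$ is not in class $k$, every term in the restricted sum vanishes by the ideal condition. Hence $(F_G\tilde{W})_{kj}=F_{kj}$, so $F_G\tilde{W}=F$.

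Right-multiplying by $\tilde{W}^T$ gives $F_G\tilde{W}\tilde{W}^T=F\tilde{W}^T$, so $F_G$ satisfies the normal equation. By uniqueness, $X^*=F_G$, and in particular $X^*_i=F_{G_i}$ for every $i$; since each $F_{G_i}$ has a single unit entry and zeros elsewhere, $X^*$ is an indicator matrix.

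I do not anticipate a substantial obstacle: the argument is a short algebraic unpacking of the ideal condition combined with the normalization $\mb{1}^T\tilde{W}=\mb{1}^T$ used in Lemma~\ref{lem:1X}. The items to keep track of are that the full-rank assumption on $W$ transfers to $\tilde{W}$ and that each class is represented in the basis, both of which were flagged in the setup preceding Definition~\ref{def:ideal classification2}.
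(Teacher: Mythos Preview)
Your argument is correct. Verifying $F_G\tilde{W}=F$ entry by entry under the ideal condition and the column normalization, and then invoking uniqueness of the least-squares solution (full row rank of $\tilde{W}$), cleanly yields $X^*=F_G$.

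The route, however, differs from the paper's. The paper argues via the block structure: after arranging samples so that $\tilde{W}$ is block-diagonal, it observes that $\tilde{W}^T(\tilde{W}\tilde{W}^T)^{-1}$ inherits the same block pattern, hence each column $X^*_i=F(\tilde{W}^T(\tilde{W}\tilde{W}^T)^{-1})_i$ is a combination of indicator vectors from a single class and thus has a single nonzero entry; it then invokes Lemma~\ref{lem:1X} ($\mb{1}^TX^*=\mb{1}^T$) to pin that entry to one. Your approach instead exhibits the solution explicitly---in effect reusing the observation behind Theorem~\ref{theo:ideal srbf} that the class-wise row sums of $\tilde{W}$ reproduce $F$---and lets uniqueness do the rest. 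Your version is shorter and does not need Lemma~\ref{lem:1X}; the paper's version makes the block-diagonal structure of the solution more visible, which it later exploits elsewhere.
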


\begin{proof}
Without loss of generality, assume the training samples of the same class are arranged consecutively, then $\tilde{W}$ is block-diagonal, so is $(\tilde{W}\tilde{W}^T)^{-1}$. Further, $\tilde{W}^T(\tilde{W}\tilde{W}^T)^{-1}$ has the same nonzero blocks as $\tilde{W}^T$. Since $X^*_i=F(\tilde{W}^T(\tilde{W}\tilde{W}^T)^{-1})_i$, we see that $X^*_i$ is a linear combination of the indicator vectors of the same class as $G_i$, which means there is only one nonzero in $X^*_i$. By Lemma~\ref{lem:1X}, we conclude that the nonzero value is one. Therefore, $X^*_i$ is an indicator vector of $G_i$.
\end{proof}
When the condition does not hold exactly, that is when the classes have some overlapping but not heavy, negative values may present in $X^*$, but their magnitude should be small, because $X^*$ is a continuous function of $W$.

Unlike LRC case, whether $F$ lies in the leading row-subspace is less obvious. When the basis consists of the whole training set and the ideal condition is satisfied, by property of the normalized Laplacian matrix (Section~\ref{sec:clustering}),
rows of $F$ are the left eigenvectors with the largest
\emph{eigenvalues}. We expect they are close to the leading row-subspace that corresponds to the largest \emph{singular values}. For the general bipartite-graph case, please refer to Appendix~\ref{sec:gap} for detailed investigation. We present the main result below.
\begin{theorem}\label{theo:F leading}
Under the ideal bipartite-graph condition for classification, the row vectors of $F$ become the right singular vectors of $\tilde{W}$ corresponding to the largest singular values, if and only if the row sums of $\tilde{W}$ are even within each class: $\sum_k \tilde{W}_{ik}=\sum_k \tilde{W}_{jk}$, $\forall i,j, F_{G_i}=F_{G_j}$.
\end{theorem}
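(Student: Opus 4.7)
The plan is to leverage the block-diagonal structure that the ideal bipartite-graph condition imposes on $\tilde{W}$ and to reduce the theorem to a single-block question: when is the all-ones vector the top right singular vector of one block?

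First, I would reorder basis and data so that class-$k$ items are contiguous; by Definition~\ref{def:ideal classification2} this turns $\tilde{W}$ into a block-diagonal matrix whose diagonal blocks are $M_k\in\mathbb{R}^{r_k\times n_k}$, and makes each row $F_k$ of $F$ supported on the class-$k$ columns where it equals $\mb{1}_{n_k}^T$. Because the SVD of a block-diagonal matrix is the union of the SVDs of its blocks, $F_k/\sqrt{n_k}$ being a right singular vector of $\tilde{W}$ is equivalent to $\mb{1}_{n_k}/\sqrt{n_k}$ being a right singular vector of $M_k$, with the same singular value. Two structural facts will drive the argument: (i) each $M_k$ is entrywise nonnegative with $\mb{1}_{r_k}^T M_k=\mb{1}_{n_k}^T$ (inherited from the column-normalization $\tilde{W}=WS^{-1}$), so the row-sum vector $c:=M_k\mb{1}_{n_k}$ satisfies $\mb{1}_{r_k}^T c=n_k$; and (ii) under a mild non-degeneracy of the kernel on distinct basis points, $M_k$ has full row rank.

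For the \emph{if} direction, even row sums give $c=(n_k/r_k)\mb{1}_{r_k}$, so $(\mb{1}_{n_k}/\sqrt{n_k},\,\mb{1}_{r_k}/\sqrt{r_k},\,\sqrt{n_k/r_k})$ is a singular triple of $M_k$. To confirm it is the top one, I would note that $M_k^TM_k$ is nonnegative with constant column sum $n_k/r_k$ (since $\sum_j (M_k^TM_k)_{jl}=\sum_i (M_k)_{il}c_i=n_k/r_k$) and invoke the Perron--Frobenius bound that the spectral radius of a nonnegative matrix equals its common column sum, giving $\sigma_{\max}(M_k)^2=n_k/r_k$. For the \emph{only if} direction, assume $\mb{1}_{n_k}/\sqrt{n_k}$ is a right singular vector of $M_k$ with singular value $\sigma$. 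Then $M_k^TM_k\mb{1}_{n_k}=\sigma^2\mb{1}_{n_k}$; expanding the left side as $M_k^T c$ and using $M_k^T\mb{1}_{r_k}=\mb{1}_{n_k}$ yields $M_k^T(c-\sigma^2\mb{1}_{r_k})=\mb{0}$. Full row rank of $M_k$ then forces $c=\sigma^2\mb{1}_{r_k}$, so the row sums are equal (and in fact equal to $n_k/r_k$).

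The main obstacle will be the \emph{only if} direction: it rests on the rank hypothesis for $M_k$, which should be stated as an implicit assumption (it is essentially what makes the nRBFN solution unique in the first place). A secondary subtlety is that the phrase ``the largest singular values'' must be parsed carefully across blocks, because each block's top singular value is $\sqrt{n_k/r_k}$; these $K$ values are simultaneously the $K$ globally largest singular values of $\tilde{W}$ only when the ratios $n_k/r_k$ are uniform across classes (as in the original full-basis case, where every block-top equals $1$). I would flag this as a caveat: more generally the statement should be read per block, i.e., $F_k/\sqrt{n_k}$ is the top right singular vector of its own block.
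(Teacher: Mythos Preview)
Your argument is correct and takes a genuinely different route from the paper's proof in Appendix~\ref{sec:gap}. Both proofs begin by block-diagonalizing $\tilde{W}$ and reducing to a single class block, but the mechanisms then diverge. For the \emph{if} direction, the paper introduces an auxiliary matrix $\hat{W}=Z^{-1/2}\tilde{W}$ (with $Z$ the diagonal of row sums), relates it to a reduced Laplacian to show its top singular value is $1$, and from this bounds $\sigma_1^2\leq z_{\max}$; when row sums are even, $z_{\max}=n_k/r_k$ and the bound is tight. You instead observe directly that $M_k^TM_k$ is nonnegative with constant column sum $n_k/r_k$ and invoke the Perron--Frobenius bound, which is shorter and avoids the auxiliary construction. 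For the \emph{only if} direction, the paper first argues that $\sigma_1^2=n_k/r_k$ forces $x^*$ and $f$ to be the top singular pair, then evaluates $\sigma_1^2=\|\tilde{w}f^T\|_2^2/\|f\|_2^2=\sum_i z_{ii}^2/n_k$ and applies Cauchy--Schwarz to force the $z_{ii}$ to be equal. Your kernel argument $M_k^T(c-\sigma^2\mb{1}_{r_k})=\mb{0}$ combined with full row rank is more direct and actually proves a slightly stronger statement: $\mb{1}_{n_k}$ being \emph{any} right singular vector already forces equal row sums, not merely being the top one. The full-rank hypothesis you flag is indeed the paper's standing assumption on $W$, so it is legitimate to use it.

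Your closing caveat about the global ordering of singular values across blocks is well taken and is a point the paper does not address: the appendix establishes only the per-block conclusion and then declares Theorem~\ref{theo:F leading} proved. Strictly speaking, the $K$ block-top singular values $\sqrt{n_k/r_k}$ are the $K$ globally largest singular values of $\tilde{W}$ only under an additional uniformity condition (e.g., equal $n_k/r_k$ across classes, as in the full-basis case). Reading the theorem per block, as you suggest, is the precise statement.
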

The theorem suggests that when designing the model, a balanced system, which means the row sums of $\tilde{W}$ are as even as possible within each class, is preferred.

Hereafter, without confusion, we will simply refer the above three conditions as \emph{ideal graph condition}.

\section{Regularization for LRC and nRBFN}\label{sec:regularization}
In practice, the ideal graph conditions are not easy to meet exactly. In this case, zero fitting error may not be achieved, and part of the found closest components may lie in the minor subspace. This will lead to the increment of overfitting risk. In this section, we introduce the traditional $\ell_2$-norm regularization, and qualitatively show its effect on controlling the overfitting risk from the spectral view. The regularized versions of LRC and nRBFN are what we will really apply in real world.
\subsection{Regularized LRC}\label{sec:r lrc}
For LRC, the ideal graph condition is hard to meet, except perhaps for high-dimensional data: by the construction of
$W$, the condition essentially requires that, after translating along
a new dimension, different classes become orthogonal. Thus in general case, the leading
row-subspace of $\tilde{A}$ may deviate much from $F$. LRC then searches the entire
row-space to find a closest subspace to approximate $F$. The found
subspace may correspond to small singular values, which may
represent discriminative features or, more frequently, noise (e.g., due to
insufficient sampling). In other words, LRC may deem the noisy
components of data as the discriminative features for
classification. Poor generalization ability can be expected.

In this regard, we would like to encourage LRC to search within the
principal row-subspace. This can be achieved via $\ell_2$-norm regularization:
\begin{equation}\label{equ:lrc lambda}
\min_{D}\, \|F-D\tilde{A}\|_F^2+\lambda '\|D\|_F^2,
\end{equation}
where $\lambda '>0$ is a scalar weight. The unique closed-from solution is
$D^*=F\tilde{A}^T(\tilde{A}\tilde{A}^T+\lambda 'I)^{-1}$, regardless
of the rank of $A$. The label of a new sample is decided as
(\ref{equ:lrc b}).

In this case, the reconstruction of the training set is
$D^*\tilde{A}=F\tilde{A}^T(\tilde{A}\tilde{A}^T+\lambda
'I)^{-1}\tilde{A}$. Assume the SVD of $\tilde{A}$ to be
$\tilde{U}\tilde{\Sigma}\tilde{V}$, then $\tilde{A}^T(\tilde{A}\tilde{A}^T+\lambda
'I)^{-1}\tilde{A}=\tilde{V}^T\Lambda \tilde{V}$, where $\Lambda$ is
a diagonal matrix with
$\Lambda_{ii}=\tilde{\sigma}_i^2/(\tilde{\sigma}_i^2+\lambda ')<1$.
For those large singular values $\tilde{\sigma}_i\gg \lambda '$,
$\Lambda_{ii}\approx 1$ so the principal subspace is preserved,
while for those small ones $\tilde{\sigma}_i\ll \lambda '$,
$\Lambda_{ii}\approx 0$ so the minor components are
suppressed. In implementation, for the ease of setting $\lambda '$, we rewrite
it to be $\lambda '=\lambda\|\tilde{A}\|_F^2$. Note that
$\|\tilde{A}\|_F^2=\sum_{i} \tilde{\sigma}_i^2$, so the contrast between $\tilde{\sigma}_i^2$ and $\lambda'$ is easier to control. Back to the
objective, $\|F-D^*\tilde{A}\|_F^2=\|F-F\tilde{V}^T\Lambda
\tilde{V}\|_F^2$. Now LRC projects $F$ onto the principal
row-subspace and reconstructs. The reconstruction error may increase
slightly, but overfitting is alleviated. A quantitative analysis will be conducted in Section~\ref{sec:risk}.

\subsection{Regularized nRBFN}
The ideal graph condition for nRBFN does not require orthogonality between the classes. Nevertheless, when the condition is not exactly met, to prevent nRBFN seeking the closest components in the
minor subspace, we introduce regularization (the analysis follows LRC, and we omit):
\begin{equation}\label{equ:srbf lambda}
\min_{X}\, \|F-X\tilde{W}\|_F^2+\lambda ' \|X\|_F^2,
\end{equation}
where $\lambda'=\lambda\|\tilde{W}\|_F^2$, and $\lambda>0$ is a scalar
weight. The solution becomes
$X^*=F\tilde{W}^T(\tilde{W}\tilde{W}^T+\lambda 'I)^{-1}$, regardless
of the rank of $\tilde{W}$. The label of a sample $b$ is decided as (\ref{equ:srbf b}).

\section{Error and Risk Analysis}\label{sec:risk}
The tradeoff between fitting error and overfitting risk is an important problem of classification. In this section, from the spectral point of view, we quantitatively analyze this problem for LRC and nRBFN, and reveal the effect of $\ell_2$-regularization further. First, we define a quantitative criterion, the spectral risk, for the measurement of overfitting risk. The fitting error is also formally defined. Next, we analyze the un-regularized nRBFN and LRC in mainly ideal cases. The bounds of the two quantities will be derived. Finally, we investigate the $\ell_2$-regularization (independent of the ideal graph condition). We will show that the two terms in the $\ell_2$-regularized objective are one-one correspondent to the fitting error and spectral risk, and study the effect of $\ell_2$-regularization on trading off the error and risk.

\subsection{Definitions of Spectral Risk and Fitting Error}\label{sec:definition error and risk}
The definitions apply to linear regression, including LRC and nRBFN as special cases. We will define an absolute measure and a relative measure for both spectral risk and fitting error, the reasons will be clear later. The relative measures will be used as default definitions.

Let the linear regression problem be formulated as
\begin{equation}\label{equ:lr}
\min_{D}\, \|F-DA\|_F^2,
\end{equation}
where $F\in \mathbb{R}^{K\times n}$ is any target matrix not limited to indicators, $A\in \mathbb{R}^{r\times n}$ ($r\leq n$) is any data matrix of full rank. The solution is $D^*=FA^T(AA^T)^{-1}$. To exclude meaningless case, we assume $D^*\neq \mb{0}$, which means $FA^T\neq \mb{0}$, i.e., the data is not orthogonal to the target.

\subsubsection{Spectral Risk}
The spectral risk measures the deviation of the found components to the principal subspace. First, we define an absolute measure.
\begin{definition}\label{def:absrisk}
\tb{(absolute spectral risk)} The absolute spectral risk is defined as
\begin{equation}
\alpha \doteq \|D^*\|_F^2.
\end{equation}
\end{definition}
The justification can be understood by the following spectral expression. Assume the SVD of A to be $A=U\Sigma V$, then for problem (\ref{equ:lr}), $D^*=FV^T\Sigma^{-1}U^T$, and we have
\begin{proposition}
For linear regression problem (\ref{equ:lr}),
\begin{equation}\label{equ:absrisk}
\alpha=\sum_{i=1}^r\frac{a_i^2}{\sigma_i^2},
\end{equation}
where $a_i^2$ is the projection of the target onto $V_i$: $a_i^2\doteq \sum_{k=1}^K(F_kV_i^T)^2$, and $F_k$ is the $k$th row of $F$.
\end{proposition}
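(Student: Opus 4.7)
The plan is to carry out a direct spectral expansion of $\|D^*\|_F^2$ using the SVD expression for $D^*$ already supplied just before the proposition. The computation is essentially bookkeeping; the only care needed is to track which factor is orthogonal from which side, because in the thin SVD $A=U\Sigma V$ one has $U\in\mathbb{R}^{r\times r}$ orthogonal and $V\in\mathbb{R}^{r\times n}$ with orthonormal rows ($VV^T=I_r$), but $V^TV\neq I_n$ in general.

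First I would substitute $D^*=FV^T\Sigma^{-1}U^T$ into $\alpha=\|D^*\|_F^2=\tr(D^*(D^*)^T)$. Since $U$ is orthogonal, the factor $U^TU$ that appears collapses to $I_r$, leaving
\[
\alpha=\tr\!\bigl(FV^T\Sigma^{-2}VF^T\bigr).
\]
Next, I would set $B\doteq FV^T\in\mathbb{R}^{K\times r}$, so that the right-hand side becomes $\tr(B\Sigma^{-2}B^T)=\sum_{i=1}^r\sigma_i^{-2}\sum_{k=1}^K B_{ki}^{2}$. Since $B_{ki}=F_kV_i^T$ with $V_i$ the $i$th row of $V$, the inner sum is exactly $a_i^2$ as defined in the statement, which yields the claimed identity $\alpha=\sum_{i=1}^r a_i^2/\sigma_i^2$.

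The assumption $r\leq n$ together with $A$ having full rank $r$ is what guarantees $\Sigma$ is invertible and the thin SVD has the stated shape, so no degeneracy arises. I do not anticipate a real obstacle: the main thing to get right is the direction of orthonormality for $V$ (rows, not columns), so I would be careful not to write $V^TV=I$, which would be false. Once the trace is expanded diagonally in the $\sigma_i^{-2}$, identifying the inner sum with $a_i^2$ is immediate from the definition given in the proposition.
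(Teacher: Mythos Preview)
Your argument is correct and is exactly the direct SVD computation the paper has in mind: the paper states $D^*=FV^T\Sigma^{-1}U^T$ immediately before the proposition and then asserts the result without further proof, so your trace expansion via $U^TU=I_r$ and the identification $B_{ki}=F_kV_i^T$ simply fills in the omitted line. There is nothing to add.
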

It implies that if the projections concentrate in the leading singular vectors, that is the closest components lie in the principal subspace, $\alpha$ will be small. Conversely, if they concentrate in the rear singular vectors, $\alpha$ will be large. Thus, as an absolute measure, $\|D^*\|_F^2$ is reasonable.

However, a meaningful range of the absolute measure cannot be determined. In order to cancel out the volume of data so that the measures between different data of the same model can be compared, we now define a relative measure by normalizing the projections and singular values.
\begin{definition}\label{def:risk}
\tb{(spectral risk)} The relative spectral risk, simply called spectral risk, is defined as
\begin{equation}
\gamma \doteq \frac{\|D^*\|_F^2\|A\|_F^2}{\|D^*A\|_F^2}.
\end{equation}
\end{definition}
For problem (\ref{equ:lr}), by noting $\|A\|_F^2=\sum_{i=1}^r \sigma_i^2$ and $\|D^*A\|_F^2=\|FV^TV\|_F^2=\sum_{i=1}^r a_i^2$, we have
\begin{proposition}
For linear regression problem (\ref{equ:lr}),
\begin{equation}\label{equ:risk}
\gamma=\sum_{i=1}^r\frac{\tilde{a}_i^2}{\tilde{\sigma}_i^2},
\end{equation}
where $\tilde{a}_i^2\doteq a_i^2/\sum_{i=1}^r a_i^2$ and $\tilde{\sigma}_i^2\doteq \sigma_i^2/\sum_{i=1}^r \sigma_i^2$.
\end{proposition}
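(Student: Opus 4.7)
The plan is to reduce the identity to a direct calculation via the thin SVD $A=U\Sigma V$ and then reuse the preceding absolute-risk formula. The three Frobenius norms in the definition
$\gamma = \|D^*\|_F^2\,\|A\|_F^2 / \|D^*A\|_F^2$
can each be expressed in terms of the $\sigma_i^2$ and $a_i^2$, and the claimed expression then falls out by normalization.

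First, I would quote the absolute-risk identity just proved, namely $\|D^*\|_F^2 = \sum_{i=1}^r a_i^2/\sigma_i^2$, and observe that $\|A\|_F^2 = \sum_{i=1}^r \sigma_i^2$ is immediate from the unitary invariance of the Frobenius norm applied to $A=U\Sigma V$. These take care of the numerator once $\|D^*A\|_F^2$ is computed.

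Next I would compute $\|D^*A\|_F^2$. Substituting $D^* = FV^T\Sigma^{-1}U^T$ gives $D^*A = FV^TV$. Because $V\in\mathbb{R}^{r\times n}$ has orthonormal rows in the thin SVD ($VV^T = I_r$), the cyclic property of the trace yields
$\|FV^TV\|_F^2 = \mathrm{tr}(V^TVF^TFV^TV) = \mathrm{tr}(VF^TFV^T) = \|FV^T\|_F^2,$
and expanding this row by row in $F$ gives $\sum_{i=1}^r \sum_{k=1}^K (F_kV_i^T)^2 = \sum_{i=1}^r a_i^2$ by the definition of $a_i^2$.

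Finally, I would assemble the three pieces: dividing numerator and denominator by $\sum_{i=1}^r a_i^2$ converts each $a_i^2$ into $\tilde{a}_i^2$, and pulling $\sum_{i=1}^r \sigma_i^2$ into each summand converts each $\sigma_i^2$ into $\tilde{\sigma}_i^2$, leaving $\gamma = \sum_{i=1}^r \tilde{a}_i^2/\tilde{\sigma}_i^2$. There is no substantive obstacle here; the only point requiring minor care is to use the thin rather than the full SVD so that $VV^T=I_r$, while remembering that $V^TV$ is merely the rank-$r$ orthogonal projector onto the row-space of $A$—which is exactly what makes the $\|D^*A\|_F^2$ computation collapse so cleanly.
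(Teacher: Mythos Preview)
Your proposal is correct and follows essentially the same approach as the paper: both quote the previously established $\|D^*\|_F^2=\sum_i a_i^2/\sigma_i^2$, note $\|A\|_F^2=\sum_i\sigma_i^2$, and compute $\|D^*A\|_F^2=\|FV^TV\|_F^2=\sum_i a_i^2$ before assembling. You merely spell out the trace manipulation showing $\|FV^TV\|_F^2=\|FV^T\|_F^2$, which the paper leaves implicit.
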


We can easily obtain the following range and bounds of $\gamma$:
\begin{proposition}\label{pro:risk bounds}
The range of the spectral risk is $\gamma\geq 1$, and
\begin{equation}\label{equ:bound sigma}
1\leq \frac{1}{\tilde{\sigma}_1^2}\leq \gamma \leq \frac{1}{\tilde{\sigma}_r^2}.
\end{equation}
The minimum value 1 is achieved, if and only if the data dimension is one, i.e., $r=1$.
\end{proposition}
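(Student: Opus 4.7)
The plan is to read everything off the spectral expression (\ref{equ:risk}). By their very definition, the sequences $\{\tilde a_i^2\}_{i=1}^r$ and $\{\tilde\sigma_i^2\}_{i=1}^r$ are normalized: $\sum_i \tilde a_i^2 = 1$ and $\sum_i \tilde\sigma_i^2 = 1$. Thus the $\tilde a_i^2$ can be treated as a probability weight vector and the bounds on $\gamma$ reduce to bounding the quantities $1/\tilde\sigma_i^2$ that are being averaged.

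First I would handle the inner chain $1/\tilde\sigma_1^2 \le \gamma \le 1/\tilde\sigma_r^2$. The SVD convention is that $\sigma_1 \ge \cdots \ge \sigma_r > 0$ (positivity comes from the full-rank assumption on $A$), so after normalization $\tilde\sigma_1^2 \ge \tilde\sigma_i^2 \ge \tilde\sigma_r^2 > 0$, and hence $1/\tilde\sigma_1^2 \le 1/\tilde\sigma_i^2 \le 1/\tilde\sigma_r^2$. Multiplying each $1/\tilde\sigma_i^2$ by the weight $\tilde a_i^2$ and summing, while using $\sum_i \tilde a_i^2 = 1$, immediately gives the two-sided inequality.

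Next I would establish the absolute lower bound $1/\tilde\sigma_1^2 \ge 1$. Since every $\tilde\sigma_i^2 > 0$ and they sum to 1, we have $\tilde\sigma_1^2 \le 1$, whence $1/\tilde\sigma_1^2 \ge 1$. Chaining with the previous step yields $\gamma \ge 1$.

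Finally, for the equality case: if $\gamma = 1$, then $1/\tilde\sigma_1^2 \le \gamma = 1$ forces $\tilde\sigma_1^2 = 1$, and because $\sum_i \tilde\sigma_i^2 = 1$ with all terms strictly positive, this is only possible when $r = 1$. Conversely, if $r = 1$ then necessarily $\tilde a_1^2 = \tilde\sigma_1^2 = 1$, and (\ref{equ:risk}) gives $\gamma = 1$. There is no real obstacle here; the only care needed is in the equality analysis, where the \emph{strict} positivity of every $\tilde\sigma_i^2$ (guaranteed by $A$ having full rank $r$) is what rules out the collapse $\tilde\sigma_1^2 = 1$ whenever $r \ge 2$.
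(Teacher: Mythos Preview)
Your argument is correct and is precisely the natural derivation the paper has in mind: it states the proposition immediately after the spectral expression~(\ref{equ:risk}) with only the remark ``We can easily obtain the following range and bounds of $\gamma$'' and supplies no further proof. Your reading of $\gamma$ as a convex combination (with weights $\tilde a_i^2$) of the ordered quantities $1/\tilde\sigma_i^2$, together with the full-rank assumption to guarantee $\tilde\sigma_r^2>0$, is exactly what that remark is pointing at, and your treatment of the equality case is the expected one.
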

The minimum bound implies that when we use the smallest basis having only one vector, the smallest risk is achieved. However, in this case, the fitting error can be quite large. There is a tradeoff between the fitting error and the spectral risk.

\subsubsection{Fitting Error}
First of all, it should be made clear that the fitting error is distinct from the error rate of classification. The absolute measure of fitting error is defined straightforwardly:
\begin{definition}\label{def:abserror}
\tb{(absolute fitting error)} The absolute fitting error is defined as
\begin{equation}
f \doteq \|F-D^*A\|_F^2.
\end{equation}
\end{definition}
For problem (\ref{equ:lr}), we have $f=\|F\|_F^2-\|D^*A\|_F^2=n-\sum_{i=1}^r a_i^2$. For reason that will be clear later, we define the relative measure to be:
\begin{definition}\label{def:error}
\tb{(fitting error)}
The relative fitting error, simply called fitting error, is defined as
\begin{equation}
\epsilon\doteq \frac{\|F-D^*A\|_F^2}{\|D^*A\|_F^2}+1.
\end{equation}
\end{definition}
For problem (\ref{equ:lr}), the definition is equivalent to $\epsilon\doteq \|F\|_F^2/\|D^*A\|_F^2$. It is easy to see that
\begin{proposition}\label{pro:error}
For linear regression problem (\ref{equ:lr}), $\epsilon=n/\sum_{i=1}^r a_i^2$.
Its range is $\epsilon\geq 1$. The minimum value 1 is achieved if and only if the row-space of data completely covers the target, i.e., $D^*A=A$.
\end{proposition}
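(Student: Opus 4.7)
The plan is to derive the closed form $\epsilon = n/\sum_{i=1}^r a_i^2$ first, and then read off the bound and the equality case directly from that formula together with the orthogonal-projection structure of $D^*A$.

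First I would reconcile the two expressions in Definition~\ref{def:error}. Since $D^* = FA^T(AA^T)^{-1}$, the matrix $D^*A = FA^T(AA^T)^{-1}A$ is the orthogonal projection of the rows of $F$ onto the row-space of $A$, so Pythagoras gives $\|F\|_F^2 = \|F-D^*A\|_F^2 + \|D^*A\|_F^2$. Dividing by $\|D^*A\|_F^2$ and adding $1$ yields $\epsilon = \|F\|_F^2/\|D^*A\|_F^2$, which is the equivalent form mentioned right after the definition. Using $\|F\|_F^2 = n$ (every column of the indicator matrix $F$ has exactly one nonzero entry equal to one) and the spectral identity $\|D^*A\|_F^2 = \sum_{i=1}^r a_i^2$ already recorded in the derivation of $f$ preceding Definition~\ref{def:error}, I obtain the claimed formula $\epsilon = n/\sum_{i=1}^r a_i^2$.

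For the bound, orthogonal projection is a contraction, so $\|D^*A\|_F^2 \le \|F\|_F^2$; combined with $\epsilon = \|F\|_F^2/\|D^*A\|_F^2$ this gives $\epsilon \ge 1$. The ratio is well-defined because the assumption $D^*\neq \mb{0}$ (equivalently $FA^T\neq \mb{0}$) rules out $\|D^*A\|_F^2 = 0$. Equivalently, in the spectral form $\epsilon = n / \sum a_i^2$, the inequality becomes $\sum a_i^2 \le n = \|F\|_F^2$, which is Parseval applied to the projection of each row of $F$ onto $\{V_i\}_{i=1}^r$.

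For the equality case, $\epsilon = 1$ means $\|D^*A\|_F^2 = \|F\|_F^2$, and since $D^*A$ is the orthogonal projection of $F$ onto the row-space of $A$, this forces $F = D^*A$, i.e.\ the rows of $F$ already lie in the row-space of $A$ (I read the ``$D^*A = A$'' in the statement as a typo for $D^*A = F$, matching the phrase ``the row-space of data completely covers the target''). Conversely, if $F = D^*A$ then $\|D^*A\|_F^2 = \|F\|_F^2 = n$ and $\epsilon = 1$. No single step looks like a real obstacle; the only mildly delicate point is being careful that the ``$\|D^*A\|_F^2 = \sum a_i^2$'' identity and the value $\|F\|_F^2 = n$ are being invoked consistently, so the bulk of the proof is just assembling facts already established above.
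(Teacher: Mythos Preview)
Your proposal is correct and matches the paper's approach; the paper in fact offers no explicit proof here (it just says ``It is easy to see that''), since the formula follows immediately from the identities $\|F\|_F^2=n$ and $\|D^*A\|_F^2=\sum_i a_i^2$ recorded just before Definition~\ref{def:error}, and the range and equality case drop out of the projection structure exactly as you describe. Your reading of ``$D^*A=A$'' as a typo for ``$D^*A=F$'' is also correct.
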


\subsection{Error and Risk Bounds of nRBFN}\label{sec:risk srbf}
The ideal graph condition ensures zero fitting error and low overfitting risk, we will calculate the specific bounds for nRBFN (un-regularized), and then extend to the perturbation case where the condition is not satisfied exactly. We assume $F\tilde{W}^T\neq \mb{0}$, so that $X^*\neq \mb{0}$.
\subsubsection{Ideal Case}
Let $r_k$ denote the size of basis for the $k$th class, and $n_k$ the number of training samples of that class. We have the following result:
\begin{theorem}\label{theo:nrbfn ideal risk}
For nRBFN problem (\ref{equ:srbf}), when the ideal graph condition is satisfied, the fitting error achieves the minimum value, $\epsilon=1$, and the spectral risk has the bounds
\begin{equation}\label{equ:srbf gamma}
\frac{r}{n}\sum_{k=1}^K \frac{n_k}{r_k}\leq \gamma \leq r.
\end{equation}
The maximum risk is achieved when there is only one nonzero entry in each column of $\tilde{W}$. The minimum risk is approached when the entries in each column approach distributing uniformly within the corresponding class, it is achieved if and only if $r_k=1$ for all $k$.
\end{theorem}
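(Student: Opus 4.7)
The plan is to combine two structural facts already proved under the ideal bipartite-graph condition: Theorem~\ref{theo:ideal srbf} gives $X^*\tilde{W}=F$, and Proposition~\ref{pro:X indicator} says $X^*$ is the indicator matrix of the basis, so that $\|X^*\|_F^2=r$. Together with $\|F\|_F^2=n$, these collapse both the fitting error and the spectral risk into explicit functions of $\|\tilde{W}\|_F^2$, after which only an elementary column-wise bound is needed.

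For the fitting error, $X^*\tilde{W}=F$ gives $\|F-X^*\tilde{W}\|_F^2=0$, so Definition~\ref{def:error} yields $\epsilon=\|F\|_F^2/\|X^*\tilde{W}\|_F^2=1$, attaining the minimum in Proposition~\ref{pro:error}. For the spectral risk, substituting $D^*=X^*$, $A=\tilde{W}$, and $D^*A=F$ into Definition~\ref{def:risk},
\begin{equation*}
\gamma=\frac{\|X^*\|_F^2\,\|\tilde{W}\|_F^2}{\|F\|_F^2}=\frac{r\,\|\tilde{W}\|_F^2}{n}.
\end{equation*}
So the theorem reduces to establishing $\sum_{k=1}^K n_k/r_k\leq\|\tilde{W}\|_F^2\leq n$, together with the stated equality conditions.

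The ideal bipartite-graph condition makes $\tilde{W}$ block-diagonal (after reordering) with blocks $\tilde{W}_k\in\mathbb{R}^{r_k\times n_k}$. Because $\tilde{W}=WS^{-1}$ and the Gaussian kernel is nonnegative, every column of $\tilde{W}_k$ is a nonnegative vector summing to one; by convexity of $x\mapsto x^2$ on the simplex, any such column $c\in\mathbb{R}^{r_k}$ satisfies $1/r_k\leq\|c\|_2^2\leq 1$, with the lower equality iff $c=\mb{1}/r_k$ and the upper equality iff $c$ is a standard basis vector. Summing over the $n_k$ columns of $\tilde{W}_k$ and then over $k$ yields $\sum_k n_k/r_k\leq\|\tilde{W}\|_F^2\leq n$, and substituting into $\gamma=r\|\tilde{W}\|_F^2/n$ produces the claimed bounds.

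For the equality analysis, the upper bound is attained iff every column of $\tilde{W}$ has a single nonzero entry of value one. The lower bound is attained iff every column of each $\tilde{W}_k$ equals the uniform vector $\mb{1}/r_k$; expanding $\tilde{W}_{ij}=\phi(G_i,A_j)/s_j$, this demands $\phi(G_i,A_j)$ be constant in $i$ for each fixed in-class $A_j$, which a Gaussian kernel generically cannot satisfy unless $r_k=1$ for all $k$ (in which case the condition is vacuous). Hence the minimum risk is only approached in general and achieved iff $r_k=1$ for all $k$. I expect the main obstacle to be purely bookkeeping: carefully tracking how the bipartite-graph ideal condition forces the block structure, and isolating the correct simplex-convexity estimate on each column; the core argument is otherwise elementary.
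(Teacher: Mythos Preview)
Your argument is essentially identical to the paper's: both invoke Theorem~\ref{theo:ideal srbf} and Proposition~\ref{pro:X indicator} to reduce $\gamma$ to $r\|\tilde{W}\|_F^2/n$, and then bound $\|\tilde{W}\|_F^2$ column-by-column using the $\ell_1$--$\ell_2$ inequality on the probability simplex. The one place you diverge is the justification for why the lower bound is not attained when some $r_k>1$: you appeal to the Gaussian kernel ``generically'' failing to make $\phi(G_i,A_j)$ constant in $i$, whereas the paper gives the cleaner and kernel-independent observation that uniform columns in a block with $r_k>1$ force that block to have rank one, contradicting the standing full-rank assumption on $\tilde{W}$.
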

\begin{proof}
By Theorem~\ref{theo:ideal srbf}, when the condition holds, perfect reconstruction is achieved, i.e., $X^*\tilde{W}=F$, so $\|X^*\tilde{W}\|_F^2=n$, $f=1$. Besides, by Proposition~\ref{pro:X indicator}, $X^*$ becomes an indicator matrix, so $\|X^*\|_F^2=r$. The spectral risk then equals $r\|\tilde{W}\|_F^2/n$, depending only on $\|\tilde{W}\|_F^2$. Recall that each column of $\tilde{W}$ sums to one. For a vector $x\in \mathbb{R}^{p}$ with $\|x\|_1=1$, $1/\sqrt{p}\leq \|x\|_2 \leq 1$, the upper bound is obtained when there is only one nonzero entry in $x$, while the lower bound is obtained when the entries distribute uniformly, i.e., $x_i=1/p$. Therefore, we get $\sum_{k=1}^K n_k/r_k\leq\|\tilde{W}\|_F^2\leq n$. The lower bound cannot be reached except when $r_k=1$ for all $k$, otherwise, the rank of $\tilde{W}$ will not be full, violating the assumption.
\end{proof}

\begin{corollary}
If both $n_k$ and $r_k$ are uniform among the classes, i.e., $n_k=n/K$, $r_k=r/K$, for all $k$, (\ref{equ:srbf gamma}) becomes
\begin{equation}
K\leq \gamma \leq r.
\end{equation}
\end{corollary}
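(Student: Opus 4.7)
The plan is to obtain this as an immediate specialization of Theorem~\ref{theo:nrbfn ideal risk}, since the bounds $\frac{r}{n}\sum_{k=1}^K \frac{n_k}{r_k}\leq \gamma \leq r$ are stated there for arbitrary class sizes $n_k$ and basis allocations $r_k$. The upper bound does not involve $n_k$ or $r_k$ at all, so it immediately carries over unchanged as $\gamma \leq r$. Only the lower bound needs arithmetic work, and that work is a single substitution.

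For the lower bound, I would substitute $n_k = n/K$ and $r_k = r/K$ into the summation. Each summand becomes $n_k/r_k = (n/K)/(r/K) = n/r$, so $\sum_{k=1}^K n_k/r_k = K \cdot n/r$. Multiplying by the prefactor $r/n$ then yields exactly $K$. Combining the two ends gives the claimed $K \leq \gamma \leq r$.

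There is no genuine obstacle here; the corollary is a one-line arithmetic consequence of the preceding theorem. As a sanity check, I would verify the two extreme cases remain consistent under the uniform assumption. Equality $\gamma = r$ in the upper bound corresponds, per Theorem~\ref{theo:nrbfn ideal risk}, to $\tilde{W}$ having a single nonzero entry per column, which is compatible with any uniform class and basis sizes. Equality $\gamma = K$ in the lower bound would require $r_k = 1$ for all $k$, which under the uniform hypothesis forces $r = K$; outside this degenerate case the lower bound is strict, in line with the strict inequality noted in the proof of Theorem~\ref{theo:nrbfn ideal risk}.
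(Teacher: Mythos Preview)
Your proposal is correct and matches the paper's treatment: the corollary is presented there without proof, as an immediate specialization of Theorem~\ref{theo:nrbfn ideal risk} obtained by the same one-line substitution you carry out. Your additional sanity check on the equality cases is consistent with the theorem but goes slightly beyond what the paper states for the corollary.
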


The corollary conveys clear implications. It tells that in the ideal case the spectral risk is lower bounded by the number of classes--a quantity representing ``problem complexity'', and upper bounded by the size of basis--a quantity representing ``model complexity''. First, the lower bound implies that however the model is the spectral risk will never be lower than the problem complexity. As the number of classes increases, the spectral risk increases too. By Theorem~\ref{theo:nrbfn ideal risk}, the risk approaches the minimum when the system is balanced, i.e., when the entries are uniform. This is a stronger condition than the uniform row-sum condition in Theorem~\ref{theo:F leading}. Although they analyze the deviation of $F$ to the principal subspace by different manners, Theorem~\ref{theo:nrbfn ideal risk} by divisive manner while Theorem~\ref{theo:F leading} by subtractive manner, the results are consistent. They both prefer a balanced system. Second, the upper bound implies a tradeoff between the fitting error and the overfitting risk: larger basis means lower error but higher risk. In addition, the size of basis also makes us recall the VC-dimension \cite{Vapnik2000The}--a classical measure of model complexity. Although the definitions are different, it happens that the VC-dimension of nRBFN (assuming the basis size is fixed, not a parameter) is also $r$. This shows some coincidence between the two concepts. Detailed comparisons are beyond the scope of the paper and we omit.

Although the conclusions hold for the ideal case, the theorem provides a foundation for the analysis of noisy case. We now explore.

\subsubsection{Perturbation Case}\label{sec:risk srbf perturbation}
We can deem the noisy case as perturbed from an ideal case, and then analyze it with matrix perturbation theory. As will be shown the noise is required to be only tiny.

Denote the noisy normalized similarity matrix to be $\tilde{W'}$, it can be decomposed as $\tilde{W'}=\tilde{W}+\Delta W$, where $\tilde{W}$ is an ideal normalized similarity matrix and $\Delta W$ is noise. Given $\tilde{W'}$, for the purpose of perturbation analysis, we do not need to know the true $\tilde{W}$. Constructing one will suffice. Among many choices, the simplest one is as follows: set the between-class entries of $\tilde{W'}$ to zero, and then normalize each column, this leads to a qualified $\tilde{W}$. Subsequently, the noise is determined, $\Delta W=\tilde{W'}-\tilde{W}$. It can be proved that, among all the choices of $\tilde{W}$, the noise induced by this manner is the minimum in $\ell_1$-norm sense. Details are omitted.

Before presenting the results, we introduce some notations. Let
\[
\xi\doteq \|\tilde{W}^\dag\|_2\|\Delta W\|_2,
\]
where $\|\cdot\|_2$ for a matrix denotes the spectral norm, $\tilde{W}^\dag$ denotes the pseudo-inverse of $\tilde{W}$, which is equivalent to $\tilde{W}^T(\tilde{W}\tilde{W}^T)^{-1}$ in our case. Let
\[
\delta\doteq \|\Delta W\|_F/\|\tilde{W}\|_F,
\]
\[
n_\rho\doteq \sqrt{n_p/n_q},\quad\quad r_\rho\doteq \sqrt{r_a/r_b},
\]
where $n_p=\max_k n_k$, $n_q=\min_k n_k$, $r_a=\max_k r_k$, and $r_b=\min_k r_k$. Finally, let $\epsilon'$ and $\gamma'$ be the fitting error and spectral risk of the noisy case respectively, and $\gamma$ the spectral risk of the case $\tilde{W}$. We have the following bounds for $\epsilon'$ and $\gamma'$.
\begin{theorem}\label{theo:srbf pert risk}
For nRBFN problem (\ref{equ:srbf}), assume $\tilde{W'}=\tilde{W}+\Delta W$, and $\tilde{W}$ is of full rank satisfying the ideal graph condition, if
\begin{equation}\label{equ:srbf cond}
\xi<1/n_\rho,
\end{equation}
then
\begin{equation}\label{equ:srbf noisy error}
\epsilon'\leq \frac{n_\rho^2\xi^2}{(1- n_\rho\xi)^2}+1,
\end{equation}
\begin{equation}\label{equ:srbf noisy gamma}
\gamma'\leq \gamma(1+\delta)^2\Big{(}\frac{1+  r_\rho \xi}{1- n_\rho \xi}\Big{)}^2.
\end{equation}
\end{theorem}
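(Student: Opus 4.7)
The plan is to treat the noisy problem as a perturbation of an ideal one and apply standard pseudo-inverse perturbation machinery. Set $X'^* \doteq F\tilde{W'}^\dag$ and $X^* \doteq F\tilde{W}^\dag$; by Theorem~\ref{theo:ideal srbf} and Proposition~\ref{pro:X indicator}, the ideal quantities satisfy $X^*\tilde{W} = F$, $\|X^*\|_F^2 = r$, and $X^*$ is an indicator matrix whose spectral norm $\|X^*\|_2 = \sqrt{r_a}$ (and similarly $\|F\|_2 = \sqrt{n_p}$) is controlled by the extreme basis and class sizes. The role of the condition $\xi < 1/n_\rho$ is to keep us in a regime where $\tilde{W'}$ retains full row rank and a Banach-lemma-type estimate of the form $\|\tilde{W'}^\dag\|_2 \leq \|\tilde{W}^\dag\|_2/(1 - n_\rho\xi)$ is available.

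For the fitting error, I would first derive the identity
\[ F - X'^*\tilde{W'} \;=\; F(I - \tilde{W'}^\dag\tilde{W'}) \;=\; X^*\tilde{W}(I - \tilde{W'}^\dag\tilde{W'}) \;=\; -X^*\Delta W(I - \tilde{W'}^\dag\tilde{W'}), \]
using $F = X^*\tilde{W}$ together with the pseudo-inverse identity $\tilde{W'}(I - \tilde{W'}^\dag\tilde{W'}) = 0$. This reduces the numerator of $\epsilon' - 1$ to a product involving $\|X^*\|_2$, $\|\Delta W\|_2$, and an idempotent of spectral norm at most one. Because $X'^*\tilde{W'} = FP$ with $P \doteq \tilde{W'}^\dag\tilde{W'}$ the orthogonal projector onto the row space of $\tilde{W'}$, the Pythagorean identity $\|X'^*\tilde{W'}\|_F^2 = \|F\|_F^2 - \|F - X'^*\tilde{W'}\|_F^2$ supplies the denominator. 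Dividing and simplifying should produce $\epsilon' - 1 \leq n_\rho^2 \xi^2/(1 - n_\rho\xi)^2$.

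For the spectral risk, I would combine three ingredients. First, the first-order pseudo-inverse expansion $\tilde{W'}^\dag - \tilde{W}^\dag = -\tilde{W'}^\dag\Delta W\tilde{W}^\dag + (\text{null-space remainder})$, together with the Banach-lemma bound on $\|\tilde{W'}^\dag\|_2$, should yield $\|X'^*\|_F \leq \|X^*\|_F(1 + r_\rho\xi)/(1 - n_\rho\xi)$, with $r_\rho$ entering through the spectral-norm control of $X^*$ on its basis-index side. Second, the triangle inequality gives $\|\tilde{W'}\|_F \leq \|\tilde{W}\|_F(1+\delta)$. Third, the denominator lower bound from the fitting-error step. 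Assembling these into $\gamma' = \|X'^*\|_F^2\|\tilde{W'}\|_F^2/\|X'^*\tilde{W'}\|_F^2$ and comparing with the ideal $\gamma = r\|\tilde{W}\|_F^2/n$ should produce the announced $\gamma(1+\delta)^2((1+r_\rho\xi)/(1-n_\rho\xi))^2$.

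The main obstacle is not the strategy but the careful bookkeeping of the constants $n_\rho$ and $r_\rho$. Both arise from the block-diagonal structure of the ideal $\tilde{W}$: the norms $\|\tilde{W}\|_2$, $\|\tilde{W}^\dag\|_2$, $\|X^*\|_2$, and $\|F\|_2$ depend asymmetrically on the extreme class and basis sizes, and one must track which dimension ($r_k$ or $n_k$) enters each estimate in order to see that the sharpened condition $n_\rho\xi < 1$ (rather than the naive $\xi < 1$) is precisely what is needed to preserve full rank after perturbation and to land on $n_\rho$ in the fitting-error bound and on the asymmetric pair $(n_\rho, r_\rho)$ in the risk bound. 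Once these constants are correctly propagated, the remaining steps reduce to routine triangle and submultiplicativity inequalities.
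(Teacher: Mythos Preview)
Your overall architecture is sound and your residual identity $F - X'^*\tilde{W'} = -X^*\Delta W(I-\tilde{W'}^\dag\tilde{W'})$ is correct, but the route differs from the paper's and there is a real gap in how you obtain the specific constants $n_\rho$ and $r_\rho$.

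The paper does \emph{not} work at the matrix level with pseudo-inverse perturbation. It applies a classical row-wise least-squares perturbation bound (Trefethen--Bau, Theorem~18.1) to each of the $K$ rows separately: for every $k$, $\|\Delta Y_k\|_2/\|Y_k\|_2 \le \xi$ and $\|\Delta X_k\|_2/\|X_k\|_2 \le \xi$ (with $\theta=0$ under the ideal condition). The Frobenius bounds $\|\Delta Y\|_F/\|Y\|_F \le n_\rho\xi$ and $\|\Delta X\|_F/\|X\|_F \le r_\rho\xi$ then follow by the crude step $\sqrt{\sum_k \|\Delta Y_k\|_2^2}/\sqrt{\sum_k \|Y_k\|_2^2}\le \|\Delta Y_u\|_2/\|Y_l\|_2$, and it is precisely here that the \emph{ratios} $n_\rho=\sqrt{n_p/n_q}$ and $r_\rho=\sqrt{r_a/r_b}$ enter, because $\|Y_k\|_2=\sqrt{n_k}$ and $\|X_k\|_2=\sqrt{r_k}$.

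In your sketch, the ingredients you list---$\|X^*\|_2=\sqrt{r_a}$, $\|F\|_2=\sqrt{n_p}$, a Banach-lemma bound on $\|\tilde{W'}^\dag\|_2$---supply only the \emph{numerators} $r_a,n_p$, not the ratios $n_p/n_q$ and $r_a/r_b$. For instance, bounding the fitting-error numerator via $\|X^*\Delta W(I-P)\|_F\le \|X^*\|_F\|\Delta W\|_2$ or $\|X^*\|_2\|\Delta W\|_F$ lands you on constants like $r$ or $r_a$, not on $n_\rho$; I do not see how ``dividing and simplifying'' then produces $n_\rho^2\xi^2/(1-n_\rho\xi)^2$. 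Likewise, the standard Wedin/Banach bound gives $\|\tilde{W'}^\dag\|_2\le \|\tilde{W}^\dag\|_2/(1-\xi)$, with no $n_\rho$ in the denominator; writing $(1-n_\rho\xi)$ there is weaker but unmotivated. So either you need to revert to a row-by-row argument (which is exactly the paper's proof), or you should expect your matrix-level estimates to yield a bound with \emph{different} constants than the ones stated. The ``careful bookkeeping'' you flag as the main obstacle is not bookkeeping---it is the substance of the argument, and your proposal does not yet supply it.
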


\begin{proof}
Denote the solution of the noisy case to be $X'$, and the ideal case $X$. For simplicity, denote $X'\tilde{W'}$ by $Y'$, and $X\tilde{W}$ by $Y$. Let the difference denote by $\Delta$, e.g., $\Delta X=X'-X$, $\Delta Y=Y'-Y$. We will apply the following results from matrix perturbation theory (\cite{David1997Numerical} Theorem 18.1):
\[
\frac{\|\Delta y\|_2/\|y\|_2}{\|\Delta W\|_2/\|\tilde{W}\|_2}\leq \frac{\kappa}{\cos \theta},
\quad\quad
\frac{\|\Delta x\|_2/\|x\|_2}{\|\Delta W\|_2/\|\tilde{W}\|_2}\leq \kappa+\frac{\kappa^2\tan \theta}{\eta},
\]
where $x$ is any row of $X$, and $y$ is the corresponding row of $Y$. $\kappa$ is the condition number of $\tilde{W}$, $\kappa\doteq \|\tilde{W}\|_2\|\tilde{W}^\dag\|_2$. $\theta$ is the included angle between the target vector and its reconstruction $y$. $\eta\doteq \|x\|_2\|\tilde{W}\|_2/\|y\|_2$. In our context, $\tilde{W}$ satisfies the ideal condition, therefore $\theta=0$. The above results can be reduced to much simpler forms:
\[
\frac{\|\Delta y\|_2/\|y\|_2}{\|\Delta W\|_2/\|\tilde{W}\|_2}\leq \kappa,
\quad\quad
\frac{\|\Delta x\|_2/\|x\|_2}{\|\Delta W\|_2/\|\tilde{W}\|_2}\leq \kappa.
\]
By the definitions of $\kappa$ and $\delta$, they lead to
\begin{equation}\label{equ:dy}
\|\Delta y\|_2/\|y\|_2\leq \xi,
\end{equation}
\begin{equation}\label{equ:dx}
\|\Delta x\|_2/\|x\|_2\leq \xi.
\end{equation}
We now begin the proof.

First, we consider $\|Y'\|_F$, since both $\epsilon'$ and $\gamma'$ involve this denominator.
\begin{equation}\label{equ:Y'}
\|Y'\|_F\geq \|Y\|_F-\|\Delta Y\|_F=\|Y\|_F(1-\|\Delta Y\|_F/\|Y\|_F).
\end{equation}
To apply (\ref{equ:dy}), we have to convert the Frobenius norm of $Y$ to the $\ell_2$ norm of its rows $Y_k$'s.
\[
\begin{split}
\|\Delta Y\|_F/\|Y\|_F&=\sqrt{\frac{\sum_k \|\Delta Y_k\|_2^2}{\sum_k \|Y_k\|_2^2}}\\
&\leq \sqrt{\frac{K\|\Delta Y_u\|_2^2}{K\|Y_{l}\|_2^2}}\\
&= \|\Delta Y_u\|_2/\|Y_{l}\|_2\\
&=(\|Y_{u}\|_2/\|Y_{l}\|_2)(\|\Delta Y_{u}\|_2/\|Y_{u}\|_2),
\end{split}
\]
where subscript $u=\arg\max_{k} \|\Delta Y_k\|_2$, and $l=\arg\min_{k} \|Y_k\|_2$. Note that $\tilde{W}$ satisfies the ideal condition, by Theorem~\ref{theo:ideal srbf}, $Y=F$, and we have
\begin{equation}\label{equ:dy/y}
\begin{split}
\|\Delta Y\|_F/\|Y\|_F&\leq (\|F_{u}\|_2/\|F_{l}\|_2)(\|\Delta Y_{u}\|_2/\|Y_{u}\|_2)\\
&\leq \sqrt{n_{p}/n_{q}}(\|\Delta Y_{u}\|_2/\|Y_{u}\|_2)\\
&=n_\rho (\|\Delta Y_{u}\|_2/\|Y_{u}\|_2)\\
&\leq n_\rho \xi,
\end{split}
\end{equation}
where the last line invokes (\ref{equ:dy}). Substituting (\ref{equ:dy/y}) into (\ref{equ:Y'}), we obtain
\begin{equation}\label{equ:Y'2}
\|Y'\|_F\geq \|Y\|_F(1- n_\rho\xi).
\end{equation}
By the assumption, $\xi<1/n_\rho$, we are sure $1- n_\rho\xi>0$ so that $\|Y'\|_F> 0$.

Second, using (\ref{equ:Y'2}) and then (\ref{equ:dy/y}), the fitting error can be estimated
\[
\begin{split}
\epsilon' =\frac{\|F-Y'\|_F^2}{\|Y'\|_F^2}+1
\leq \frac{\|F-Y-\Delta Y\|_F^2}{\|Y\|_F^2(1- n_\rho\xi)^2}+1
= \frac{\|\Delta Y\|_F^2}{\|Y\|_F^2(1- n_\rho\xi)^2}+1
=\frac{n_\rho^2\xi^2}{(1- n_\rho\xi)^2}+1.
\end{split}
\]

Third, we consider $\|X'\|_F$.
\begin{equation}\label{equ:X'}
\|X'\|_F\leq \|X\|_F+\|\Delta X\|_F=\|X\|_F(1+\|\Delta X\|_F/\|X\|_F).
\end{equation}
By Proposition~\ref{pro:X indicator}, $X$ is an indicator matrix. Following the skill of the case $\|\Delta Y\|_F/\|Y\|_F$, we get
\begin{equation}\label{equ:dx/x}
\begin{split}
\|\Delta X\|_F/\|X\|_F&\leq (\|X_{u'}\|_2/\|X_{l'}\|_2)(\|\Delta X_{u'}\|_2/\|X_{u'}\|_2)\\
&\leq \sqrt{r_a/r_b}(\|\Delta X_{u'}\|_2/\|X_{u'}\|_2)\\
&=r_\rho (\|\Delta X_{u'}\|_2/\|X_{u'}\|_2)\\
&\leq r_\rho \xi.
\end{split}
\end{equation}
where the last line invokes (\ref{equ:dx}). Substituting (\ref{equ:dx/x}) into (\ref{equ:X'}), we obtain
\begin{equation}\label{equ:X'2}
\|X'\|_F\leq \|X\|_F(1+ r_\rho\xi).
\end{equation}

Forth, using (\ref{equ:Y'2}) and (\ref{equ:X'2}), we can get the bound of $\sqrt{\gamma'}$.
\[
\begin{split}
\sqrt{\gamma'}&=\|\tilde{W'}\|_F\frac{\|X'\|_F}{\|Y'\|_F}\\
&\leq \|\tilde{W'}\|_F\frac{\|X\|_F(1+ r_\rho\xi)}{\|Y\|_F(1- n_\rho\xi)}\\
&=\frac{\|X\|_F\|\tilde{W}\|_F}{\|Y\|_F}\frac{\|\tilde{W'}\|_F}{\|\tilde{W}\|_F}\Big{(}\frac{1+ r_\rho\xi}{1- n_\rho\xi}\Big{)}\\
&\leq \sqrt{\gamma}\frac{\|\tilde{W}\|_F+\|\Delta W\|_F}{\|\tilde{W}\|_F}\Big{(}\frac{1+ r_\rho\xi}{1- n_\rho\xi}\Big{)}\\
&=\sqrt{\gamma}(1+\delta)\Big{(}\frac{1+ r_\rho\xi}{1- n_\rho\xi}\Big{)}.
\end{split}
\]
Finally, (\ref{equ:srbf noisy gamma}) is obtained.
\end{proof}

Moreover, we have the following two corollaries.
\begin{corollary}\label{coro:srbf risk}
$\delta\leq \xi$, and
\begin{equation}\label{equ:srbf noisy gamma2}
\gamma'\leq \gamma(1+\delta)^2\Big{(}\frac{1+  r_\rho \xi}{1- n_\rho \xi}\Big{)}^2\leq \gamma\frac{(1+r_\rho \xi)^4}{(1-n_\rho \xi)^2}.
\end{equation}
\end{corollary}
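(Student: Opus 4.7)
The corollary has two substantive claims once I observe that the first inequality in (\ref{equ:srbf noisy gamma2}) is a verbatim restatement of Theorem~\ref{theo:srbf pert risk}: namely (i) the scalar comparison $\delta \leq \xi$, and (ii) the coarser bound obtained by replacing the factor $(1+\delta)^2$ in the middle expression by $(1+r_\rho \xi)^2$. So I would treat these as two short, independent steps.

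For (i), I would unpack the definitions $\delta = \|\Delta W\|_F/\|\tilde{W}\|_F$ and $\xi = \|\tilde{W}^\dag\|_2 \|\Delta W\|_2$. The first instinct that $\|\Delta W\|_F \geq \|\Delta W\|_2$ makes the inequality look reversed, and this is the only subtle point in the whole argument. The resolution is that $\tilde{W}$ is $r \times n$ with full row rank $r \leq n$, so the standard norm relations give, on the one hand, $\|\Delta W\|_F \leq \sqrt{r}\,\|\Delta W\|_2$, and on the other hand,
\begin{equation*}
\|\tilde{W}\|_F^2 \;=\; \sum_{i=1}^r \sigma_i(\tilde{W})^2 \;\geq\; r\,\sigma_{\min}(\tilde{W})^2 \;=\; r/\|\tilde{W}^\dag\|_2^{\,2}.
\end{equation*}
Dividing the first bound by the second, the compensating factors of $\sqrt{r}$ cancel and deliver $\delta \leq \xi$ cleanly.

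For (ii), the observation is that $r_\rho = \sqrt{r_a/r_b} \geq 1$ because $r_a = \max_k r_k \geq \min_k r_k = r_b$; combined with (i) this gives $\delta \leq \xi \leq r_\rho \xi$, hence $(1+\delta)^2 \leq (1+r_\rho \xi)^2$. The hypothesis (\ref{equ:srbf cond}) ensures $1 - n_\rho \xi > 0$, so the factor $\gamma(1+r_\rho \xi)^2 / (1-n_\rho \xi)^2$ is positive; multiplying through yields the coarser bound $\gamma(1+r_\rho \xi)^4/(1-n_\rho \xi)^2$, completing (ii). The main (really only) obstacle is the hidden $\sqrt{r}$-trick in step (i); the rest is a straightforward monotonicity chain.
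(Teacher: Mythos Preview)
Your proposal is correct and matches the paper's argument essentially line for line: the paper also proves $\delta\leq\xi$ by bounding the singular-value ratio $\sqrt{\sum_i\tau_i^2/\sum_i\sigma_i^2}\leq \tau_1/\sigma_r$ (which is exactly your $\sqrt{r}$-cancellation written compactly), and then invokes $r_\rho\geq 1$ to pass from $(1+\delta)^2$ to $(1+r_\rho\xi)^2$. The only difference is presentational---you make the compensating $\sqrt{r}$ factors explicit, while the paper folds them into a single ratio bound.
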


\begin{proof}
Similar to the relation between the Frobenius norm and the $\ell_2$ norm in the proof of Theorem~\ref{theo:srbf pert risk}, we can relate $\delta$ to $\xi$. Let the singular values of $\Delta W$ and $\tilde{W}$ in descending order to be $\tau_i$'s and $\sigma_i$'s respectively. By assumption, $\tilde{W}$ is of full rank, so $\sigma_r\neq 0$. Note that $\|\tilde{W}^\dag\|_2=1/\sigma_r$. We have
\[
\delta= \|\Delta W\|_F/\|\tilde{W}\|_F=\sqrt{\frac{\sum_i \tau_i^2}{\sum_i \sigma_i^2}}\leq \tau_1/\sigma_r=\|\Delta W\|_2\|\tilde{W}^\dag\|_2=\xi.
\]
Further, $r_\rho\geq 1$, so $\delta\leq r_\rho\xi$, and (\ref{equ:srbf noisy gamma2}) is obtained.
\end{proof}

\begin{corollary}
If the classes and basis are even, i.e., $n_\rho=1$, $r_\rho=1$, then the condition (\ref{equ:srbf cond}) becomes $\xi<1$, and (\ref{equ:srbf noisy error}), (\ref{equ:srbf noisy gamma}) become
\begin{equation}
\epsilon'\leq \frac{\xi^2}{(1- \xi)^2}+1,
\end{equation}
\begin{equation}
\gamma'\leq \gamma(1+\delta)^2\Big{(}1+\frac{2\xi}{1-\xi}\Big{)}^2\leq \gamma\frac{(1+\xi)^4}{(1-\xi)^2}.
\end{equation}
\end{corollary}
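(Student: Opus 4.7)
The plan is to obtain the corollary as a direct specialization of Theorem~\ref{theo:srbf pert risk} together with Corollary~\ref{coro:srbf risk}, so no new machinery is required. I will simply substitute $n_\rho=1$ and $r_\rho=1$ into the three assertions of the theorem and then simplify.

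First I would handle the condition. Since $n_\rho=1$, the hypothesis $\xi<1/n_\rho$ of Theorem~\ref{theo:srbf pert risk} collapses to $\xi<1$, which is the condition stated in the corollary. This immediately makes the theorem applicable, so both bounds (\ref{equ:srbf noisy error}) and (\ref{equ:srbf noisy gamma}) are available for use.

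Next I would plug $n_\rho=1$ into (\ref{equ:srbf noisy error}) to get $\epsilon'\leq \xi^2/(1-\xi)^2+1$, which is the claimed error bound. For the risk bound I would substitute $n_\rho=1$ and $r_\rho=1$ into (\ref{equ:srbf noisy gamma}) to obtain $\gamma'\leq \gamma(1+\delta)^2\big((1+\xi)/(1-\xi)\big)^2$. To match the form in the corollary, I would note the elementary identity $1+\frac{2\xi}{1-\xi}=\frac{1+\xi}{1-\xi}$, which is a one-line algebraic rewrite. To obtain the second, slightly looser inequality in the risk bound, I would invoke Corollary~\ref{coro:srbf risk}, which gives $\delta\leq \xi$, so $(1+\delta)^2\leq(1+\xi)^2$, and multiplying by $\big((1+\xi)/(1-\xi)\big)^2$ yields $\gamma(1+\xi)^4/(1-\xi)^2$.

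There is no real obstacle here: the corollary is a substitution exercise, and the only mildly nontrivial point is recognizing the identity $1+\frac{2\xi}{1-\xi}=\frac{1+\xi}{1-\xi}$ so as to present the bound in the stated form. I would keep the write-up to a few lines, citing (\ref{equ:srbf noisy error}), (\ref{equ:srbf noisy gamma}), and Corollary~\ref{coro:srbf risk} for each of the three conclusions.
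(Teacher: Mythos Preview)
Your proposal is correct and matches the paper's intended approach: the corollary is stated without its own proof, as it follows immediately by substituting $n_\rho=r_\rho=1$ into Theorem~\ref{theo:srbf pert risk} and Corollary~\ref{coro:srbf risk}, together with the identity $1+\frac{2\xi}{1-\xi}=\frac{1+\xi}{1-\xi}$.
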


The above results suggest that, comparing with the ideal case, the fitting error and spectral risk increase by a factor of $n_\rho^2\xi^2/(1- n_\rho\xi)^2+1\geq 1$ and $(1+r_\rho \xi)^4/(1-n_\rho \xi)^2\geq 1$ respectively. When $\xi\rightarrow 0$, they approach 1. $\xi<1$ is a condition frequently appeared in the perturbation analysis of linear system \cite{Stewart1990Matrix}. We have reproduced it in our context (stemming from (\ref{equ:Y'2})). $\xi$ measures the noise magnitude $\|\Delta W\|_2$ (i.e., the largest singular value of $\Delta W$) relative to the smallest singular value of $\tilde{W}$. In practice, $\xi<1$ requires the noise to be only tiny.

\subsection{Error and Risk Bounds of LRC}\label{sec:risk lrc}
We derive the bounds for un-regularized LRC in the ideal case. The results are not as significant as those of nRBFN, so less emphasis is put. Denote $\bar{\zeta}\doteq \|A\|_F^2/n$ to be the mean squared length of original data, and $\zeta_\rho\doteq \max_i \|A_i\|_2^2/\min_i \|A_i\|_2^2$ to be the ratio between the maximum and minimum length, and $\theta_u$ to be the maximum included angle between original data pairs. We have
\begin{theorem}
For LRC problem (\ref{equ:lrc}), when the ideal graph condition is satisfied, the fitting error achieves the minimum value, $\epsilon=1$, and the spectral risk
\begin{equation}\label{equ:lrc gamma}
\gamma=1+\frac{\bar{\zeta}}{\beta},
\end{equation}
\begin{equation}\label{equ:lrc gamma bound}
1+\frac{1}{\zeta_\rho |\cos \theta_u|}\leq \gamma \leq 1+\frac{\zeta_\rho}{|\cos \theta_u|}.
\end{equation}
\end{theorem}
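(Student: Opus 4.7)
The plan is to first settle the fitting error (which follows from a prior theorem), then evaluate $\gamma$ in closed form under the ideal graph condition by computing each of its three factors, and finally convert the exact expression into the stated two-sided bound by sandwiching $\beta$.

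For the fitting error, Theorem~\ref{theo:ideal lrc} gives $D^*\tilde{A}=F$ under the ideal graph condition. Hence $\|F-D^*\tilde{A}\|_F^2=0$, and Definition~\ref{def:error} yields $\epsilon=1$ immediately, with the implied denominator $\|D^*\tilde{A}\|_F^2=\|F\|_F^2=n$.

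For $\gamma$, I would evaluate the three factors in turn. The denominator $\|D^*\tilde{A}\|_F^2=n$ is already in hand. From the construction $\tilde{A}=\begin{bmatrix}\sqrt{\beta}\mb{1}^T\\A\end{bmatrix}$, one reads off $\|\tilde{A}\|_F^2=\beta n+\|A\|_F^2=\beta n+n\bar{\zeta}$. The nontrivial factor is $\|D^*\|_F^2$. The idea is to specialize the spectral formula~(\ref{equ:absrisk}) to $\tilde{A}$, giving $\|D^*\|_F^2=\sum_i a_i^2/\tilde{\sigma}_i^2$. By Theorem~\ref{theo:ideal lrc}, the rows of $F$ lie entirely in the leading right-singular subspace of $\tilde{A}$, whose singular values are all equal to $\sqrt{\beta n}$; so the projection weights $a_i^2$ are supported only on indices with $\tilde{\sigma}_i^2=\beta n$ and they sum to $\|F\|_F^2=n$ because the projection onto that subspace is the identity on $F$. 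Therefore $\|D^*\|_F^2=n/(\beta n)=1/\beta$, and
$$\gamma=\frac{\|D^*\|_F^2\,\|\tilde{A}\|_F^2}{\|D^*\tilde{A}\|_F^2}=\frac{(1/\beta)(\beta n+n\bar{\zeta})}{n}=1+\frac{\bar{\zeta}}{\beta}.$$

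For the two-sided bound, the whole task reduces to sandwiching $\beta=-\min_{ij}(A^TA)_{ij}$ between $|\cos\theta_u|$ times the extreme squared lengths. Writing each inner product as $(A^TA)_{ij}=\|A_i\|_2\|A_j\|_2\cos\theta_{ij}$, I would use two observations: (i) for the pair $(i^*,j^*)$ that realizes $\min_{ij}(A^TA)_{ij}$, one has $-\cos\theta_{i^*j^*}\leq|\cos\theta_u|$ because $\cos\theta_u=\min_{ij}\cos\theta_{ij}$ is the most negative cosine; together with $\|A_{i^*}\|_2\|A_{j^*}\|_2\leq\max_i\|A_i\|_2^2$ this gives the upper bound $\beta\leq\max_i\|A_i\|_2^2\cdot|\cos\theta_u|$; and (ii) evaluating at a specific pair $(p,q)$ realizing $\theta_u$ gives $\beta\geq\|A_p\|_2\|A_q\|_2\cdot|\cos\theta_u|\geq\min_i\|A_i\|_2^2\cdot|\cos\theta_u|$. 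Since $\min_i\|A_i\|_2^2\leq\bar{\zeta}\leq\max_i\|A_i\|_2^2$ by definition of $\bar{\zeta}$, dividing yields $1/(\zeta_\rho|\cos\theta_u|)\leq\bar{\zeta}/\beta\leq\zeta_\rho/|\cos\theta_u|$, and adding $1$ produces (\ref{equ:lrc gamma bound}).

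The main obstacle I anticipate is the careful handling of $\beta$, because the pair realizing $\min_{ij}(A^TA)_{ij}$ need not coincide with the pair realizing the maximum angle $\theta_u$; one must exploit the sign of the cosines and the extremal properties of $\min/\max$ to relate them. The evaluation of $\|D^*\|_F^2$ also deserves a clean statement, noting that the zero eigenvalue of $L$ in (\ref{equ:sd L}) has multiplicity $K$, so the leading right-singular subspace of $\tilde{A}$ is $K$-dimensional and the support of $\{a_i^2\}$ cleanly collapses to this single singular value. Everything else is routine algebra.
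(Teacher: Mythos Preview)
Your proposal is correct and matches the paper's approach for $\epsilon=1$ and for the formula $\gamma=1+\bar{\zeta}/\beta$: both arguments rest on Theorem~\ref{theo:ideal lrc} (the rows of $F$ lie in the leading right-singular subspace of $\tilde{A}$ with common singular value $\sqrt{\beta n}$), and then the paper's computation via $\gamma=1/\tilde{\sigma}_1^2$ and your computation via the three factors $\|D^*\|_F^2$, $\|\tilde{A}\|_F^2$, $\|D^*\tilde{A}\|_F^2$ are just two bookkeepings of the same identity.

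For the two-sided bound~(\ref{equ:lrc gamma bound}), your argument is actually more careful than the paper's. The paper writes $\beta=-\|A_{i^*}\|_2\|A_{j^*}\|_2\cos\theta_u$ for the pair $(i^*,j^*)$ realizing $\min_{ij}(A^TA)_{ij}$ and then says the bounds are ``easy to verify,'' tacitly identifying the angle of that pair with the maximum angle $\theta_u$. You rightly flag that these two pairs need not coincide, and you separate the two directions: bounding $\beta$ from above via $(i^*,j^*)$ using $-\cos\theta_{i^*j^*}\leq|\cos\theta_u|$, and from below via a pair $(p,q)$ that actually realizes $\theta_u$. This yields the same inequalities but closes the gap the paper leaves open; your version is the cleaner one.
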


\begin{proof}
We calculate $\gamma$ using the spectral expression $\gamma=\sum_{i=1}^r \tilde{a}_i^2/\tilde{\sigma}_i^2$. By Theorem~\ref{theo:ideal lrc}, rows of $F$ are the leading singular vectors of $\tilde{A}$, corresponding to singular value $\sqrt{\beta n}$. It implies $\tilde{a}_1^2+\dots+\tilde{a}_K^2=1$ and $\tilde{a}_i=0$ for all $i>K+1$, and $\tilde{\sigma}_1^2=\dots=\tilde{\sigma}_K^2$. Thus, we have $\gamma=1/\tilde{\sigma}_1^2$. Since $\tilde{\sigma}_1^2=\beta n/\|\tilde{A}\|_F^2=\beta n/(\beta n+\|A\|_F^2)$, (\ref{equ:lrc gamma}) is obtained. Next, by the definition of $\beta$ (\ref{equ:beta}), assume $A_{i^*}$, $A_{j^*}$ achieve $\min_{i,j} A_i^TA_j$, then $\beta=-\|A_{i^*}\|_2\|A_{j^*}\|_2\cos \theta_u$, where $\theta_u>\pi/2$. Hence, $\bar{\zeta}/\beta=\bar{\zeta}/(\|A_{i^*}\|_2\|A_{j^*}\|_2|\cos \theta_u|)$, and (\ref{equ:lrc gamma bound}) is easy to verify.
\end{proof}

\subsection{Effect of $\ell_2$-norm Regularization}
First, we extend the previous definitions of fitting error and spectral risk to regularized linear regression, then we show the one-one correspondence between the two quantities and the regularized objective, and finally we study the effect of the regularization.

Let the regularized linear regression problem be
\begin{equation}\label{equ:rlr}
g(\tilde{D})=\min_{\tilde{D}}\, \|F-\tilde{D}A\|_F^2+\lambda' \|\tilde{D}\|_F^2,
\end{equation}
where $\lambda '>0$.

The previous four definitions are extended trivially by replacing with the new $\tilde{D}^*$ of problem (\ref{equ:rlr}). The spectral expressions are changed to be:
\begin{proposition}\label{pro:rlr}
For regularized linear regression problem (\ref{equ:rlr}),
\[
\alpha=\sum_{i=1}^r \frac{a_i^2\sigma_i^2}{(\sigma_i^2+\lambda')^2};\quad\quad
\gamma=\sum_{i=1}^r\frac{\tilde{a'}_i^2}{\tilde{\sigma}_i^2},
\]
where ${a'}_i^2\doteq a_i^2\sigma_i^4/(\sigma_i^2+\lambda')^2$, and $\tilde{a'}_i^2\doteq {a'}_i^2/\sum_i {a'}_i^2 $;
\[
f=n-\sum_{i=1}^r a_i^2 (1- \frac{{\lambda'}^2}{(\sigma_i^2+\lambda')^2});\quad\quad
\epsilon=\frac{n-\sum_i 2a_i^2\sigma_i^2\lambda'/(\sigma_i^2+\lambda')^2}{\sum_i a_i^2\sigma_i^4/(\sigma_i^2+\lambda')^2}.
\]
The range of relative spectral risk is $\gamma\geq 1$, and  $1\leq 1/\tilde{\sigma}_1^2\leq \gamma \leq 1/\tilde{\sigma}_r^2$. The minimum value 1 is achieved, if and only if $r=1$. The range of relative fitting error is $\epsilon>1$. If the row-space of data completely covers the target, then $\lim_{\lambda'\rightarrow 0} \epsilon=1$, but the minimum value 1 cannot be achieved unless $\lambda'=0$.
\end{proposition}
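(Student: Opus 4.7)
}

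The plan is to run everything through the SVD $A=U\Sigma V$ and then do mechanical algebra; there is no surprise in the statement, so the work is bookkeeping. First I would write the closed form $\tilde{D}^* = FA^T(AA^T+\lambda' I)^{-1}$ and, substituting the SVD, reduce it to
\[
\tilde{D}^* \;=\; FV^T\Sigma(\Sigma^2+\lambda' I)^{-1} U^T.
\]
Since $U$ has orthonormal columns, $\|\tilde{D}^*\|_F^2 = \|FV^T\Sigma(\Sigma^2+\lambda'I)^{-1}\|_F^2$; expanding column-by-column in the basis of right singular vectors gives $\alpha=\sum_i \sigma_i^2 a_i^2/(\sigma_i^2+\lambda')^2$ with $a_i^2=\|FV_i^T\|_2^2$ exactly as in the unregularized case. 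For $\tilde{D}^*A$, the $U^TU=I$ collapse yields $\tilde{D}^*A = FV^T\Sigma^2(\Sigma^2+\lambda'I)^{-1}V$, so $\|\tilde{D}^*A\|_F^2 = \sum_i \sigma_i^4 a_i^2/(\sigma_i^2+\lambda')^2 = \sum_i {a'}_i^2$. The spectral form of $\gamma$ then follows by dividing $\|\tilde{D}^*\|_F^2\|A\|_F^2$ by $\|\tilde{D}^*A\|_F^2$, using $\|A\|_F^2=\sum_i\sigma_i^2$, and renormalizing $a'$ and $\sigma$ to obtain $\sum_i \tilde{a'}_i^2/\tilde{\sigma}_i^2$.

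For the fitting error I would orthogonally complete $V$ to a square orthogonal matrix $[V^T\ \hat{V}^T]$, so $I=V^TV+\hat{V}^T\hat{V}$. Then
\[
F - \tilde{D}^*A \;=\; FV^T\bigl(I-\Sigma^2(\Sigma^2+\lambda'I)^{-1}\bigr)V + F\hat{V}^T\hat{V}
\;=\; \lambda'FV^T(\Sigma^2+\lambda'I)^{-1}V + F\hat{V}^T\hat{V},
\]
and the two pieces are orthogonal. Taking Frobenius norms and using $\|F\|_F^2=n=\sum_i a_i^2+\|F\hat{V}^T\|_F^2$ gives the stated expression for $f$. Dividing $f+\|\tilde{D}^*A\|_F^2$ by $\|\tilde{D}^*A\|_F^2$ yields $\epsilon$; the identity $\sigma_i^4+{\lambda'}^2=(\sigma_i^2+\lambda')^2-2\sigma_i^2\lambda'$ then collapses the numerator to $n-\sum_i 2a_i^2\sigma_i^2\lambda'/(\sigma_i^2+\lambda')^2$.

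For the range of $\gamma$, I would view $\gamma=\sum_i \tilde{a'}_i^2(1/\tilde{\sigma}_i^2)$ as a convex combination (weights $\tilde{a'}_i^2\geq 0$ summing to $1$) of the quantities $1/\tilde{\sigma}_i^2$; since $\tilde{\sigma}_i^2\in(0,1]$ with $\sum_i\tilde{\sigma}_i^2=1$, the values $1/\tilde{\sigma}_i^2$ are all $\geq 1$ and lie between $1/\tilde{\sigma}_1^2$ and $1/\tilde{\sigma}_r^2$, which immediately yields the double inequality and $\gamma\geq 1$. The equality case $\gamma=1$ forces $\tilde{a'}_i^2=0$ on every index with $\tilde{\sigma}_i^2<1$; when $r>1$ every $\tilde{\sigma}_i^2<1$, a contradiction, so $\gamma=1$ iff $r=1$. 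For $\epsilon$, from the decomposition of the residual both pieces must vanish for $\epsilon=1$; but $\lambda'>0$ forces $FV^T=0$, which together with $F\hat{V}^T=0$ gives $FA^T=0$ and hence $\tilde{D}^*=0$, excluded by assumption. Thus $\epsilon>1$ strictly. Finally, when the row-space of $A$ covers the target, $F\hat{V}^T=0$, and the explicit expression for $f$ shows $f\to 0$ while $\|\tilde{D}^*A\|_F^2\to\sum_i a_i^2=n>0$ as $\lambda'\to 0$, giving $\lim_{\lambda'\to 0}\epsilon=1$.

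None of the steps is deep; the only place to be careful is the orthogonal-completion step for $V$ when $r<n$, because otherwise one loses the $F\hat{V}^T$ contribution to the fitting error and ends up with the wrong expression for $f$ (and hence for $\epsilon$). The other mild nuisance is proving the strict inequality $\epsilon>1$: one must rule out exact fitting for $\lambda'>0$ by tracing the consequence $FA^T=0$ back to the excluded case $\tilde{D}^*=0$, rather than merely appealing to the non-negativity of the residual norm.
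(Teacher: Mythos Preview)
Your proposal is correct and follows essentially the same route as the paper: write $\tilde{D}^*=FV^T\Delta U^T$ and $\tilde{D}^*A=FV^T\Lambda V$ via the SVD, read off $\alpha$ and $\gamma$, and split the residual using the orthogonal completion of $V$ (the paper calls it $\bar{V}$) to get $f$ and then $\epsilon$. The paper simply declares the range statements for $\gamma$ and $\epsilon$ ``apparent''; your convex-combination argument for $\gamma$ and your exclusion argument for $\epsilon>1$ via $FV^T=0\Rightarrow \tilde{D}^*=0$ are the natural way to fill those in (note that $FV^T=0$ by itself already gives $FA^T=FV^T\Sigma U^T=0$, so the extra appeal to $F\hat{V}^T=0$ is not needed).
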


\begin{proof}
We prove the four spectral expressions, the remaining results are apparent. First, we have $\tilde{D}^*=FA^T(AA^T+\lambda'I)^{-1}=FV^T\Delta U^T$, where $\Delta$ is a diagonal matrix with $\Delta_{ii}=\sigma_i/(\sigma_i^2+\lambda')$, and $\tilde{D}^*A=FV^T\Lambda V$, where $\Lambda_{ii}=\sigma_i^2/(\sigma_i^2+\lambda')$.

$\alpha$ is easy to obtained by the expression of $\tilde{D}^*$, and then
\[
\begin{split}
\gamma&=\frac{\|\tilde{D}^*\|_F^2\|A\|_F^2}{\|\tilde{D}^*A\|_F^2}
=\frac{\Big{(}\sum_i a_i^2\sigma_i^2/(\sigma_i^2+\lambda')^2\Big{)}\Big{(}\sum_{i} \sigma_i^2\Big{)}}{\sum_i a_i^2\sigma_i^4/(\sigma_i^2+\lambda')^2}
=\frac{\Big{(}\sum_i {a'}_i^2/\sigma_i^2\Big{)}\Big{(}\sum_{i} \sigma_i^2\Big{)}}{\sum_i {a'}_i^2}
=\sum_{i}\frac{\tilde{a'}_i^2}{\tilde{\sigma}_i^2}.
\end{split}
\]
Next, $f=\|F-FV^T\Lambda V\|_F^2=\|FV^T\Lambda' V\|_F^2+\|F\bar{V}^T\bar{V}\|_F^2$, where ${\Lambda'}_{ii}=\lambda'/(\sigma_i^2+\lambda')$ and $\bar{V}$ is the complement of $V$. By $\|FV^T\Lambda' V\|_F^2=\sum_i a_i^2{\lambda'}^2/(\sigma_i^2+\lambda')^2$ and $\|F\bar{V}^T\bar{V}\|_F^2=\|F\|_F^2-\|FV^TV\|_F^2=n-\sum_i a_i^2$, $f$ is obtained. Finally, based on the expressions of $\tilde{D}^*A$ and $f$, $\epsilon$ can be obtained.
\end{proof}
Un-regularized linear regression is a special case of regularized linear regression. When setting $\lambda'=0$, all the above expressions reduce to the forms in Section~\ref{sec:definition error and risk}.

Next, it is not hard to demonstrate the correspondence relationship.
\begin{theorem}
The regularized linear regression achieves a tradeoff between fitting error and spectral risk:
\begin{equation}\label{equ:rlr abs}
g(\tilde{D}^*)=f+\lambda'\alpha,
\end{equation}
\begin{equation}\label{equ:rlr rel}
\frac{g(\tilde{D}^*)}{\|D^*A\|_F^2}=\epsilon+\lambda\gamma-1,
\end{equation}
where $\lambda'=\lambda \|A\|_F^2$.
\end{theorem}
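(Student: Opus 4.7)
The plan is to verify both identities by plugging the (extended) definitions of $f$, $\alpha$, $\epsilon$, and $\gamma$ from Section~\ref{sec:definition error and risk} and Proposition~\ref{pro:rlr} into the objective $g(\tilde{D}^*)$. The whole result is essentially a bookkeeping claim: the extended definitions have been set up so that the two terms in the $\ell_2$-regularized loss split, term by term, into an error contribution and a risk contribution.

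First I would dispatch equation (\ref{equ:rlr abs}). By definition, $g(\tilde{D}^*) = \|F - \tilde{D}^*A\|_F^2 + \lambda' \|\tilde{D}^*\|_F^2$. The extended absolute fitting error is $f = \|F-\tilde{D}^*A\|_F^2$ and the extended absolute spectral risk is $\alpha = \|\tilde{D}^*\|_F^2$, so the identity $g(\tilde{D}^*) = f + \lambda'\alpha$ is immediate.

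Next, for the relative identity (\ref{equ:rlr rel}), I would divide both sides of (\ref{equ:rlr abs}) by $\|\tilde{D}^*A\|_F^2$ (the denominator appearing in the extended $\epsilon$ and $\gamma$; this is what the theorem's $\|D^*A\|_F^2$ should be read as, with the tilde silently implied by the extension). The first term becomes $f / \|\tilde{D}^*A\|_F^2 = \epsilon - 1$ by Definition~\ref{def:error}. The second term is
\begin{equation*}
\frac{\lambda' \alpha}{\|\tilde{D}^*A\|_F^2} \;=\; \lambda'\,\frac{\|\tilde{D}^*\|_F^2 \|A\|_F^2}{\|\tilde{D}^*A\|_F^2} \cdot \frac{1}{\|A\|_F^2} \;=\; \frac{\lambda'}{\|A\|_F^2}\,\gamma \;=\; \lambda\gamma,
\end{equation*}
using the extended Definition~\ref{def:risk} of $\gamma$ and the reparametrization $\lambda' = \lambda \|A\|_F^2$. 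Summing the two terms yields the claimed $\epsilon + \lambda\gamma - 1$.

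There is no genuine obstacle here; the only thing one must be careful about is to use the \emph{extended} versions of $f,\alpha,\epsilon,\gamma$ (those formed with $\tilde{D}^*$, not the un-regularized $D^*$), and to read the denominator in (\ref{equ:rlr rel}) consistently with that convention. Once this is observed, both equalities follow from one line of substitution each, and the interpretation of the theorem as a tradeoff emerges: the first summand $\epsilon-1$ is (relative) fitting error, the second $\lambda\gamma$ is regularization weight times (relative) spectral risk, so minimizing $g$ trades these two quantities against one another through $\lambda$.
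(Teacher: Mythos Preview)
Your proof is correct and matches exactly what the paper intends: the paper itself gives no proof beyond the remark that ``it is not hard to demonstrate the correspondence relationship,'' and the substitution argument you wrote out is precisely the verification the authors left to the reader. Your observation that the denominator in (\ref{equ:rlr rel}) must be read as $\|\tilde{D}^*A\|_F^2$ (using the regularized solution, consistent with the extended definitions of $\epsilon$ and $\gamma$ in Proposition~\ref{pro:rlr}) is also right; the paper's notation there is simply sloppy.
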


Remember that in Section~\ref{sec:r lrc}, $\lambda'=\lambda \|A\|_F^2$ is set for the ease of parameter tuning. Here, coincidentally, it plays another role.\footnote{The correspondence is unexpected beforehand, especially (\ref{equ:rlr rel}). When we conceiving the definitions of both absolute and relative spectral risk, they are actually inspired by the structure of the solution $D^*$ of regularized LRC, rather than by the objective. Only the relative fitting error is designed after the observation of the correspondence.} The relationship (\ref{equ:rlr rel}) is more intuitive, since both $\epsilon$ and $\gamma$ have the same normalized range, and are traded off via a weight $\lambda$. The value of (\ref{equ:rlr rel}) can be compared between different data of the same model, while (\ref{equ:rlr abs}) cannot.

Finally, we rigorously study the tradeoff effect from another perspective: how the fitting error and spectral risk change when we impose regularization?
\begin{theorem}\label{theo:rlr D}
Compared with linear regression (\ref{equ:lr}), the fitting error of regularized linear regression (\ref{equ:rlr}) is increased, while the spectral risk is reduced. In precise, 1. $f(\tilde{D}^*)>f(D^*)$, 2. $\epsilon(\tilde{D}^*)>\epsilon(D^*)$, 3. $\alpha(\tilde{D}^*)<\alpha(D^*)$, 4. $\gamma(\tilde{D}^*)<\gamma(D^*)$ if $\{\sigma_i\}_{i\in \Omega}$ are not uniform ($|\Omega|>1$), where $\Omega$ is the index set of all nonzero projections $\Omega=\{j|a_j\neq 0,j=1,\dots,r\}$, $\gamma(\tilde{D}^*)=\gamma(D^*)$ otherwise.
\end{theorem}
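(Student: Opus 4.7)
The plan is to work directly with the closed-form spectral expressions collected in Proposition~\ref{pro:rlr}, using $\lambda'=0$ to recover the un-regularized quantities attached to $D^*$ and $\lambda'>0$ for those attached to $\tilde D^*$. Throughout, the standing hypothesis $D^*\neq \mathbf{0}$ from Section~\ref{sec:definition error and risk} guarantees that at least one projection $a_i$ is nonzero, so every sum that should be strictly positive will be.

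Claims 1 and 3 reduce to purely algebraic termwise comparisons. Subtracting yields $f(\tilde D^*)-f(D^*)=\sum_i a_i^2\lambda'^2/(\sigma_i^2+\lambda')^2>0$ and $\alpha(D^*)-\alpha(\tilde D^*)=\sum_i a_i^2\lambda'(2\sigma_i^2+\lambda')/[\sigma_i^2(\sigma_i^2+\lambda')^2]>0$, both strict for $\lambda'>0$. Claim 2 then follows by combining claim 1 with the observation that $\|\tilde D^* A\|_F^2=\sum_i a_i^2\sigma_i^4/(\sigma_i^2+\lambda')^2$ is strictly smaller than $\|D^* A\|_F^2=\sum_i a_i^2$, since each shrinkage factor $\sigma_i^4/(\sigma_i^2+\lambda')^2$ lies in $(0,1)$. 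Writing $\epsilon = 1 + f/\|\cdot\|_F^2$, a strictly larger numerator over a strictly smaller positive denominator gives $\epsilon(\tilde D^*)>\epsilon(D^*)$.

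Claim 4 is the main obstacle, because both numerator and denominator of $\gamma$ are shrunk by the regularization, and the inequality must come from comparing how differently the shrinkage factor $c_i \doteq \sigma_i^4/(\sigma_i^2+\lambda')^2$ acts on different modes. The strategy is to recast $\gamma$ as a weighted expectation. Setting $w_i=a_i^2$ and defining probability measures $P(i)\propto w_i$ and $P'(i)\propto w_i c_i$ on $\{1,\dots,r\}$, the spectral forms in Proposition~\ref{pro:rlr} yield
\[
\gamma(D^*) = \|A\|_F^2\, E_P\!\left[\tfrac{1}{\sigma^2}\right], \qquad \gamma(\tilde D^*) = \|A\|_F^2\, E_{P'}\!\left[\tfrac{1}{\sigma^2}\right] = \|A\|_F^2\,\frac{E_P[c/\sigma^2]}{E_P[c]}.
\]
Viewed as functions of $\sigma^2$, $c$ is strictly increasing while $1/\sigma^2$ is strictly decreasing, so they are anti-monotone. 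The Chebyshev sum (covariance) inequality gives $\mathrm{Cov}_P(c,1/\sigma^2)\le 0$, hence $E_P[c/\sigma^2]\le E_P[c]\,E_P[1/\sigma^2]$, i.e., $\gamma(\tilde D^*)\le \gamma(D^*)$.

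For the equality/strictness dichotomy in claim 4, I would invoke the standard equality case of the covariance inequality: $\mathrm{Cov}_P(c,1/\sigma^2)=0$ iff one of $c$ or $1/\sigma^2$ is constant on the support of $P$, which is precisely $\Omega=\{i:a_i\neq 0\}$. Because both maps are strictly monotone in $\sigma^2$, this reduces to the single condition that all $\sigma_i$ for $i\in\Omega$ coincide, covering both the case $|\Omega|=1$ and the degenerate uniform-$\sigma$ case. Strict inequality $\gamma(\tilde D^*)<\gamma(D^*)$ therefore holds precisely when $|\Omega|>1$ and the $\sigma_i$ on $\Omega$ are not all equal, exactly as stated.
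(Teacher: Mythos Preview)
Your argument is correct. For claims~1--3 you proceed exactly as the paper does, reading off the spectral expressions from Proposition~\ref{pro:rlr} and comparing termwise. The difference lies in claim~4. The paper argues directly: with the $\sigma_i$ arranged in decreasing order, it shows that the sequence $d_i/\sum_j a_j^2 d_j - 1/\sum_j a_j^2$ is descending with a sign change at some index $k$, then bounds $\gamma(\tilde D^*)-\gamma(D^*)=\sum_i(\tilde a'^2_i-\tilde a_i^2)/\tilde\sigma_i^2$ by pulling out $1/\tilde\sigma_k^2$ on the positive part and $1/\tilde\sigma_{k+1}^2$ on the negative part, using that the two parts cancel in total mass. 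Your route instead recognizes $\gamma(\tilde D^*)/\|A\|_F^2$ as $E_P[c/\sigma^2]/E_P[c]$ and invokes the Chebyshev covariance inequality for the anti-monotone pair $(c,1/\sigma^2)$, which immediately gives $\gamma(\tilde D^*)\le\gamma(D^*)$ together with the exact equality case. Your approach is shorter and avoids the case split on $k$, while the paper's computation is more self-contained in that it does not appeal to a named inequality; both arrive at the same strict/equality dichotomy on~$\Omega$.
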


\begin{proof}
Assertions 1 and 3 are easy to verify by comparing the spectral expressions in Proposition~\ref{pro:rlr} with those of when setting $\lambda'=0$. Assertion 2 can be established by noting $\epsilon(\tilde{D}^*)=f(\tilde{D}^*)/\|\tilde{D}^*A\|_F^2+1$, and the numerator increases while the denominator decreases. The final assertion 4 is less obvious, since both $\|\tilde{D}^*\|_F^2$ and $\|\tilde{D}^*A\|_F^2$ decrease. Some efforts have to be made.

First of all, if $\{\sigma_i\}_{i\in \Omega}$ are uniform, in particular $\Omega$ has only one member or $r=1$, it is clear that $\gamma(\tilde{D}^*)=\gamma(D^*)$. Next, we deal with the remaining case. Although ${a'}_i<a_i$ for all $i$, the ratios of decrement are different. Denote $d_i=\sigma_i^4/(\sigma_i^2+\lambda')^2$, observe that $d_1\geq d_2\geq\dots\geq d_r$ and there is at least one ``$>$'' among $\{d_i\}_{i\in\Omega}$. Focusing on $\tilde{a'}_i^2-\tilde{a}_i^2$,
\[
\tilde{a'}_i^2-\tilde{a}_i^2=\frac{a_i^2d_i}{\sum_i a_i^2d_i}-\frac{a_i^2}{\sum_i a_i^2}=a_i^2(\frac{d_i}{\sum_i a_i^2d_i}-\frac{1}{\sum_i a_i^2}).
\]
$d_i/\sum_i a_i^2d_i-1/\sum_i a_i^2$, $i=1,\dots,r$ is a descending sequence. In the first term, $d_1/(a_1^2d_1+a_2^2d_2+\dots+a_r^2d_r)=1/(a_1^2+a_2^2(d_2/d_1)+\dots+a_r^2(d_r/d_1))> 1/\sum_i a_i^2$, since $d_1\geq d_i$ for all $i>1$ and $\{d_i\}_{i\in\Omega}$ are not uniform, thus $d_1/\sum_i a_i^2d_i-1/\sum_i a_i^2> 0$. Likewise, we can prove the last term $d_r/\sum_i a_i^2d_i-1/\sum_i a_i^2<0$. Assume the first $k$ terms are positive, and the remaining terms negative, finally we have
\[
\begin{split}
\gamma(\tilde{D}^*)-\gamma(D^*)=&\sum_{i}\frac{\tilde{a'}_i^2-\tilde{a}_i^2}{\tilde{\sigma}_i^2}\\
=&\sum_{p=1}^k \frac{a_p^2}{\tilde{\sigma}_p^2}(\frac{d_p}{\sum_i a_i^2d_i}-\frac{1}{\sum_i a_i^2})\;+\;\sum_{q=k+1}^r \frac{a_q^2}{\tilde{\sigma}_q^2}(\frac{d_q}{\sum_i a_i^2d_i}-\frac{1}{\sum_i a_i^2})\\
<&\frac{1}{\tilde{\sigma}_k^2}\sum_{p=1}^k a_p^2(\frac{d_p}{\sum_i a_i^2d_i}-\frac{1}{\sum_i a_i^2})
\;+\;\frac{1}{{\tilde{\sigma}_{k+1}^2}}\sum_{q=k+1}^r a_q^2(\frac{d_q}{\sum_i a_i^2d_i}-\frac{1}{\sum_i a_i^2})\\
=&\frac{c}{\tilde{\sigma}_k^2}-\frac{c}{\tilde{\sigma}_{k+1}^2}<0,
\end{split}
\]
where $c=\sum_{p=1}^k a_p^2(\frac{d_p}{\sum_i a_i^2d_i}-\frac{1}{\sum_i a_i^2})>0$. Note that $\sum_{q=k+1}^r a_q^2(\frac{d_q}{\sum_i a_i^2d_i}-\frac{1}{\sum_i a_i^2})=-c$, since $\sum_{i}\tilde{a'}_i^2-\tilde{a}_i^2=0$.
\end{proof}

The assertion 4 has some indications. If the singular values are totally different, the nonuniform condition must hold, and the spectral risk must strictly decrease. Conversely, if the singular values are totally uniform, including the special case $r=1$, the spectral risk will not decrease. It indicates that, not in all cases, employing regularization will help to improve the generalization performance. In a balanced system, where the singular values are uniform, regularization is not necessary. Although it is an exceptional case, this point is not easily observed by the traditional Bayesian view. A practical implication of the result is that if the fitting error is under control, designing a balanced system is preferable.

We highlight the spectral risk of the uniform case in the following corollary.
\begin{corollary}
If the singular values of the data matrix are uniform, $\sigma_1=\sigma_2=\dots=\sigma_r$, then $\gamma(\tilde{D}^*)=\gamma(D^*)=r$, which goes linearly with the size of basis. In this case, the spectral risk is independent of the target vectors so long as they are not orthogonal to the data.
\end{corollary}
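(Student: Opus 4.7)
The corollary follows almost directly from the spectral expressions for $\gamma$ established in Proposition~\ref{pro:rlr} (and its un-regularized specialization), combined with assertion 4 of Theorem~\ref{theo:rlr D}. The plan is to compute $\gamma(D^*)$ explicitly under the uniform-singular-values hypothesis, then argue that regularization does not change it in this case, and finally confirm that the non-orthogonality assumption guarantees the quantity is well defined and independent of the target.

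First I would handle the un-regularized case. By the spectral formula $\gamma(D^*)=\sum_{i=1}^r \tilde{a}_i^2/\tilde{\sigma}_i^2$, and the fact that $\tilde{\sigma}_i^2=\sigma_i^2/\sum_{j=1}^r \sigma_j^2$, the uniformity $\sigma_1=\cdots=\sigma_r$ immediately yields $\tilde{\sigma}_i^2=1/r$ for every $i$. Therefore $\gamma(D^*)=r\sum_{i=1}^r \tilde{a}_i^2=r$, using that the normalized projections sum to one by definition, provided that $\sum_i a_i^2>0$. This last condition is exactly the non-orthogonality hypothesis $FA^T\neq \mathbf{0}$, which is in force throughout this section, and which guarantees that $D^*\neq \mathbf{0}$ so the normalization is valid.

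Next, for the regularized case, I would invoke assertion 4 of Theorem~\ref{theo:rlr D}: when $\{\sigma_i\}_{i\in\Omega}$ are uniform, $\gamma(\tilde{D}^*)=\gamma(D^*)$. Alternatively, one can verify this directly from Proposition~\ref{pro:rlr}: with uniform $\sigma_i$, the factor $d_i=\sigma_i^4/(\sigma_i^2+\lambda')^2$ is constant in $i$, so ${a'}_i^2 = a_i^2 d$ for a common $d$, whence $\tilde{a'}_i^2=\tilde{a}_i^2$, and the same computation as above gives $\gamma(\tilde{D}^*)=r$.

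Finally, the independence claim drops out as an observation: the value $r$ depends on neither $F$ nor the specific projections $a_i$, only on the dimension $r$ and the uniformity of the singular spectrum. The only way the formula can fail is if every $a_i$ vanishes, which is precisely the case $FA^T=\mathbf{0}$ excluded by hypothesis. I do not anticipate any real obstacle here; the statement is essentially a one-line consequence of the spectral formula once the uniformity is plugged in, and the main care needed is to note explicitly that the non-orthogonality assumption is what keeps the normalized quantities $\tilde{a}_i^2$ well defined.
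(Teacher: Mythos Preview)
Your proposal is correct and matches the paper's own treatment: the paper states this corollary without an explicit proof, relying implicitly on exactly the spectral formula for $\gamma$ and assertion~4 of Theorem~\ref{theo:rlr D} that you invoke. Your write-up simply makes that implicit reasoning explicit, and the care you take with the non-orthogonality hypothesis is appropriate.
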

We encounter the spectral risk equaling to the basis size again. But the context here is different to that of Theorem~\ref{theo:nrbfn ideal risk}. This subsection's results are general, independent of the ideal graph condition.

\begin{algorithm}[tb]
   \caption{Soft KNN for finding basis (SKNN)}
   \label{alg:sknn}
\begin{algorithmic}[1]
   \STATE {\bfseries Input:} data $A\in \mathbb{R}^{p\times n}$, labels $Y\in
   \mathbb{R}^{n}$, number of neighbors $k$, confidence threshold $t$
   \STATE {\bfseries Output:} basis $G\in \mathbb{R}^{p\times r}$
   \STATE Find the $k$ nearest neighbors of each sample, record their indices, and compute their distances $Q\in \mathbb{R}^{k\times n}$,
   where $Q_{ij}$ is the Euclidian distance of the $j$th sample to its $i$th neighbor
   \STATE Set the width of Gaussian kernel $\sigma=\Sigma_{i,j} Q_{ij}/(kn)$
   \STATE Build the similarity matrix $W_{ij}=\exp\{-Q_{ij}^2/(2\sigma^2)\}$, $\forall i,j$
   \STATE Normalize the columns of $W$ so that it
   becomes probability matrix $\tilde{W}=W\diag(\mb{1}^TW)^{-1}$
   \STATE For each sample, aggregate the probability
   of its neighbors that have the same label as it, resulting in a
   confidence vector $T\in \mathbb{R}^{n}$
   \STATE Choose the samples whose $T_j<t$ as the basis $G$
   \STATE If there is any class missed, add the sample with the
   lowest confidence in this class to the basis
\end{algorithmic}
\end{algorithm}

\begin{algorithm}[tb]
   \caption{The training of nRBFN}
   \label{alg:train srbf}
\begin{algorithmic}[1]
   \STATE {\bfseries Input:} training data $A\in \mathbb{R}^{p\times n}$, labels $Y\in \mathbb{R}^{n}$, regularization weight $\lambda$, number of neighbors $k$,
   confidence threshold $t$
   \STATE {\bfseries Output:} basis $G\in \mathbb{R}^{p\times r}$, width of Gaussian kernel
   $\sigma$, weight matrix $X\in \mathbb{R}^{K\times r}$
   \STATE Find the basis via soft KNN $G=$SKNN$(A,Y,k,t)$
   \STATE Compute pairwise distance $Q_{ij}=\|G_i-A_j\|_2$, $\forall i,j$
   \STATE Set the width of Gaussian kernel $\sigma=\Sigma_{i,j} Q_{ij}/(rn)$
   \STATE Build the kernel matrix $W_{ij}=\exp\{-Q_{ij}^2/(2\sigma^2)\}$, $\forall i,j$
   \STATE Normalize the columns of the kernel matrix
   $\tilde{W}=W\diag(\mb{1}^TW)^{-1}$
   \STATE Convert the label vector $Y$ to indicator matrix $F$
   \STATE Compute the weight matrix $X=F\tilde{W}^T(\tilde{W}\tilde{W}^T+\lambda\|\tilde{W}\|_F^2I)^{-1}$
\end{algorithmic}
\end{algorithm}

\begin{algorithm}[tb]
   \caption{The testing of nRBFN}
   \label{alg:test srbf}
\begin{algorithmic}[1]
   \STATE {\bfseries Input:} test data $B\in \mathbb{R}^{p\times m}$, basis $G\in \mathbb{R}^{p\times r}$, width of Gaussian kernel
   $\sigma$, weight matrix $X\in \mathbb{R}^{K\times r}$
   \STATE {\bfseries Output:} predicted labels $l\in\mathbb{R}^{m}$
   \STATE Build the kernel matrix $(W_B)_{ij}=\exp\{-\|G_i-B_j\|_2^2/(2\sigma^2)\}$, $\forall i,j$
   \STATE Normalize the columns of the kernel matrix
   $\tilde{W}_B=W_B\diag(\mb{1}^TW_B)^{-1}$
   \STATE Compute the predicted indicator matrix $\hat{F}=X\tilde{W}_B$
   \STATE Obtain the predicted labels
   $l_j=\mathop{\arg\max}_k\,\hat{F}_{kj}$, $\forall j$
\end{algorithmic}
\end{algorithm}

\section{Basis Selection Strategy}\label{sec:basis selection}
To make nRBFN work, we have to settle the basis selection problem and associated parameter setting. Unfortunately, it is still an open question to find the optimal basis for (n)RBFN. In this paper, we are contented with a strategy that is easy to use yet can deliver good performance.

Traditionally, there are three problems (n)RBFN needs to deal with: (1) how to decide the basis size? (2) how to select the basis? (3) how to set the Gaussian width? Usually, the size of the basis
is manually assigned, except for some incremental learning methods \cite{Chen1991Orthogonal,Bugmann1998Normalized,Huang2004An,Yu2014An} which learn the basis vectors one by one. There are two kinds of methods for basis selection: one is gradient descent method \cite{Poggio1990Networks,Karayiannis1999Reformulated,Xie2012Fast}, the other is sample-selection based method, including: (1) random selection of a subset of the training set, (2) clustering the data and using the cluster centers as the basis \cite{Moody1989Fast,Bishop1991Improving,Musavi1992On,Que2016Back}, and (3) incremental learning methods.  Generally, the gradient descent methods are time-consuming and lose one of the main advantages of RBFN compared with traditional neural networks. The incremental learning methods are also expensive and complex. The most frequently applied basis selection strategy is the clustering based method, especially K-means, due to its efficiency. The Gaussian width can be set via
some heuristics \cite{Moody1989Fast,Schwenker2001Three,Du2014Neural}, e.g., the maximum distance
between basis vectors, or learnt by gradient descent, or searched via model
selection. It was reported that the performance is not so sensitive to this parameter \cite{Specht1990Probabilistic}, especially for nRBFN \cite{Bugmann1998Normalized}.

In this paper, we devise a strategy that chooses the samples near the
boundaries of different classes as the basis. The basis size is determined by a confidence parameter that is
much easier to set. The Gaussian width is set as the
mean distance of the training set to the basis.

RBFN has its origin in function
approximation. The basis is regarded as templates or stereotypical
patterns. It is this view that leads to the clustering heuristics \cite{Scholkopf1997Comparing}.
However, the classification problem is different from the general
regression problem that is not concerned with the separability of
classes. For classification, samples near the boundaries may deliver
more crucial information for the separation of classes than those
in the inner part. This has been investigated by  \cite{Scholkopf1997Comparing,Bugmann1998Normalized,Oyang2005Data}. Usually, the idea is borrowed from SVM \cite{Cortes1995Support}, where the boundary points are called support vectors.

The boundary points can be identified by their classification
confidence. That is, for each sample, if the probability belonging
to its labeled class is known, then a sample can be identified as
boundary point if this probability is below some preassigned
threshold. Soft KNN may be the simplest tool that meets this demand.
The detailed algorithm is presented in Algorithm~\ref{alg:sknn}.
Note the basis size is implicitly determined by the
confidence threshold (within range (0,1]), which is easy
to set due to clear interpretation. In this way, on the one hand, the basis size can be determined according to the complexity of data distribution: when the classes overlap more, more samples are recruited as basis, vice versa. On the other hand, we can flexibly control the tradeoff between accuracy and resource burden: larger $t$ implies better accuracy but higher computational cost, vice versa.

Finally, the training and testing algorithms of nRBFN are shown in
Algorithm~\ref{alg:train srbf} and Algorithm~\ref{alg:test srbf} respectively. The time complexity of SKNN is $O(n^2(p+\log n))$, dominated by the computation of Euclidean distance and search of nearest neighbors. The time complexity of training nRBFN excluding SKNN is $O(prn+r^2n)$, and the total is $O(n^2(p+\log n)+r^2n)$, generally dominated by SKNN. Lastly, the complexity of testing is $O((p+K)rm)$.

\section{Experiments}\label{sec:experiment}
The experiments consist of two parts: demonstrating the performance of nRBFN, and evaluating the error and risk of nRBFN and LRC.

The experiments are carried out on a set of benchmark data sets, shown in Table~\ref{tab:srbf vs others}.\footnote{The data sets come from \url{http://www.csie.ntu.edu.tw/~cjlin/libsvmtools/datasets/} and \url{http://www.cad.zju.edu.cn/home/dengcai/Data/data.html}.} The data sets include classical small data sets of UCI Machine Learning Repository \cite{Lichman:2013}: iris, wdbc, glass, sonar, wine; high-dimensional and small-sample-size gene data: colon, leukemia; human face images: ORL, AR, YaleB; high-dimensional and large-sample-size text data: TDT2, 20news; and large-sample-size hand-written digit images: USPS, MNIST. If the original data set does not have a training-testing split, we use the first half of each class as the training set, except glass, sonar, and YaleB, where random splits have been performed to avoid particular sample sequences. Following common practice, each face image of ORL, AR, and YaleB is normalized to unit length. The procedures are run on a server with 32GB memory and 24 cores CPU of 2.93GHz.\footnote{The high-performance machine is used only for the purpose of convenience rather than necessity for the experiments.} In the results below, test error refers to the classification error on test set.

\subsection{Performance of nRBFN}
We first evaluate the parameters of nRBFN, then compare the performance of nRBFN with some other algorithms.
\subsubsection{Evaluation of the Parameters}\label{sec:para}
We evaluate the influence of the three parameters of nRBFN, $\lambda$, $t$, and $k$, on the classification performance. The results on five representative data sets are shown in Figure~{\ref{fig:para}}. Three default values $\lambda=10^{-13}$, $t=0.9$, and $k=20$ are used. When one parameter varies, the others are fixed with the default values. We find that:

(1) The test error generally decreases as $\lambda$ decreases, and reaches a plateau after $10^{-8}$. The exceptional case is iris, on which, due to insufficient sampling, larger $\lambda$ is needed to avoid noisy subspace. Nevertheless, the difference is not large. Considering methods using regularization are usually plagued by the problem of tuning the regularization weight, nRBFN shows a desirable feature: as a rule of thumb, $\lambda<10^{-8}$ generally delivers near-optimal result of nRBFN. Our experience showed that it also holds for the other data sets we have tested. This rule will be justified further from the error-and-risk perspective in Section~\ref{sec:risk lambda t}. Remember the actual regularization weight is $\lambda'=\lambda \|\tilde{W}\|_F^2$. It is adaptive to the data. The principles of this setting are provided by the row-space projection view and error-and-risk analysis. If $\lambda'$ is set as a whole, the above rule no long holds, and we have to search for the optimal value in a wide range.

(2) The error steadily decreases as $t$ increases, as expected. Note, in essence, $t$ is not merely a parameter to be tuned, it is also a choice of ours. It virtually controls the tradeoff between accuracy and resource burden. Our experience suggests $t=0.9$ strikes a good balance for general data sets. It will be justified further in below experiments.

(3) The performance is not sensitive to $k$. It becomes stable after $k\geq 8$.

The experiments show that the parameter setting of nRBFN is easy. $t$ is a matter of choice. $k$ has minor impact on the performance. There is only one parameter, $\lambda$, needed to be tuned, but its determination is not difficult. In the following, we fix $t=0.9$, $k=20$, and for each data set, $\lambda$ will be selected via 5-fold cross validation over three values $\{10^{-5},10^{-9},10^{-13}\}$.

\begin{figure*}[htb]
\center{

\subfigure[regularization weight $\lambda$]{\label{fig:para:lambda} 
\includegraphics[width=4.7cm]{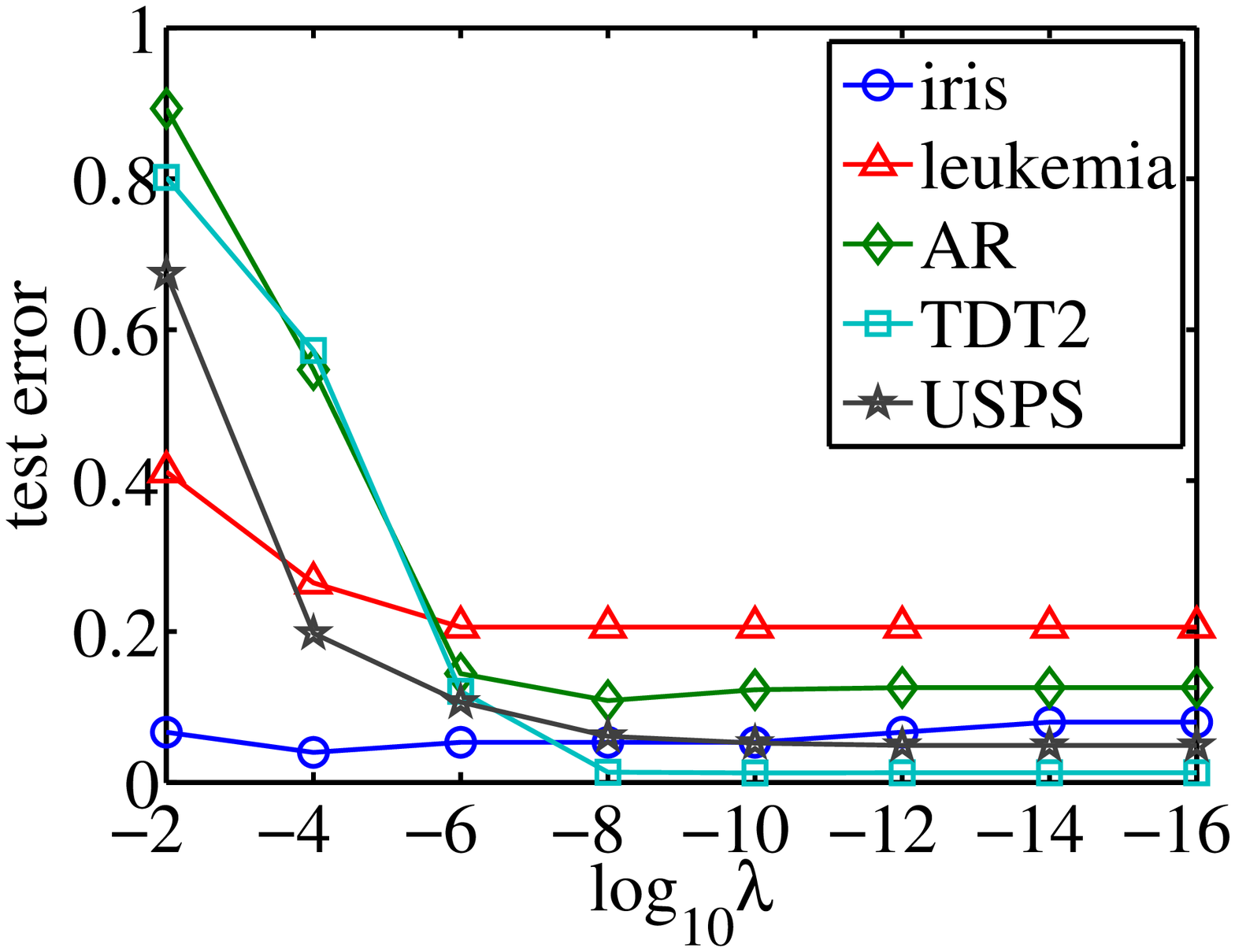}}
\subfigure[confidence threshold $t$]{\label{fig:para:t} 
\includegraphics[width=4.5cm]{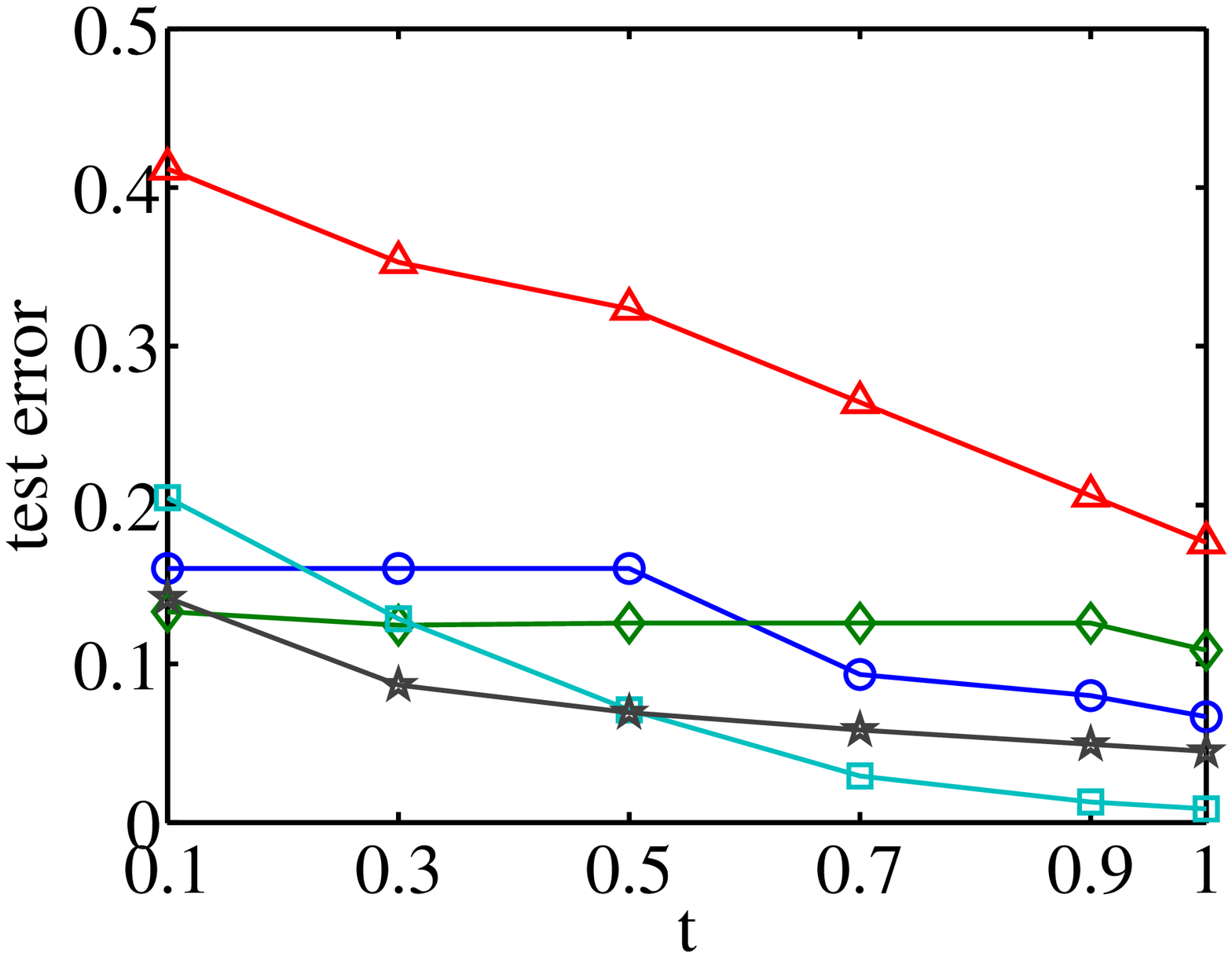}}
\subfigure[number of neighbors $k$]{\label{fig:para:k} 
\includegraphics[width=4.5cm]{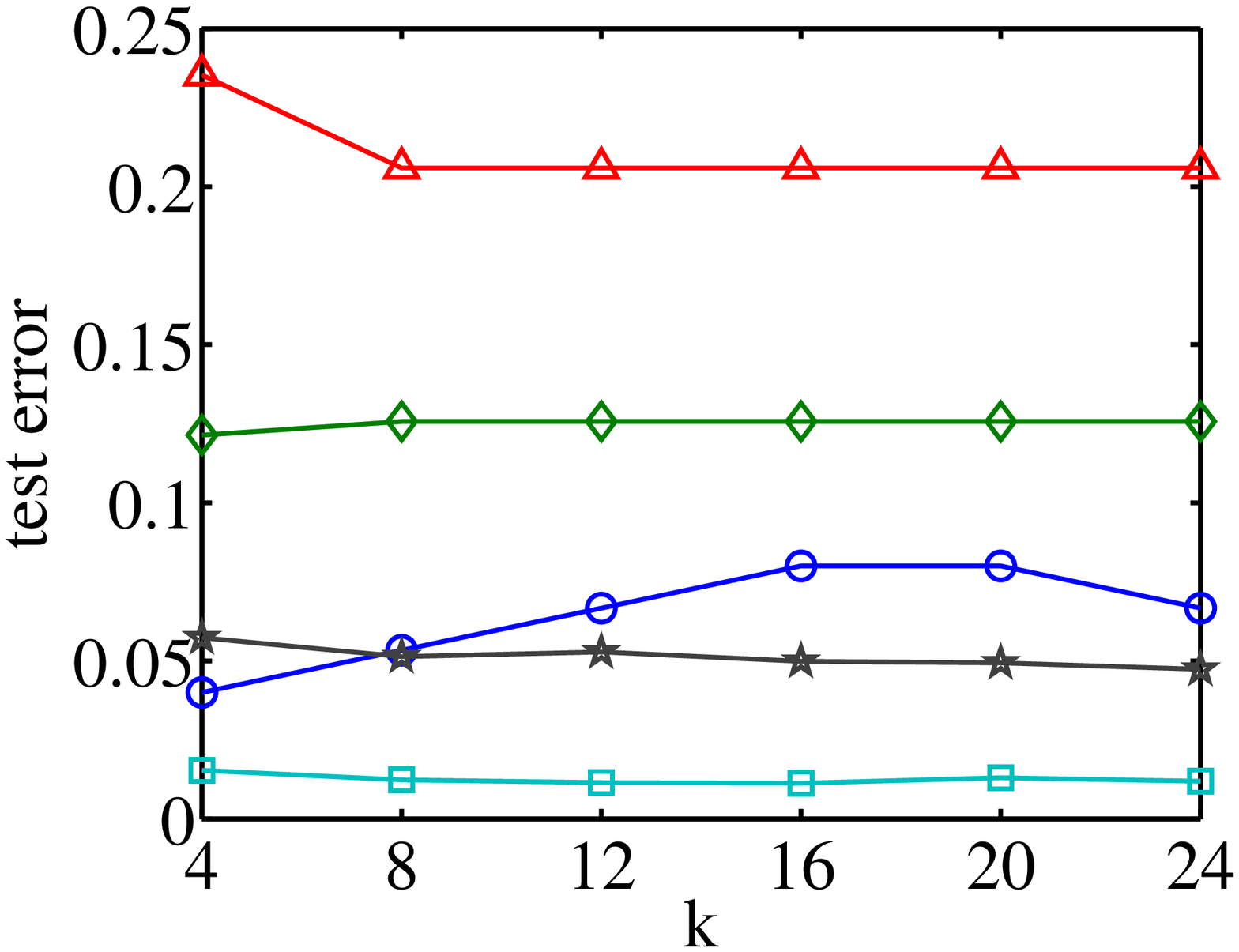}} }

\caption{The influence of the three parameters of nRBFN, $\lambda$, $t$, and $k$, on the classification performance.}\label{fig:para}
\end{figure*}

\subsubsection{nRBFN v.s. Other Classification Algorithms}
In this subsection, we compare the classification performance of nRBFN with some other
algorithms. The results are shown in Table~\ref{tab:srbf vs others}. The involved algorithms include: (1) KNN (k=20), (2) LRC ($\lambda$ is
selected via 5-fold cross validation over
$10^{\{-13,-12,\dots,-2\}}$), (3) ROLS (regularized orthogonal least squares algorithm for RBFN) \cite{Chen1996Regularized} (a classical incremental learning method for basis selection, $\lambda$ is
selected using the same scheme as nRBFN, $\sigma$ and basis size are provided by nRBFN), (4) RBFNnl (RBFN from Netlab toolbox) \cite{Nabney2003Algorithms} (a traditional RBFN that
finds the basis via clustering--gaussian mixture model, without regularization, the width of
Gaussian is set as the maximum distance between the basis vectors,
bias parameters are included, basis size follows nRBFN), (5) nRNwr (nRBFN without regularization, using the same
basis as nRBFN), (6) nRNrb (nRBFN with basis chosen randomly from the
training set, basis size follows nRBFN), (7) SVM \cite{Chang2007LIBSVM} (Gaussian
kernel, the weight $C$ is selected over $2^{\{-1,0,\dots,12\}}$ and $\sigma$ is
selected over $1.4^{\{-4,-3,\dots,4\}}\times \sigma$ of nRBFN via
5-fold cross validation). Except RBFNnl and SVM, the others are
implemented by us using MATLAB.

\begin{table*}[htp]
\caption{Comparing test error (\%) of nRBFN with some algorithms on benchmark data sets. ``size'' denotes data dimension
$\times$ (\#training samples + \#test samples).}\label{tab:srbf vs
others}
\begin{center}
\begin{scriptsize} %
\begin{tabular}{|c|c|c|*{8}{|c}|}
\hline data & size & \#classes & KNN & LRC & ROLS & RBFNnl & nRNwr &
nRNrb & SVM & nRBFN\\\hline\hline


iris & 4$\times$(75+75) & 3 & 5.3 & 18.7 & \tb{4.0} & 5.3 & 6.7 & 5.3 & 5.3 & 5.3\\\hline

wdbc & 30$\times$(285+284) & 2 & 6.3 & 6.3 & 8.5 & 4.9 & 7.7 & 5.6 &
\tb{4.6} & 4.9\\\hline

glass & 9$\times$(109+105) & 6 & 39.0 & 43.8 & 36.2 & 67.6 & 50.5 & 36.2 &
\tb{31.4} & 38.1\\\hline

sonar & 60$\times$(105+103) & 2 & 39.8 & 23.3 & \tb{18.4} & 22.3 & 21.4 &
\tb{18.4} & \tb{18.4} & \tb{18.4}\\\hline

wine & 13$\times$(90+88) & 3 & 33.0 & 3.4 & \tb{1.1} & 5.7 & 6.8 & \tb{1.1} &
2.3 & \tb{1.1}\\\hline

colon & 2000$\times$(31+31) & 2 & 35.5 & \tb{16.1} & \tb{16.1} & \tb{16.1} &
\tb{16.1} & \tb{16.1} & \tb{16.1} & \tb{16.1}\\\hline

leukemia & 7129$\times$(38+34) & 2 & 41.2 & \tb{17.6} & 20.6 & \tb{17.6} &
20.6 & 20.6 & 23.5 & 20.6\\\hline

ORL & 1024$\times$(200+200) & 40 & 54.0 & \tb{7.5} & 8.5 & 10.0 & 8.5 &
8.5 & 10.0 & 8.5\\\hline

AR & 1024$\times$(700+700) & 100 & 69.4 & \tb{6.9} & 11.3 & 9.9 & 12.6 &
11.3 & 26.0 & 11.3\\\hline

YaleB & 1024$\times$(1209+1205) & 38 & 46.1 & 1.6 & \tb{1.2} & 1.5 & \tb{1.2} &
\tb{1.2} & 3.2 & \tb{1.2}\\\hline

TDT2 & 36771$\times$(4703+4691) & 30 & 1.2 & - & - & \tb{0.9} & 98.5 &
1.0 & 1.7 & 1.3\\\hline

20news & 26214$\times$(11314+7532) & 20 & 24.0 & - & - & 15.0 & 77.5 &
14.6 & 15.1 & \tb{14.3}\\\hline

USPS & 256$\times$(7291+2007) & 10 & 8.2 & 12.6 & - & 5.0 & 5.0 & 5.2 &
\tb{4.6} & 4.9\\\hline

MNIST & 784$\times$(60000+10000) & 10 & 3.8 & 13.4 & - & 2.0 & 1.9 & 2.0
& \tb{1.5} & 1.8\\\hline

\end{tabular}
\end{scriptsize}
\end{center}
\end{table*}

The best scores spread across the table. It is hard for a method to dominate over all others on such data sets of diverse nature. However, when we compare them pair-wise, the advantage of nRBFN becomes prominent.

(1) As a linear model, the performance of LRC is limited, as the ideal graph condition implies. It mainly preforms well on high-dimensional data where $p\geq n$, since in this case the rank of row space is full. Results on TDT2 and 20news are absent, since it fails to run on such big data.

(2) The results of nRNwr are inferior to nRBFN, confirming the importance of regularization.

(3) nRBFN generally outperforms ROLS, RBFNnl, and nRNrb, showing the effectiveness of the basis selection strategy of nRBFN. ROLS is resource-consuming, it is unable to run on big data sets. Note that it is hard for RBFNnl and nRNrb themselves to determine a suitable basis size. The strategy of nRBFN, which implicitly controls the size by a user-friendly confidence threshold, makes it much easier.

(4) nRBFN generally obtains better results than SVM on these data sets. Since performance depends on the basis sizes, we list them in Table~\ref{tab:basis size}. The support vectors of SVM are found automatically, we note on the first ten smaller data sets, the bases found by nRBFN when setting $t=0.9$ have sizes roughly consistent with those of SVM. It means the basis size has been determined properly according to the complexity of data distribution: when the classes overlap more, more points will be selected as basis, and vice versa. When setting $t=0.9$, most of the uncertain points helpful for determining the classification boundaries have been included. For human face images, it is well-known that the data are clustered according to lightening, expressions, poses, rather than identity. For example, a left lightening cluster may include images from all identities. Thus, almost all points serve as basis. On the four larger sets, the bases obtained by nRBFN are more economical than those of SVM, however, the test error are not necessarily worse.

\begin{table}[htp]
\begin{minipage}[b]{0.48\linewidth}
\caption{A comparison of the basis sizes of SVM and nRBFN ($t=0.9$). $n$ is the size of training set. ``basis (\%)'' denotes the proportion of the basis vectors with respect to the training set.}\label{tab:basis size}
\vskip 0.15in
\vskip -0.1in
\begin{center}
\vskip -0.1in
\begin{scriptsize} %
\begin{tabular}{|c||c||C{0.5cm}|C{0.8cm}||C{0.5cm}|C{0.8cm}|}
\hline \multirow{2}{*}{data} & \multirow{2}{*}{$n$}

& \multicolumn{2}{c||}{basis (\%)} & \multicolumn{2}{c|}{test error (\%)} \\\cline{3-6}

& & SVM & nRBFN & SVM & nRBFN \\\hline\hline

iris & 75 & \tb{36.0} & 42.7 & 5.3 & 5.3\\\hline

wdbc & 285 & \tb{23.2} & 25.6 & \tb{4.6} & 4.9\\\hline

glass & 109 & \tb{74.3} & 92.7 & \tb{31.4} & 38.1\\\hline

sonar & 105 & \tb{59.0} & 100.0 & 18.4 & 18.4\\\hline

wine & 90 & \tb{55.6} & 82.2 & 2.3 & \tb{1.1}\\\hline

colon & 31 & \tb{90.3} & 100.0 & 16.1 & 16.1\\\hline

leukemia & 38 & \tb{86.8} & 100.0 & 23.5 & \tb{20.6}\\\hline

ORL & 200 & \tb{99.5} & 100.0 & 10.0 & \tb{8.5}\\\hline

AR & 700 & \tb{99.1} & 100.0 & 26.0 & \tb{11.3}\\\hline

YaleB & 1209 & \tb{95.3} & 100.0 & 3.2 & \tb{1.2}\\\hline

TDT2 & 4703 & 70.3 & \tb{27.8} & 1.7 & \tb{1.3}\\\hline

20news & 11314 & 79.5 & \tb{71.7} & 15.1 & \tb{14.3}\\\hline

USPS & 7291 & 32.5 & \tb{19.9} & \tb{4.6} & 4.9\\\hline

MNIST & 60000 & 30.4 & \tb{14.6} & \tb{1.5} & 1.8\\\hline
\end{tabular}
\end{scriptsize}
\end{center}
\vskip -0.1in

\end{minipage}%
  \hspace{2mm}
\begin{minipage}[b]{0.47\linewidth}
  \caption{nRBFN v.s. SVM. nRBFN has the same basis size as SVM. ``basis (\%)'' denotes the proportion of support vectors. The time cost includes the training and testing
time.}\label{tab:srbf vs svm}
\vskip -0.1in
\begin{center}
\vskip -0.1in
\begin{scriptsize} %
\begin{tabular}{|c||c||C{0.5cm}|C{0.8cm}||C{0.7cm}|C{0.8cm}|}
\hline \multirow{2}{*}{data} & \multirow{2}{*}{basis(\%)}

& \multicolumn{2}{c||}{test error (\%)} & \multicolumn{2}{c|}{time
cost (s)} \\\cline{3-6}

& & SVM & nRBFN & SVM & nRBFN \\\hline\hline

iris  & 36.0 & 5.3 & 5.3 & - & -\\\hline
wdbc  & 23.2 & \tb{4.6} & 4.9 & - & -\\\hline
glass  & 74.3 & \tb{31.4} & 37.1 & - & -\\\hline
sonar  & 59.0 & 18.4 & 18.4 & - & -\\\hline
wine  & 55.6 & 2.3 & \tb{1.1} & - & -\\\hline
colon  & 90.3 & 16.1 & 16.1 & \tb{$<$0.1} & 0.1\\\hline
leukemia  & 86.8 & 23.5 & \tb{17.6} & 0.2 & \tb{0.1}\\\hline
ORL  & 99.5 & 10.0 & \tb{8.0} & 0.6 & \tb{0.1}\\\hline
AR  & 99.1 & 26.0 & \tb{11.1} & 6.7 & \tb{0.8}\\\hline
YaleB  & 95.3 & 3.2 & \tb{1.4} & 19.5 & \tb{1.4}\\\hline
TDT2  & 70.3 & 1.7 & \tb{0.9} & 77.2 & \tb{16.5}\\\hline
20news  & 79.5 & 15.1 & \tb{14.3} & 263.1 & \tb{82.3}\\\hline
USPS  & 32.5 & \tb{4.6} & 4.7 & \tb{18.4} & 24.0\\\hline
MNIST  & 30.4 & \tb{1.5} & 1.6 & \tb{1289.1} & 2455.3\\\hline

\end{tabular}
\end{scriptsize}
\end{center}
\vskip -0.1in
\end{minipage}

\vskip -0.2in
\end{table}

\subsubsection{nRBFN v.s. SVM with the Same Basis Size}
Next, for a fair comparison between nRBFN and SVM, we let the basis size of nRBFN to be equal to the size of support vectors. This is done by choosing the specific number of samples with the lowest confidence as the basis. The other parameters are set as before. The results are shown in Table~\ref{tab:srbf vs svm}. In this test, nRBFN performs even better, and the time cost is comparable to that of SVM. Note that, the time cost does not include the part of cross validation. SVM usually needs to run hundreds of times during cross validation, while nRBFN only runs a dozen times.

\subsubsection{nRBFN with fixed parameters v.s. SVM}
Finally, we test the performance of nRBFN with a fixed set of parameters ($\lambda=10^{-13}$, $t=0.9$, $k=20$). In this case, nRBFN involves no model selection, but the results are still not far from those of SVM, as Table~\ref{tab:srbf fix lambda vs svm} shows.


\subsection{Empirical Evaluation of Error and Risk}
In this section, we empirically evaluate the fitting error and spectral risk of nRBFN and LRC, and investigate their influence on the performance.

\begin{table}[htp]
\begin{minipage}[b]{0.275\linewidth}
\caption{SVM v.s. nRBFN with fixed parameters ($\lambda=10^{-13}$, $t=0.9$, $k=20$).}\label{tab:srbf fix lambda vs svm}
\vskip -0.1in
\begin{center}
\vskip -0.1in
\begin{scriptsize} %
\begin{tabular}{|c||c|c|}
\hline
data & SVM & nRBFN \\\hline\hline
iris & \tb{5.3} & 8.0 \\\hline
wdbc & \tb{4.6} & 5.3 \\\hline
glass & \tb{31.4} & 35.2 \\\hline
sonar & \tb{18.4} & 21.4 \\\hline
wine & 2.3 & \tb{1.1} \\\hline
colon & 16.1 & 16.1 \\\hline
leukemia & 23.5 & \tb{20.6} \\\hline
ORL & 10.0 & \tb{8.5} \\\hline
AR & 26.0 & \tb{12.6} \\\hline
YaleB & 3.2 & \tb{1.2} \\\hline
TDT2 & 1.7 & \tb{1.3} \\\hline
20news & \tb{15.1} & 15.2 \\\hline
USPS & \tb{4.6} & 4.9 \\\hline
MNIST & \tb{1.5} & 1.8 \\\hline
\end{tabular}
\end{scriptsize}
\end{center}
\vskip -0.1in

\end{minipage}%
  \hspace{2mm}
\begin{minipage}[b]{0.55\linewidth}
\caption{The effect of regularization in terms of error-and-risk: nRNwr (nRBFN without regularization) v.s. nRBFN. The data sets on which significant changes occur are marked with boldface.}\label{tab:srbfwr vs srbf}
\vskip 0.1in
\vskip -0.1in
\begin{center}
\vskip -0.1in
\begin{scriptsize} %
\begin{tabular}{|c||c||c|c||c|c||c|c|}
\hline \multirow{2}{*}{data} & \multirow{2}{*}{$\lambda$}

& \multicolumn{2}{c||}{fitting error} & \multicolumn{2}{c||}{spectral risk} & \multicolumn{2}{c|}{test error (\%)}\\\cline{3-8}

&  & nRNwr & nRBFN & nRNwr & nRBFN & nRNwr & nRBFN\\\hline\hline

\tb{iris} & 1e-09 & 1.022 & 1.044 & 9.7e+09 & 2.5e+06 & 6.7 & 5.3\\\hline
\tb{wdbc} & 1e-09 & 1.045 & 1.106 & 2.6e+15 & 2.4e+06 & 7.7 & 4.9\\\hline
\tb{sonar} & 1e-05 & 1.000 & 1.191 & 1.5e+06 & 5.5e+03 & 21.4 & 18.4\\\hline
colon & 1e-05 & 1.000 & 1.001 & 7.2e+02 & 6.0e+02 & 16.1 & 16.1\\\hline
leukemia & 1e-05 & 1.000 & 1.001 & 1.2e+03 & 1.0e+03 & 20.6 & 20.6\\\hline
ORL & 1e-09 & 1.000 & 1.000 & 5.5e+05 & 5.4e+05 & 8.5 & 8.5\\\hline
\tb{AR} & 1e-09 & 1.000 & 1.014 & 9.2e+07 & 2.5e+07 & 12.6 & 11.3\\\hline
YaleB & 1e-13 & 1.000 & 1.000 & 1.4e+09 & 8.2e+08 & 1.2 & 1.2\\\hline
USPS & 1e-13 & 1.056 & 1.056 & 4.1e+08 & 4.1e+08 & 5.0 & 4.9\\\hline
MNIST & 1e-13 & 1.064 & 1.064 & 2.4e+10 & 1.6e+10 & 1.9 & 1.8\\\hline
\end{tabular}
\end{scriptsize}
\end{center}
\vskip -0.1in
\end{minipage}

\vskip -0.2in
\end{table}

\subsubsection{The Effect of Regularization: nRNwr v.s. nRBFN}
We evaluate the effect of regularization through comparing nRNwr (nRBFN without regularization) and nRBFN. The results are shown in Table~\ref{tab:srbfwr vs srbf}. Due to rank deficiency ($\rank(\tilde{W})<r$), results of nRNwr on glass, wine, TDT2, and 20news are erroneous and not shown. According to the table, on the sets marked with bold face, regularization significantly reduces the spectral risk, mostly by orders of magnitude, while the fitting error is increased slightly, but not yet sacrificed much. On all these sets, the test error have been reduced. This demonstrates the effectiveness of regularization. On the other data sets, regularization affects not much. Since $\lambda$ has been selected for optimal performance, it implies that on these data sets the found row-subspaces by nRNwr probably contain discriminative features rather than noise. Note that, most of these data sets have large sample sizes, indicating sufficient sampling. Even though, regularization does not provide better result in this case, it does not undermine the performance--so long as $\lambda$ is set properly. Considering its capability in dealing with both insufficient and sufficient sampling cases, regularization should be applied.

\subsubsection{LRC v.s. nRBFN}
We compare the fitting error ($\epsilon$), spectral risk ($\gamma$), the tradeoff ($\epsilon+\lambda\gamma$), and the test error between LRC and nRBFN, and study their influence on the performance. The results are shown in Table~\ref{tab:lrc vs srbf}.

Theoretically, two factors act together contributing to the performance of nRBFN/LRC. One is the rank of basis/data row-space. Despite of the idea graph condition, larger rank implies better fitting capacity. For nRBFN, this can be controlled via $t$. In the extreme case of full rank, zero fitting error can be achieved (although spectral risk is not guaranteed). The other factor is the ideal graph condition. Despite that the rank may be low, so long as the condition is nearly met, both the fitting error and spectral risk will be low.

We analyze Table~\ref{tab:lrc vs srbf} by two parts. For the data sets from colon to AR, LRC performs equally or better than nRBFN. These data sets are distinct from the others in that the ranks of LRC and nRBFN are exactly the same, and are full. Since both the fitting error and spectral risk depend on the rank, the comparison between LRC and nRBFN is thus fair. The lower test error of LRC can be attributed to the lower spectral risk. Although nRBFN has lower fitting error, it has higher risk of overfitting. Note that, these data sets consist of the gene data and face images. For nRBFN, on these sets all samples are recruited as basis, indicating the classes overlap heavily. Consequently the ideal graph condition will not be met well and the spectral risk can be high. In addition, although the fitting error of LRC seem higher, we found that on these four data sets (plus YaleB) the classification error of both LRC and nRBFN are uniformly zero. All these support the lower test error of LRC.

For the data sets marked with boldface, nRBFN performs better. This can be attributed to the higher ranks and lower fitting error. As to the higher spectral risk, first, the spectral risk is proportionally related to the rank. Second, higher value of spectral risk does not mean error actually happened, but a warning is signalled. The found subspace may be noise or discriminative features. Third, the cross validation determined a much smaller weight to offset the spectral risk, so that the product $\lambda\gamma$ is generally below the magnitude of $10^{-2}$ for nRBFN ($10^{-1}$ for LRC). In contrast to the fitting error that is above 1, the weighted risk is quite small, and the tradeoff $\epsilon+\lambda\gamma\approx \epsilon$. On the one hand, it implies that the performance is overwhelmed by the fitting error. On the other hand, the large offset of nRBFN suggests that the risk warning has been ignored. The cross validation learnt that, after regularization, the warning does no harm, the found subspace was judged to be discriminative features.
These analyses also apply to the previous four data sets, since the margins of test error are not large. We conclude nRBFN performs better overall. This is due to the consistently lower fitting error and not severe overfitting.

\begin{table*}[htp]
\caption{Comparing the fitting error ($\epsilon$), spectral risk ($\gamma$), the tradeoff ($\epsilon+\lambda\gamma$), and the test error between LRC and nRBFN. ``rank'' denotes the rank of data row-space/basis, divided by $n$. Smaller values are marked with boldface.}\label{tab:lrc vs srbf}
\begin{center}
\begin{scriptsize} %
\begin{tabular}{|c||C{0.6cm}|C{0.8cm}||C{0.5cm}|C{0.8cm}||c|C{0.8cm}||C{0.6cm}|C{0.8cm}||C{0.5cm}|C{0.8cm}||C{0.5cm}|C{0.8cm}|}
\hline \multirow{2}{*}{data}

& \multicolumn{2}{c||}{rank (\%)} & \multicolumn{2}{c||}{fitting error ($\epsilon$)} & \multicolumn{2}{c||}{$\lambda$} & \multicolumn{2}{c||}{spectral risk ($\gamma$)}& \multicolumn{2}{c||}{$\epsilon+\lambda\gamma$} & \multicolumn{2}{c|}{test error (\%)}\\\cline{2-13}

& LRC & nRBFN & LRC & nRBFN & LRC & nRBFN & LRC & nRBFN & LRC & nRBFN & LRC & nRBFN\\\hline\hline

\tb{iris} & \tb{6.7} & 42.7 & 1.39 & \tb{1.04} & 1e-04 & \tb{1e-09} & \tb{9e+00} & 3e+06 & 1.39 & \tb{1.05} & 18.7 & \tb{5.3}\\\hline
\tb{wdbc} & \tb{10.9} & 25.6 & 3.38 & \tb{1.11} & 1e-04 & \tb{1e-09} & \tb{1e+03} & 2e+06 & 3.49 & \tb{1.11} & 6.3 & \tb{4.9}\\\hline
\tb{glass} & \tb{28.4} & 92.7 & 2.09 & \tb{1.24} & 1e-04 & \tb{1e-09} & \tb{1e+02} & 2e+07 & 2.10 & \tb{1.26} & 43.8 & \tb{38.1}\\\hline
\tb{sonar} & \tb{29.5} & 100.0 & 1.29 & \tb{1.19} & 1e-03 & \tb{1e-05} & \tb{5e+01} & 5e+03 & 1.33 & \tb{1.25} & 23.3 & \tb{18.4}\\\hline
\tb{wine} & \tb{34.4} & 82.2 & 1.10 & \tb{1.07} & 1e-08 & \tb{1e-13} & \tb{3e+05} & 1e+11 & 1.10 & \tb{1.08} & 3.4 & \tb{1.1}\\\hline
colon & 100.0 & 100.0 & 1.46 & \tb{1.00} & 1e-03 & \tb{1e-05} & \tb{3e+02} & 6e+02 & 1.72 & \tb{1.01} & 16.1 & 16.1\\\hline
leukemia & 100.0 & 100.0 & 1.16 & \tb{1.00} & 1e-04 & \tb{1e-05} & 2e+03 & \tb{1e+03} & 1.40 & \tb{1.01} & \tb{17.6} & 20.6\\\hline
ORL & 100.0 & 100.0 & 1.44 & \tb{1.00} & 1e-04 & \tb{1e-09} & \tb{3e+03} & 5e+05 & 1.76 & \tb{1.00} & \tb{7.5} & 8.5\\\hline
AR & 100.0 & 100.0 & 1.42 & \tb{1.01} & 1e-05 & \tb{1e-09} & \tb{2e+04} & 2e+07 & 1.63 & \tb{1.04} & \tb{6.9} & 11.3\\\hline
\tb{YaleB} & \tb{84.8} & 100.0 & 1.14 & \tb{1.00} & 1e-05 & \tb{1e-13} & \tb{1e+04} & 8e+08 & 1.25 & \tb{1.00} & 1.6 & \tb{1.2}\\\hline
\tb{USPS} & \tb{3.5} & 19.9 & 1.42 & \tb{1.06} & 1e-04 & \tb{1e-13} & \tb{1e+02} & 4e+08 & 1.43 & \tb{1.06} & 12.6 & \tb{4.9}\\\hline
\tb{MNIST} & \tb{1.3} & 14.6 & 2.15 & \tb{1.06} & 1e-03 & \tb{1e-13} & \tb{6e+01} & 2e+10 & 2.21 & \tb{1.07} & 13.4 & \tb{1.8}\\\hline

\end{tabular}
\end{scriptsize}
\end{center}
\end{table*}

\subsubsection{How Error and Risk of nRBFN Change as $\lambda$ and $t$ Vary}\label{sec:risk lambda t}
The setting is the same as Section~\ref{sec:para}, now we focus on the fitting error and spectral risk. The results are shown in Figure~\ref{fig:risk lambda} and Figure~\ref{fig:risk t}.

We observe from Figure~\ref{fig:risk lambda} that: (a) As $\lambda$ decreases, the fitting error steadily decrease. This is as expected. What deserves notice is that the error of the five data sets coincidentally converge at the point $\lambda=10^{-8}$, and reach a plateau of around 1 thereafter. This is consistent with the test error in Figure~\ref{fig:para:lambda}. The convergences confirm the law of $\lambda$: as a rule of thumb, setting $\lambda<10^{-8}$ generally delivers near-optimal result of nRBFN. (b) For most data sets, after $\lambda<10^{-12}$, there exists a fairly stable converging range of spectral risk: $10^{8\sim10}$. This also holds for most other data sets not shown, especially large data sets. The cases of the two gene data are abnormal, they inherently have low spectral risk, variations can be observed only when $\lambda$ becomes large. (c), (d) The weighted spectral risk ($\lambda\gamma$) and its proportion to the tradeoff converge after some points of $\lambda$. They are not comparable to the fitting error.

Next, Figure~\ref{fig:risk t} shows that: (a) As $t$ increases, the fitting error steadily decrease, as expected.
At $t=1$, all error converge close to 1, due to full ranks of the similarity matrices. The fitting capacity of nRBFN is a matter of choice. At $t=0.9$, the error have reached a suitable level, so setting $t=0.9$ as the default value meets general situation. (b) We observe that the spectral risk again converges to the range of $10^{8\sim10}$. The convergences of big data sets are smoother than those of smaller ones. (c), (d) The weighted spectral risk is again not comparable to the fitting error.

It should be noted that the definitions of relative measures are essential, the above laws would disappear, if we simply use the absolute measures.

\begin{figure*}[htb]
\center{
\subfigure[]{\label{fig:risk lambda:eps}
\includegraphics[width=3.6cm]{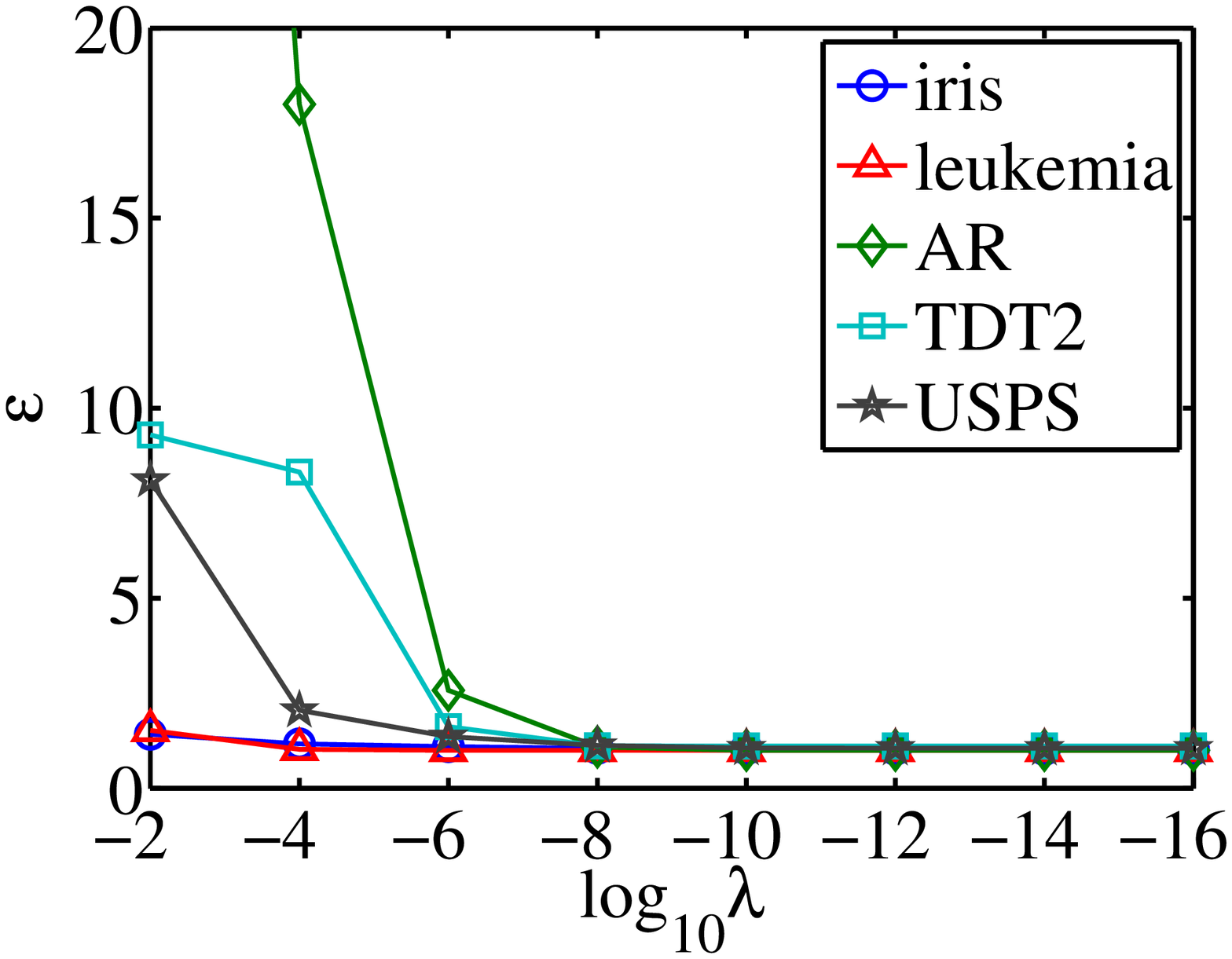}}
\subfigure[]{\label{fig:risk lambda:risk}
\includegraphics[width=3.6cm]{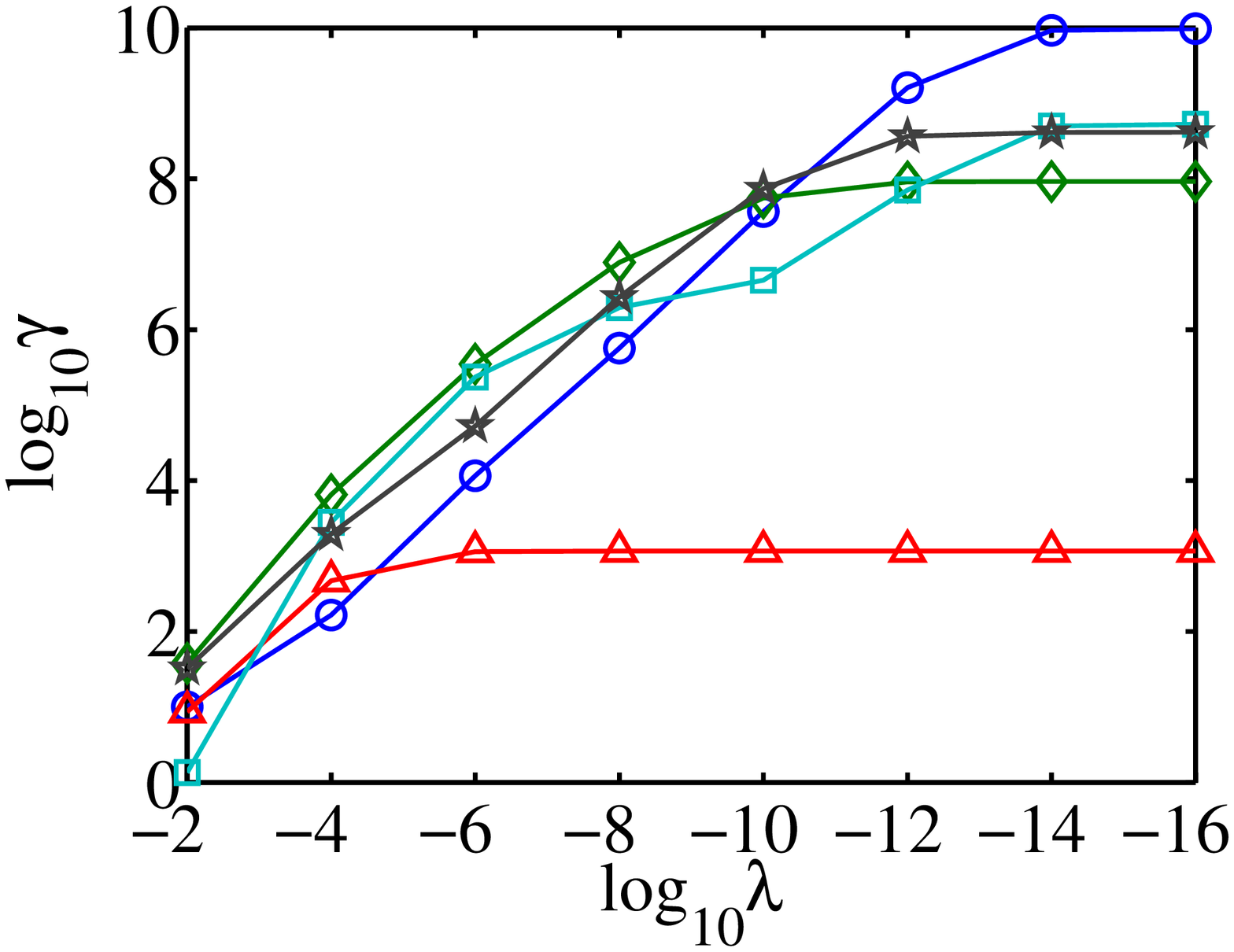}}
\subfigure[]{\label{fig:risk lambda:lamrisk}
\includegraphics[width=3.6cm]{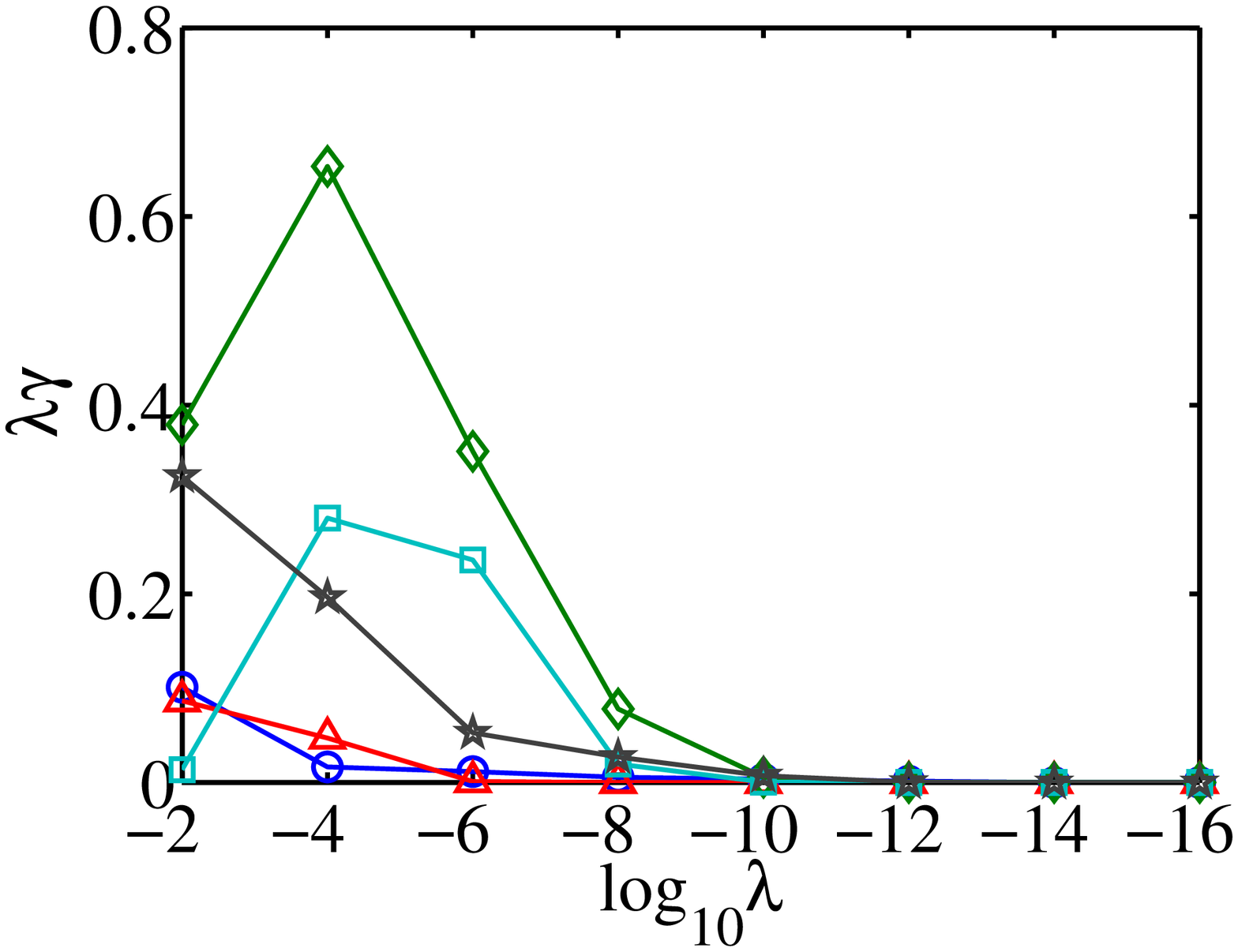}}
\subfigure[]{\label{fig:risk lambda:tra}
\includegraphics[width=3.6cm]{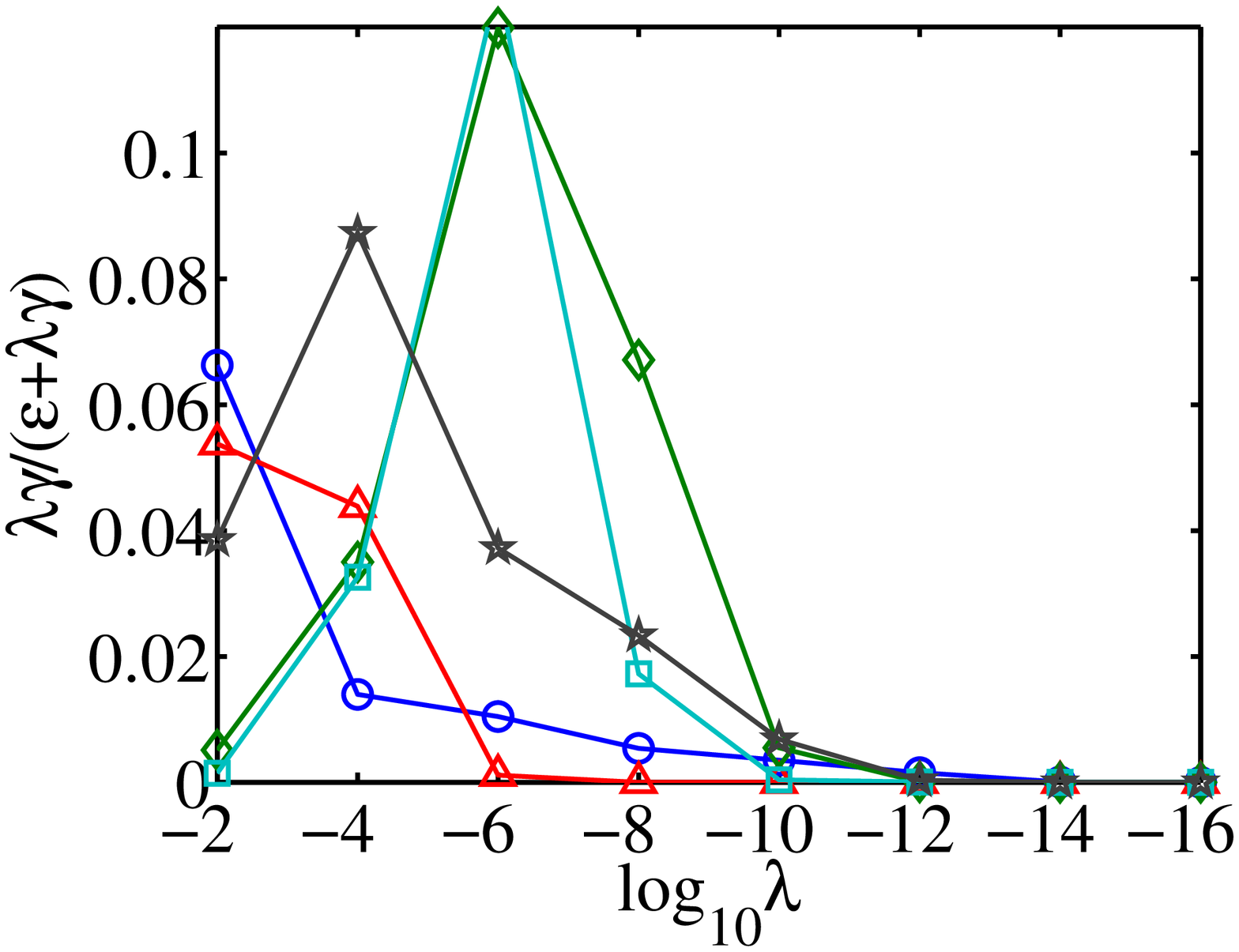}} }
\caption{Changes of the error and risk as $\lambda$ varies.}\label{fig:risk lambda}
\end{figure*}

\begin{figure*}[htb]
\center{
\subfigure[]{\label{fig:risk t:eps}
\includegraphics[width=3.6cm]{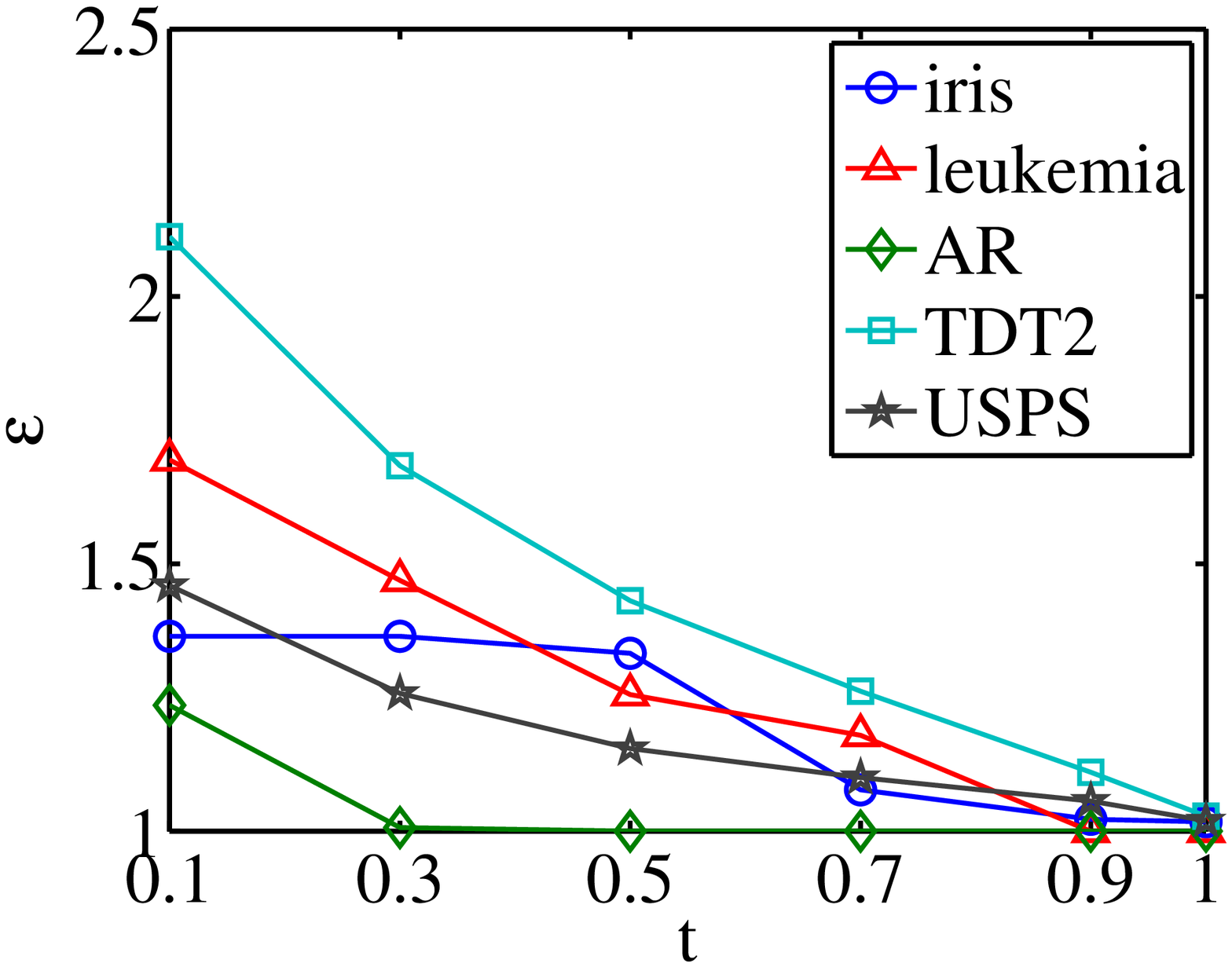}}
\subfigure[]{\label{fig:risk t:risk}
\includegraphics[width=3.6cm]{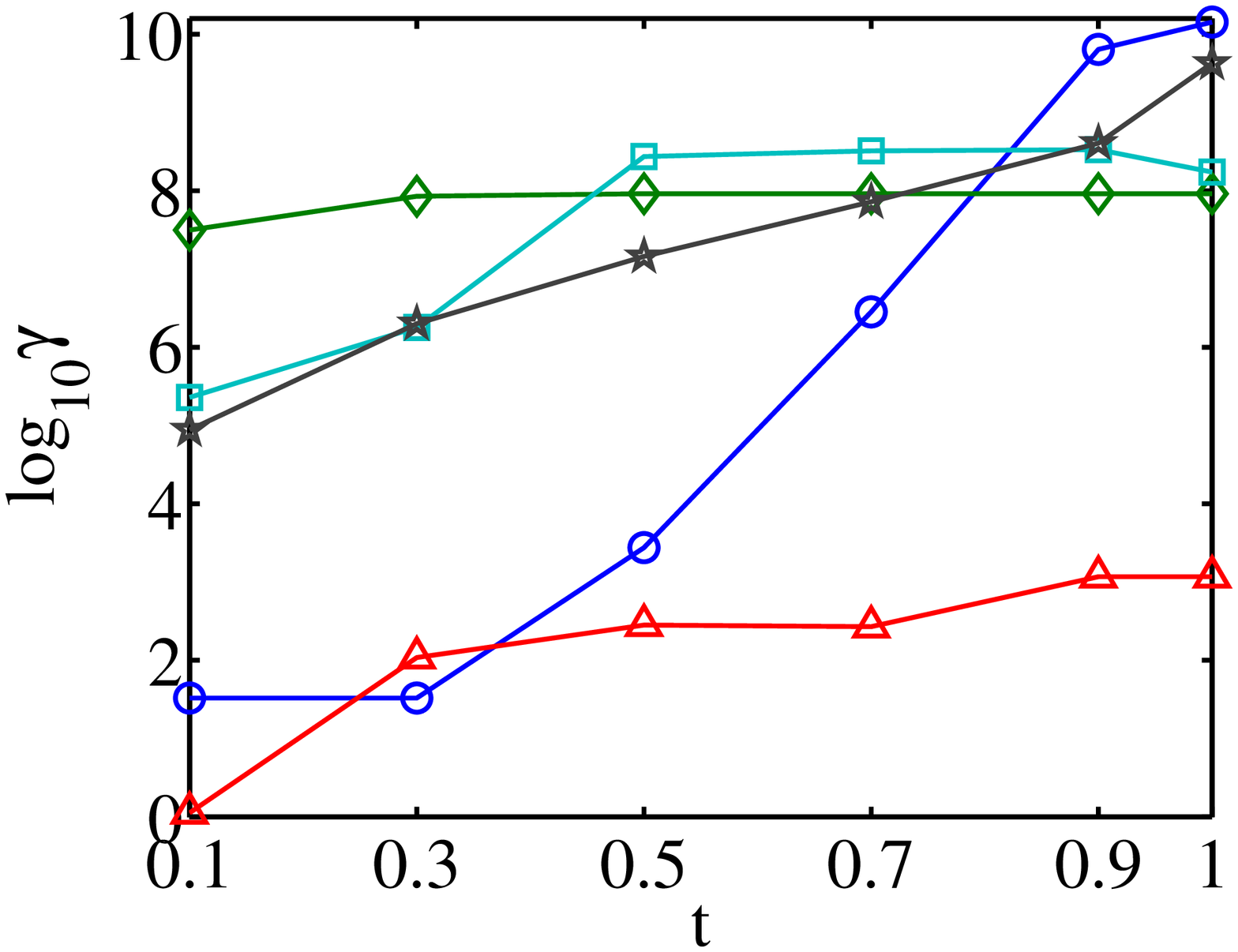}}
\subfigure[]{\label{fig:risk t:lamrisk}
\includegraphics[width=3.6cm]{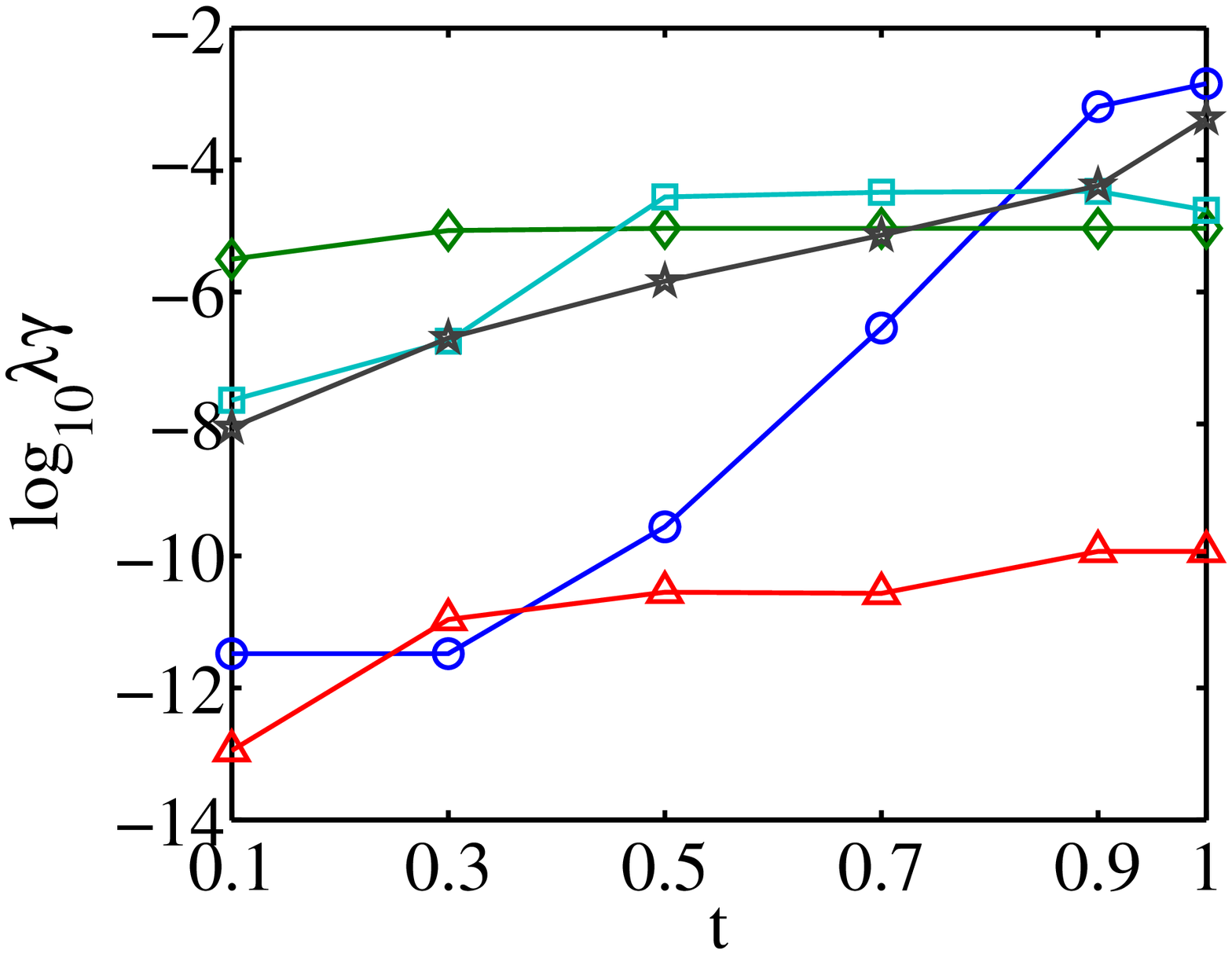}}
\subfigure[]{\label{fig:risk t:tra}
\includegraphics[width=3.6cm]{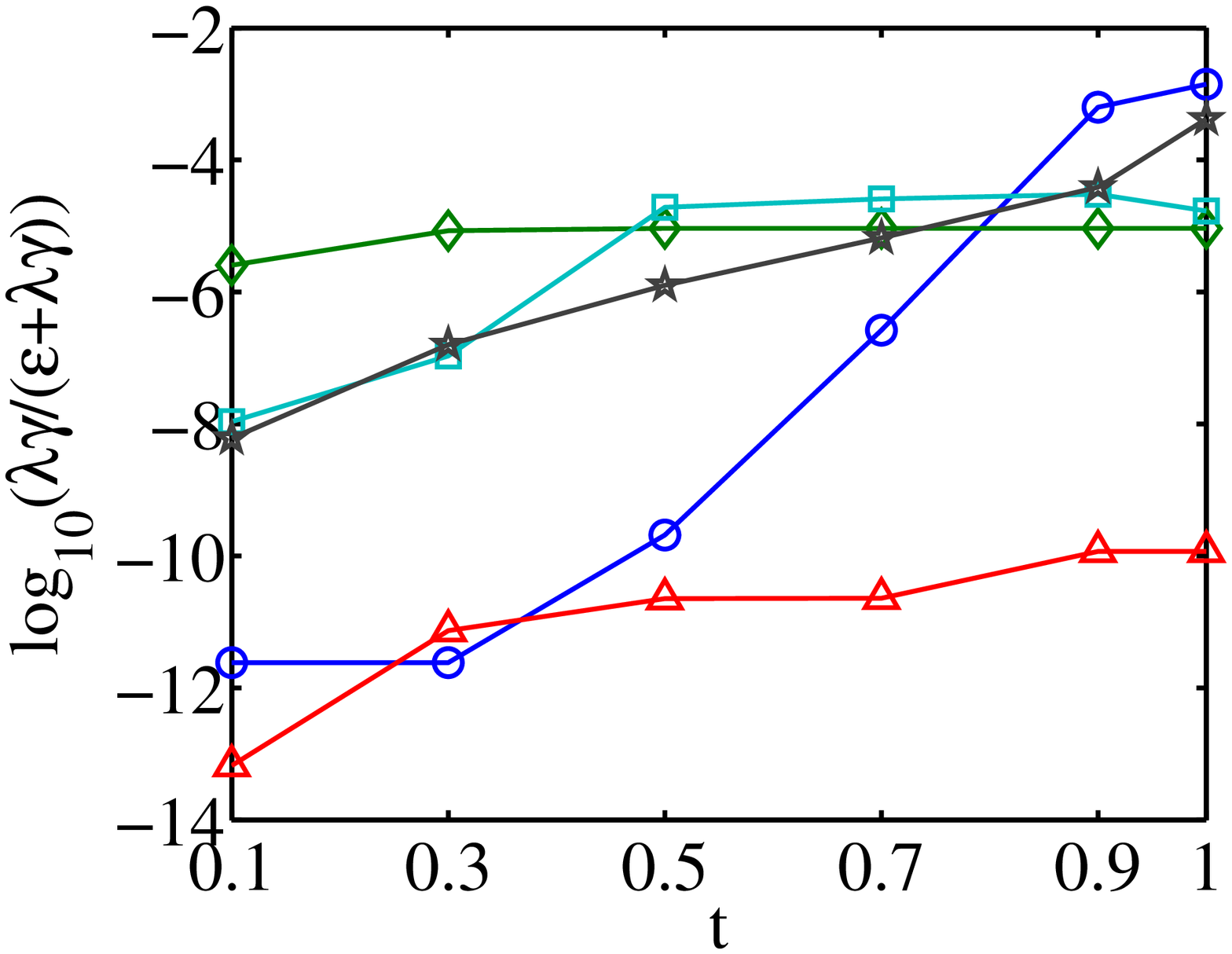}} }
\caption{Changes of the error and risk as $t$ varies.}\label{fig:risk t}
\end{figure*}

\section{Related Work}\label{sec:related work}
We discuss the related work of nRBFN, regularization, and generalization error.

1. nRBFN. nRBFN is initially mentioned by \cite{Moody1989Fast} and later derived by \cite{Specht1990Probabilistic,Specht1991A,Xu1994On} from probability density estimation and kernel regression. It is also closely related to Gaussian mixture model \cite{Tresp1993Network}. Comparing with RBFN, nRBFN has a distinctive feature: in regions far from the samples, the output of RBFN will vanish due to the localized property of radial basis function, while that of nRBFN will not due to the normalization \cite{Bugmann1998Normalized}. Consequently, nRBFN provides better smoothness. Meanwhile, the universal approximation capacity is preserved \cite{Xu1994On,Benaim1994On}. However, the connection of nRBFN to spectral graph theory is absent before.

2. Regularization. The most classical viewpoint to regularization is from the Bayesian probability, see e.g., \cite{bishop2006pattern}. The regularization term corresponds to a prior distribution of the weights, while the error term corresponds to the conditional distribution of the target. However, in the Bayesian view, the effect of regularization on reducing overfitting is not as obvious as the spectral view. Moreover, how to set the regularization weight is unclear, and the uniform case that regularization does not help is not easily observed. Another viewpoint to regularization is from the Tikhonov regularization theory that is based on functional analysis, see e.g., \cite{Poggio1990Networks}. In this view, the regularization term corresponds to a constraint imposing some smoothness on the approximating function.

3. Generalization error. Even though we are concerned with the generalization problem, this paper is limited to the study of overfitting risk. We did not investigate the problem of generalization error or expected risk.\footnote{In the literature, see, e.g., \cite{Niyogi1996On}, the ``risk'' of expected risk actually means error. A similar concept is empirical risk \cite{Vapnik2000The}, actually it is training/fitting error. It should not be confused with the ``risk'' of overfitting/spectral risk in this paper.} Both of these two concepts relate to the error of the approximating function with respect to the underlying data distribution. In classification application, they indicate the error of a classifier when dealing with new data. Results on this problem had been established both for RBFN \cite{Niyogi1996On,Krzyzak1996Nonparametric,Krzyzak1998Radial,Que2016Back} and nRBFN \cite{Xu1994On,Kegl2000Radial}. A typical result states that with probability greater than $1-\delta$, the generalization error of RBFN is upper bounded by $O(1/r)+O(\sqrt{(pr\log(nr)-\log \delta)/n})$ \cite{Niyogi1996On}.

\section{Future Work}\label{sec:future work}
We mention some limitations of the paper as well as future work worthwhile to do.

Concerning the performance improvements of nRBFN: (1) The relation between the ideal graph condition and the basis selection is not yet investigated. Our motivation to develop the basis selection strategy and parameter setting scheme is to demonstrate the practical performance of nRBFN and provide a baseline algorithm that is easy to use. Searching the optimal basis that is consistent with the theory is an important direction of future work. (2) Optimized or approximated search of nearest neighbors, e.g., \cite{Muja2014Scalable}, can be applied to address the bottleneck of speed improvement. (3) The basis can be further reduced, for there may be many boundary points highly overlapping. (4) Online basis learning can be considered \cite{Platt1991A}, where the basis can be increased, updated, or pruned. It will enable (n)RBFN to handle large scale data.

Concerning the theoretical investigations: (1) The error-and-risk analysis of nRBFN in the perturbation case does not depend on the normality of the columns of similarity matrix. This implies that the analysis can serve as a foundation for the analysis of the other models, e.g., RBFN \cite{Que2016Back}, ELM \cite{Huang2006Extreme}. Empirically, we found that when RBFN uses our basis selection strategy, it performs similarly to nRBFN and frequently even better. The perturbation analysis in this paper is limited to tiny noise. Extending the analysis from perturbation case to normal noise case has great practical significance. (2) How the two factors of basis size and ideal graph condition interact and contribute to the performance of nRBFN deserves further study. Many of the empirical laws observed in the experiments require explanations. (3) It will be interesting to compare the spectral risk with the VC dimension \cite{Vapnik2000The}, the error-and-risk tradeoff with the structural risk minimization \cite{Vapnik2000The}.

\appendix

\section{The Gap between $F$ and the Leading Row-subspace of $\tilde{W}$}\label{sec:gap}
Under the ideal graph condition, we know the row-space of $\tilde{W}$ contains $F$. Now we study how much $F$ deviates from the leading row-subspace and when the gap is closed.

For simplicity, we consider one class, since under the ideal graph condition the blocks of matrices of different classes are independent of each other, the largest singular vectors of $\tilde{W}$ consist of the largest singular vectors of each block. We use lower case symbols $w\in \mathbb{R}^{r_k\times n_k}$, $f$, $x$ to denote the nonzero blocks of the $k$th class. Note, $f$ is a uniform row vector of 1. The deviation can be measured using the idea of the $\ell_2$ operator norm \cite{golub1996matrix}:
\[
\psi\doteq \max_{y\neq \mb{0}} \frac{\|y\tilde{w}\|_2^2}{\|y\|_2^2}-\frac{\|x^*\tilde{w}\|_2^2}{\|x^*\|_2^2}\geq 0,
\]
On the one hand, by property of the $\ell_2$ operator norm, $\max_{y\neq \mb{0}} \|y\tilde{w}\|_2^2/\|y\|_2^2=\|u_1\tilde{w}\|_2^2/\|u_1\|_2^2=\|\sigma_1v_1\|_2^2/\|u_1\|_2^2=\sigma_1^2$, where $u_1$, $v_1$, and $\sigma_1$ denote the largest singular vectors and value of $\tilde{w}$. On the other hand, by Proposition~\ref{pro:X indicator}, $x^*$ is a uniform row vector of 1, and $x^*\tilde{w}=f$. Thus,
\[
\psi=\sigma_1^2-n_k/r_k.
\]
$\psi=0$ if and only if $\sigma_1^2=n_k/r_k$, i.e., $x^*$ and $f$ become the largest singular vectors. The remaining effort focuses on the estimation of $\sigma_1$. We have the following proposition:
\begin{proposition}
For any nonnegative matrix $\tilde{w}\in \mathbb{R}^{r_k\times n_k}$ $(r_k\leq n_k)$ with each column sum normalized to 1, assume $\tilde{w}$ is of full rank, then
\begin{enumerate}
  \item $n_k/r_k\leq \sigma_1^2 \leq z_{max}$, where $z_{max}$ is the maximal row sum of $\tilde{w}$.
  \item $n_k/r_k=\sigma_1^2$ if and only if the row sums are even, i.e., $z_{max}=n_k/r_k$.
\end{enumerate}
\end{proposition}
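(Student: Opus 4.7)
The plan is to prove the three assertions using standard matrix-norm inequalities together with the Rayleigh-quotient characterization $\sigma_1^2=\max_{y\neq\mb{0}}\|y\tilde{w}\|_2^2/\|y\|_2^2$. The key observations are that the column-sum condition means $\mb{1}^T\tilde{w}=\mb{1}^T$ (length $n_k$), and that $\tilde{w}\mb{1}$ equals the vector of row sums.

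First I would derive the upper bound $\sigma_1^2\leq z_{max}$ via the well-known inequality $\|\tilde{w}\|_2\leq \sqrt{\|\tilde{w}\|_1\|\tilde{w}\|_\infty}$, where $\|\cdot\|_1$ is the maximum absolute column sum and $\|\cdot\|_\infty$ is the maximum absolute row sum. Since $\tilde{w}$ is nonnegative with every column summing to 1, $\|\tilde{w}\|_1=1$, and $\|\tilde{w}\|_\infty=z_{max}$, giving $\sigma_1^2\leq z_{max}$.

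Next I would derive the lower bound $\sigma_1^2\geq n_k/r_k$ by plugging $y=\mb{1}^T\in \mathbb{R}^{1\times r_k}$ into the Rayleigh quotient. The numerator becomes $\|\mb{1}^T\tilde{w}\|_2^2=\|\mb{1}_{n_k}^T\|_2^2=n_k$ (using $\mb{1}^T\tilde{w}=\mb{1}^T$), and the denominator is $\|\mb{1}\|_2^2=r_k$, yielding the desired bound. For the characterization of equality, note that attainment of the maximum in the Rayleigh quotient forces $\mb{1}$ to lie in the eigenspace of $\tilde{w}\tilde{w}^T$ corresponding to $\sigma_1^2$; thus $\tilde{w}\tilde{w}^T\mb{1}=(n_k/r_k)\mb{1}$. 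Expanding the left side as $\tilde{w}(\tilde{w}^T\mb{1})=\tilde{w}\mb{1}_{n_k}$ wait — more directly, from $\mb{1}^T\tilde{w}\tilde{w}^T=\mb{1}_{n_k}^T\tilde{w}^T=(\tilde{w}\mb{1}_{n_k})^T$, the equality condition becomes: the row-sum vector equals $(n_k/r_k)\mb{1}$, i.e., all row sums equal $n_k/r_k$, which is precisely the ``even row sums'' condition (and consistent with the total sum being $n_k$). The converse is immediate: if all row sums equal $n_k/r_k$, then $\mb{1}$ is an eigenvector of $\tilde{w}\tilde{w}^T$ with eigenvalue $n_k/r_k$, realizing the lower bound as the maximum, so $\sigma_1^2=n_k/r_k$.

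The only mild obstacle is making the ``only if'' direction rigorous: Rayleigh-quotient attainment at $\mb{1}$ implies membership in the top eigenspace (not merely $\tilde{w}\tilde{w}^T\mb{1}=\sigma_1^2\mb{1}$ automatically), which uses the symmetric eigendecomposition of $\tilde{w}\tilde{w}^T$ and the fact that the Rayleigh quotient at a nonzero vector equals the top eigenvalue iff the vector lies entirely in the top eigenspace. The full-rank assumption is not actually needed for assertions 1 and 2 but ensures $\sigma_1>0$; no further subtlety arises.
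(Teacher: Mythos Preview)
Your proof is correct. It differs from the paper's in two places, and in both cases your route is more elementary.

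For the upper bound $\sigma_1^2\leq z_{max}$, the paper introduces the auxiliary matrix $\hat w=z^{-1/2}\tilde w$ (with $z$ the diagonal of row sums), invokes the fact from the surrounding spectral-graph framework that the top singular value of $\hat w$ equals $1$, and then bounds $\sigma_1^2=\sup_{y'}\|y'\hat w\|_2^2/\|y' z^{-1/2}\|_2^2\leq z_{max}\sup_{y'}\|y'\hat w\|_2^2/\|y'\|_2^2\leq z_{max}$. Your one-line use of the standard inequality $\|\tilde w\|_2^2\leq \|\tilde w\|_1\|\tilde w\|_\infty=1\cdot z_{max}$ achieves the same result without any auxiliary construction and without appealing to facts from elsewhere in the paper.

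For the ``only if'' direction of part 2, the paper argues that attainment at $x^*=\mb{1}$ makes $\mb{1}_{n_k}$ a top \emph{right} singular vector, whence $\sigma_1^2=\|\tilde w\,\mb{1}_{n_k}\|_2^2/n_k=\sum_i z_{ii}^2/n_k$, and then uses the QM--AM inequality (with equality iff the $z_{ii}$ are equal). Your computation $\tilde w\tilde w^T\mb{1}_{r_k}=\tilde w\,\mb{1}_{n_k}=(\text{row-sum vector})$ gives the conclusion in one step once you know $\mb{1}_{r_k}$ lies in the top eigenspace.

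One small clarification worth making explicit: in your converse direction (even row sums $\Rightarrow$ $\sigma_1^2=n_k/r_k$), knowing that $\mb{1}$ is an eigenvector with eigenvalue $n_k/r_k$ does not by itself force this to be the \emph{largest} eigenvalue; you need to combine it with the upper bound $\sigma_1^2\leq z_{max}=n_k/r_k$. You have all the pieces, but the phrase ``realizing the lower bound as the maximum'' would read more cleanly as ``so $n_k/r_k\leq\sigma_1^2\leq z_{max}=n_k/r_k$.''
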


\begin{proof}
1) The problem is resolved with the help of a closely related matrix, $\hat{W}\doteq Z^{-\frac{1}{2}}\tilde{W}$, where $Z$ is a diagonal matrix of the row sums of $\tilde{W}$. $\hat{W}$ is the component of a reduced Laplacian matrix, $\hat{L}\doteq I-\hat{W}^T\hat{W}$, that is also for dealing with the scalable problem of large graph construction \cite{Liu2010Large}. Under the ideal graph condition, $\tilde{W}$ is block-wise, so is the reduced similarity matrix $\hat{W}^T\hat{W}$, which implies $\hat{W}^T\hat{W}$ is ideal too. Therefore, $F$ is the smallest eigenvectors of $\hat{L}$ (eigenvalue 0), and equivalently, the largest right singular vectors of $Z^{-\frac{1}{2}}\tilde{W}$ (singular value 1). Denote $\hat{w}$ and $z$ to be the corresponding blocks of class $k$. We have
\[
\sigma_1^2=\sup_{y\neq \mb{0}} \frac{\|y\tilde{w}\|_2^2}{\|y\|_2^2}=\sup_{y\neq \mb{0}} \frac{\|yz^{\frac{1}{2}}\hat{w}\|_2^2}{\|y\|_2^2}.
\]
Denote $y'=yz^{\frac{1}{2}}$, it turns into
\[
\sigma_1^2=\sup_{y'\neq \mb{0}} \frac{\|y'\hat{w}\|_2^2}{\|y'z^{-\frac{1}{2}}\|_2^2}\leq \sup_{y'\neq \mb{0}} \frac{\|y'\hat{w}\|_2^2}{z_{max}^{-1}\|y'\|_2^2}\leq z_{max}.
\]
$z_{max}\doteq \max_{i} z_{ii}$. In the last inequality, we have used the fact that the largest singular value of $\hat{w}$ is 1.

2) First, note that $n_k/r_k$ is also the mean of the row sums of $\tilde{w}$, since the sum of row sums is equal to the sum of column sums, which is $n_k$. Thus, if the row sums are even, then $z_{max}=n_k/r_k$. Consequently $n_k/r_k=\sigma_1^2$. Conversely, if $n_k/r_k=\sigma_1^2$, then it means $x^*$ and $f$ are the largest singular vectors, since $\|x^*\tilde{w}\|_2^2/\|x^*\|_2^2=n_k/r_k$. In this case, again by the property of operator norm,
\[
\sigma_1^2=\frac{\|\tilde{w}f\|_2^2}{\|f\|_2^2}=\frac{\sum_i z_{ii}^2}{n_k}\geq \frac{(\sum_i z_{ii})^2/r_k}{n_k}=\frac{n_k}{r_k}.
\]
The equality holds, if and only if $z_{ii}$'s are even.
\end{proof}

With this proposition, we immediately have
\begin{proposition}
\[
\psi \leq z_{max}-n_k/r_k.
\]
$\psi=0$ if and only if the row sums are even, i.e., $z_{max}=n_k/r_k$.
\end{proposition}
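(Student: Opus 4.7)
My plan is to treat this as an immediate corollary of the preceding proposition, since the substantive spectral work has already been done there. Recall from the discussion just before the previous proposition that $\psi = \sigma_1^2 - n_k/r_k$, so the entire statement is a reformulation of the bounds on $\sigma_1^2$ after subtracting $n_k/r_k$ from both sides.

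For the inequality, I would simply invoke part 1 of the previous proposition, which gives $\sigma_1^2 \leq z_{max}$. Subtracting $n_k/r_k$ yields
\[
\psi = \sigma_1^2 - n_k/r_k \leq z_{max} - n_k/r_k,
\]
which is the stated bound. Note that $z_{max} - n_k/r_k \geq 0$ automatically, since $n_k/r_k$ is the mean of the row sums of $\tilde{w}$ (the row sums total $n_k$ and there are $r_k$ rows), so $z_{max}$ is at least the mean.

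For the equality characterization, I would use part 2 of the previous proposition: $\psi = 0$ iff $\sigma_1^2 = n_k/r_k$ iff the row sums of $\tilde{w}$ are even, which is equivalent to $z_{max} = n_k/r_k$ (since $z_{max}$ equals the mean exactly when all row sums are equal to the mean). No additional obstacle is anticipated, as the heavy lifting, namely identifying $F$ with the largest right singular vectors of the reweighted matrix $Z^{-1/2}\tilde{W}$ and then translating back via $y' = yz^{1/2}$, was carried out in the previous proposition. The only minor bookkeeping is to observe explicitly that the mean of the row sums equals $n_k/r_k$, which ensures both the nonnegativity of the bound and the tightness of the equality condition.
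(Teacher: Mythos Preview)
Your proposal is correct and matches the paper's approach exactly: the paper simply states ``With this proposition, we immediately have'' and presents the result as a direct corollary, just as you do by subtracting $n_k/r_k$ from the bound $\sigma_1^2 \le z_{max}$ and invoking part 2 for the equality case.
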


By this proposition, we finally arrive at Theorem~\ref{theo:F leading}.

\bibliographystyle{plain}
\bibliography{sparse}

\end{document}